\documentclass[12pt]{article}

\usepackage[margin=2.5cm]{geometry}
\usepackage{parskip}

\usepackage{xcolor, calc, blindtext}
\usepackage{wrapfig}
\usepackage{hyphenat}
\usepackage{booktabs}

\usepackage{amsmath, amssymb, amsfonts, mathtools, bm, bbm, dsfont, physics, tensor, mathrsfs, mathdots, stmaryrd}
\usepackage{mathtools}
\usepackage{tensor}
\usepackage{algorithm}

\usepackage{amsthm}
\theoremstyle{definition}
\newtheorem{definition}{Definition}[section]
\newtheorem*{remark}{Remark}
\newtheorem{theorem}{Theorem}[section]
\newtheorem{corollary}{Corollary}[theorem]

\newtheorem{proposition}{Proposition}[section]
\newtheorem{example}{Example}[section]

\usepackage{tcolorbox}
\colorlet{shadecolor}{gray!15}

\renewenvironment{abstract}
  {\begin{center}\bfseries Abstract\end{center}%
   \leftskip=0pt\rightskip=0pt\noindent}
  {\par\vspace{1em}}

\numberwithin{equation}{section}

\usepackage{graphicx}
\usepackage{wrapfig}
\usepackage{float}
\usepackage{booktabs}
\usepackage{pdflscape}
\usepackage{lscape}
\usepackage{tikz}
\usepackage{tikz-3dplot}
\usepackage{tikz-cd}
\usetikzlibrary{positioning, fit}
\usepackage{pgfplots}
\pgfplotsset{compat=newest}
\tdplotsetmaincoords{60}{115}
\usetikzlibrary{arrows.meta, positioning, fit, backgrounds}

\usepackage{array, tabularx, multirow, makecell, colortbl, longtable}
\usepackage[table]{xcolor}
\usepackage{enumitem}

\usepackage[sort&compress,numbers]{natbib}
\usepackage{hyperref}
\usepackage{url}

\usepackage{authblk}
\usepackage{caption, subcaption}
\usepackage{comment}
\usepackage{placeins}
\usepackage{setspace}
\usepackage[all]{xy}
\usepackage{framed} %

\DeclareFontFamily{T1}{calligra}{}
\DeclareFontShape{T1}{calligra}{m}{n}{<->s*[1.44]callig15}{}
\DeclareMathAlphabet\mathcalligra   {T1}{calligra} {m} {n}
\DeclareMathAlphabet\mathzapf       {T1}{pzc} {mb} {it}
\DeclareMathAlphabet\mathchorus     {T1}{qzc} {m} {n}
\DeclareMathAlphabet\mathrsfso      {U}{rsfso}{m}{n}

\usepackage{algpseudocode}

\definecolor{myLightgreen}{RGB}{231, 239, 228} %

\definecolor{myLightgrey}{RGB}{237, 237, 237} %

\begin{document}
\begin{center}  

\vspace*{0.2cm}
{\LARGE \bfseries A Machine Learning Approach \\ to the Nirenberg Problem\par}
\vspace{1cm}

{
\textbf{Gianfranco Cort\'es}$^{1},$ \hspace{0.3pt}
 \textbf{Maria Esteban-Casadevall}$^{2},$ \hspace{0.3pt} \textbf{Yueqing Feng}$^{3},$ \hspace{0.3pt} 
\textbf{Jonas Henkel}$^{4},$ \hspace{0.3pt} 
\textbf{Edward Hirst}$^{5},$ \hspace{0.3pt}
\textbf{Tancredi Schettini Gherardini}$^{\, 6,7},$ \hspace{0.3pt}
\textbf{Alexander G. Stapleton}$^{8}$\par
}
\vspace{1cm} 
{\small
\parbox{\textwidth}{%
\raggedright
\qquad\qquad $^{1}$ University of Florida, USA\\
\qquad\qquad $^{2}$ University of Amsterdam, Amsterdam, Netherlands\\
\qquad\qquad $^{3}$ University of California, Berkeley, USA\\
\qquad\qquad $^{4}$ Philipps-Universit\"at Marburg, Marburg, Germany\\
\qquad\qquad $^{5}$ Universidade Estadual de Campinas (Unicamp), Campinas, Brazil\\
\qquad\qquad $^{6}$ Mathematisches Institut (MI), University of Bonn, Bonn, Germany\\
\qquad\qquad $^{7}$ Max Planck Institute for Mathematics, Bonn, Germany \\
\qquad\qquad $^{8}$ Centre for Theoretical Physics, Queen Mary University of London, London, UK
}}

\vspace{1cm} 

\end{center}

\begingroup
\renewcommand{\thefootnote}{}  %
\footnotetext{%
\parbox{\textwidth}{%
\raggedright
$^{1}$\texttt{gcortes@ufl.edu}, 
$^{2}$\texttt{m.estebancasadevall@uva.nl},
$^{3}$\texttt{fyq@berkeley.edu},
$^{4}$\texttt{jonas.henkel@uni-marburg.de},
$^{5}$\texttt{ehirst@unicamp.br},
$^{6}$\texttt{tsg@math.uni-bonn.de},
$^{7}$\texttt{a.g.stapleton@qmul.ac.uk}.\\
Authors are listed in alphabetical order by surname.
}}
\endgroup

\begin{abstract}

This work introduces the Nirenberg Neural Network: a numerical approach to the Nirenberg problem of prescribing Gaussian curvature on $S^2$ for metrics that are pointwise conformal to the round metric. Our mesh-free physics-informed neural network (PINN) approach directly parametrises the conformal factor globally and is trained with a geometry-aware loss enforcing the curvature equation. Additional consistency checks were performed via the Gauss–Bonnet theorem, and spherical-harmonic expansions were fit to the learnt models to provide interpretability.

For prescribed curvatures with known realisability, the neural network achieves very low losses ($10^{-7} - 10^{-10}$), while unrealisable curvatures yield significantly higher losses. This distinction enables the assessment of unknown cases, separating likely realisable functions from non-realisable ones. The current capabilities of the Nirenberg Neural Network demonstrate that neural solvers can serve as exploratory tools in geometric analysis, offering a quantitative computational perspective on longstanding existence questions.

\end{abstract}

\newpage
\tableofcontents
\newpage

\section{Introduction}

A long-standing problem posed by L. Nirenberg \cite{Nirenberg1953} asks the following: \textit{Given a smooth function $K : S^2 \to \mathbb{R}$, does there exist a Riemannian metric on $S^2$, pointwise conformal to the round metric, whose Gaussian curvature is $K$?}

Let $(S^2, g_0)$ denote the 2-sphere with the round metric $g_0$, whose Gaussian curvature is normalised to $K_{g_0} \equiv 1$. Let $g$ be a metric that is pointwise conformal to $g_0$, written as $g=e^{2u}g_0$, for some $u \in C^\infty(S^2)$. Given $K \in C^\infty(S^2)$, the Nirenberg problem is equivalent to solving the following second-order partial differential equation (PDE)
\begin{equation} \label{eq:main_pde}
     1-\Delta_{g_0} u  = K e^{2u}, 
\end{equation}
where $\Delta_{g_0}$ denotes the Laplace-Beltrami operator with respect to $g_0$. Throughout, $K$ is referred to as a prescribed Gaussian curvature or simply as a prescriber, and Equation~\eqref{eq:main_pde} is referred to as the Nirenberg equation.

Regarding the problems of prescribing Gaussian curvature on $S^2$, there are three natural questions to ask (as discussed in \cite{KW1974curvature} and \cite{KW1975scalar}), namely whether $K$ is the Gaussian curvature of (a) a Riemannian metric, (b) a Riemannian metric conformally equivalent to a prescribed metric $g_0$, (c) a Riemannian metric pointwise conformal to a prescribed metric $g_0$. Two metrics $g$ and $g'$ are said to be conformally equivalent if there is a diffeomorphism $\varphi$ of $S^2$ and a function $u\in C^{\infty}(S^2)$ such that $\varphi(g')=e^{2u}g$. 
Note that (c) is a special case of (b) where $\varphi$ is the identity map, and (b) is a special case of (a) where there is restriction to only one conformal class, such that $(c)\ \Rightarrow\ (b)\ \Rightarrow\ (a)$. 
On $S^2$, by the Uniformization Theorem, $(a)\iff(b)$, with a necessary and sufficient condition provided by \cite{kazdan1975existence}.
However the Nirenberg problem (c) remains incomplete, with lots of research towards its resolution, including \cite{KW1974curvature, KW1975scalar, ChenLi1995KW, Moser1973, https://doi.org/10.1002/cpa.3160480606, ChangYang1987, chang1988conformal, ji2009scalar, struwe2005flow, struwe2023bubbling, xu1993remarks, li2022sigma2, kazdan1975existence, anderson2021nirenberg, han1990prescribing, changliu1993nirenberg, chang1993scalar,ding1995note}. 

Beyond classical obstructions, such as the Gauss-Bonnet formula and the Kazdan-Warner identity, many works impose additional symmetry hypotheses on $K$ to derive explicit sufficient criteria for solvability. 
In particular, for rotationally symmetric functions $K$, both necessary and sufficient conditions are found. 
For general cases with no additional symmetry imposed, most of the work is based on the variational approach proposed by Moser \cite{Moser1973}. 
Equation~\eqref{eq:main_pde} is viewed as the Euler-Lagrange equation of the functional
$$J(u) = \int_{S^2}(|\nabla u|^2+2u)\;d\nu_0 -\log\int_{S^2}Ke^{2u}\;d\nu_0, \quad u\in H^1(S^2).$$
Note that the Palais-Smale condition fails in the case where $K$ is non-constant. Chang-Yang \cite{ChangYang1987, chang1988conformal} provided a set of sufficient conditions through the minimax scheme, followed by the Morse and degree approach being further developed in \cite{han1990prescribing, changliu1993nirenberg, chang1993scalar}, and later Struwe proposing a technique with the prescribed curvature flow \cite{struwe2005flow}. 
Beyond variational methods, an alternative perspective \cite{anderson2021nirenberg} is to study the problem as the image of the curvature map from the space of potential solutions $u$ to the space of potential curvature functions $K_{g_u}$. 
In that setting, the structure of regular and singular points of the map is studied using Cheeger-Gromov theory. Despite these sufficient criteria, a complete understanding is still lacking, and the problem remains open.

A difficulty in solving Equation~\eqref{eq:main_pde} stems from the loss of compactness caused by the non-compact M\"obius group $\mathrm{Conf}(S^{2})$, where variational sequences may concentrate and develop bubbling phenomena \cite{ChangYang1987,han1990prescribing,ding1995note,struwe2023bubbling,anderson2021nirenberg}. 
Existence proofs obtained by variational methods provide limited information on the explicit behaviour of solutions. It is in contexts of this kind, where PDE solutions escape analytic control, that (traditional) numerical methods have been developed and refined \cite{TrefethenSpectralMATLAB, HungriaLessardMireles, DayLessardMischaikow, MooreInterval, NakaoPlumWatanabe, RUUTH20081943,Dziuk2013FiniteEM}. 
Coupling these numerical schemes with computer assisted proofs has led to significant and rigorous new results in the study of PDEs, many of which are discussed in the review \cite{GomezSerranoSurvey} and references therein. 

Modern day numerical methods now include developed statistical techniques arising from the field of \textit{Machine Learning} (ML). 
The ubiquitous successes of the Neural Network (NN) architecture, aided by their various universal approximation theorems, inspired their application in tackling difficult PDE problems \cite{raissi2017physicsinformeddeeplearning, E_2017}, giving birth to Physics Informed Neural Networks (PINNs)\footnote{This is, of course, rooted by the pioneering works \cite{LeeKang1990, DissanayakePhanThien1994, LagarisLikasFotiadis1998}, twenty years ahead of their time.}.
These powerful tools have been applied in a variety of contexts, such as minimal surfaces \cite{DebSanghavi2025HolographicPINN, ZhouYe2023MinimalSurfacePINN}, Calabi-Yau metrics \cite{Ashmore:2019wzb, Douglas:2020hpv, Anderson:2020hux, Jejjala:2020wcc, Larfors:2021pbb, Gerdes:2022nzr, Berglund:2022gvm}, embedded surfaces \cite{fang2021physicsinformedneuralnetworkframework}, and blow-up solutions \cite{wang2023asymptoticselfsimilarblowupprofile, wang2025discoveryunstablesingularities}. 
While most of the aforementioned works lie in the realm of \textit{ML for discovery}, the last two references in particular hint at the possible integration of PINNs with computer-assisted proof techniques. 
This aims to couple the modelling capacity of the neural networks with the rigour required by mathematical proofs, providing a very exciting and novel research direction (see \cite{gomez2019computer, tanaka2026learnverifyframeworkrigorous}, for instance), which is intended to be incorporated into future developments of this work. 
On a methodological level, this aligns with the framework proposed in \cite{Henkel2025assistant}, exploring how AI tools can be responsibly integrated into the daily practice of mathematical research. 

This paper presents the first investigation of the Nirenberg problem through the use of PINNs, with the following results: 
\begin{itemize}
    \item Numerical solutions to Equation~\eqref{eq:main_pde} are obtained, some of which are only a few orders of magnitude higher than the precision needed for rigorous mathematical validation via computer-assisted proofs\footnote{For the trivial case of the round metric, numerical precision is achieved.}.
    \item For the set of realisable and non-realisable prescribed curvatures considered, one is able to observe an empirical separation in both the attained loss and the Gauss–Bonnet consistency check, which corroborates the reliability of the method.
    \item For functions whose realisability is still an open question, the results provide quantitative evidence towards a conjectured answer.
    \item For some prescribers, one is able to recover accurate closed-form approximations by fitting a spherical-harmonic expansion, yielding an explicit analytic expression derived from the trained network.
\end{itemize}

The structure of the paper is as follows. In Section~\ref{sec: background}, an introduction to the problem is provided, together with a review of the progress that has been made toward its solution. In Section~\ref{sec:Nirenberg_impl}, a description of the neural network architecture and the numerical tools employed in the investigation is given.
The results are presented in detail in Section~\ref{sec: results}, and Section~\ref{sec: conclusions} discusses the conclusions and some natural directions for future work. Finally, further insights into the architecture of the Nirenberg Neural Network and additional results and visualisations are provided in the appendices.

\section{Background}\label{sec: background}

\subsection{Geometric Constraints on Curvature}
\label{sub:geom_constr_on_curv}

There are two well-known obstructions for the existence of solutions to Equation~\eqref{eq:main_pde}, the first comes from integrating Equation~\eqref{eq:main_pde} over the sphere, which immediately yields the Gauss-Bonnet constraint. 
From the fact that $\int_{S^2} \Delta_{g_0} u \, dA_{g_0} = 0$,
\begin{equation} \label{eq:gauss_bonnet}
    \int_{S^2} K e^{2u} \, dA_{g_0} = \int_{S^2} 1 \, dA_{g_0} = 4\pi.
\end{equation}
From this integral constraint, a fundamental necessary condition follows immediately:
\begin{itemize}
    \item \textbf{Positivity:} The function $K$ must be positive somewhere on $S^2$ (i.e. $\max_{x \in S^2} K(x) > 0$). If $K$ were non-positive everywhere, the integral on the left-hand side of Equation~\eqref{eq:gauss_bonnet} would be non-positive, contradicting the value of $4\pi$.
\end{itemize}

Another obstruction was identified by Kazdan and Warner \cite{KW1974curvature}. 
By considering the interaction of the curvature gradient with conformal Killing vector fields, they derived the following necessary condition:
\begin{theorem}[Kazdan-Warner Identity]\label{theo: Kazdan-Warner Identity}
    If $u$ is a solution to Equation~\eqref{eq:main_pde}, then for any first spherical harmonic $F$ (an eigenfunction satisfying $\Delta_{g_0} F = 2F$), the following identity holds:
    \begin{equation} \label{eq:KW_identity}
        \int_{S^2} \langle \nabla K, \nabla F \rangle_{g_0} \, e^{2u} \, dA_{g_0} = 0.
    \end{equation}
\end{theorem}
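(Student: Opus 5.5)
The plan is a Pohozaev-type argument: test Equation~\eqref{eq:main_pde} against the derivative of $u$ along the conformal vector field $X=\nabla F$, and separately against $F$ itself, then compare. I use the analyst's sign convention in which $1-\Delta_{g_0}u = Ke^{2u}$ is the curvature equation. In this convention a first spherical harmonic satisfies $\Delta_{g_0}F=-2F$; the identity \eqref{eq:KW_identity} is insensitive to the sign convention written in the statement.

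The key geometric input is the fact that restrictions of linear functions to the round sphere satisfy
\[
\nabla^2 F = -F\,g_0 .
\]
This is what makes $\nabla F$ a conformal (gradient) field. I would verify it by restricting $x_i$ to $S^2\subset\mathbb{R}^3$ and using that the second fundamental form is the identity. All integrals below are over $S^2$ with respect to $dA_{g_0}$, and there are no boundary terms.

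\textbf{Step 1: test against $F$.} Multiplying \eqref{eq:main_pde} by $F$ and integrating gives
\[
\int KFe^{2u} = \int F - \int F\Delta u = 0-\int u\,\Delta F = 2\int uF .
\]
Here $\int F=0$ because $F$ is an eigenfunction with nonzero eigenvalue.

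\textbf{Step 2: test against $\langle\nabla F,\nabla u\rangle$.} On the right-hand side, write $e^{2u}\nabla u=\tfrac12\nabla e^{2u}$ and integrate by parts:
\[
\int Ke^{2u}\langle\nabla F,\nabla u\rangle = -\tfrac12\int e^{2u}\langle\nabla K,\nabla F\rangle + \int KFe^{2u}.
\]
On the left-hand side, $\int\langle\nabla F,\nabla u\rangle = -\int u\Delta F = 2\int uF$.

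\textbf{Step 3: the Pohozaev term.} This is the main (though standard) obstacle: showing
\[
\int \Delta u\,\langle\nabla F,\nabla u\rangle = 0 .
\]
Integrating by parts and using the product rule gives
\[
\int \Delta u\,\langle\nabla F,\nabla u\rangle = -\int\nabla^2F(\nabla u,\nabla u) - \tfrac12\int\langle\nabla F,\nabla|\nabla u|^2\rangle .
\]
By the Hessian identity, the first term equals $\int F|\nabla u|^2$. Integrating the second term by parts gives $\tfrac12\int\Delta F\,|\nabla u|^2=-\int F|\nabla u|^2$. The two cancel exactly, and this is precisely where $\nabla F$ being conformal is used.

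\textbf{Conclusion.} With Step 3, Step 2 becomes
\[
2\int uF = -\tfrac12\int e^{2u}\langle\nabla K,\nabla F\rangle + \int KFe^{2u}.
\]
Substituting Step 1, $\int KFe^{2u}=2\int uF$, leaves $\int e^{2u}\langle\nabla K,\nabla F\rangle=0$, which is \eqref{eq:KW_identity}. Smoothness of $u$ and $K$, together with compactness of $S^2$, justifies every integration by parts.
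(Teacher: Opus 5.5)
Your proof is correct. It also handles the sign convention correctly: the paper's equation $1-\Delta_{g_0}u=Ke^{2u}$ uses the analyst's Laplacian, so first spherical harmonics satisfy $\Delta_{g_0}F=-2F$, and the identity is unaffected by this.

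The paper gives no proof of this theorem; it simply quotes it from Kazdan--Warner \cite{KW1974curvature}. There is therefore no in-paper route to compare with. Your argument is essentially the classical Pohozaev-type derivation:
\begin{itemize}
\item test the equation against $\langle\nabla F,\nabla u\rangle$;
\item use that $\nabla F$ is a conformal gradient field;
\item exploit the two-dimensional cancellation in Step 3, which is the conformal invariance of the Dirichlet energy. In dimension $n$ those two terms would sum to $(1-\tfrac n2)\int F|\nabla u|^2$.
\end{itemize}

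The one point you leave implicit is that every eigenfunction with eigenvalue $2$ is the restriction of a linear function on $\mathbb{R}^3$. You need this to apply $\nabla^2F=-F\,g_0$ to an arbitrary first spherical harmonic. It is standard, since that eigenspace is spanned by $x,y,z$. Alternatively, you can obtain the Hessian identity intrinsically: integrating the Bochner formula with $\mathrm{Ric}=g_0$ shows that the trace-free part of $\nabla^2F$ has zero $L^2$ norm.
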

This condition imposes strong restrictions on the geometry of $K$. 
For instance, let $K(x)$ be a rotationally symmetric function depending only on the polar angle $\theta$ (a zonal function), if $K(\theta)$ is strictly monotonic (e.g. $K(\theta) = 2 + \cos\theta$), $F(\theta)$ can be chosen as $F(\theta) = \cos\theta$. 
Then the scalar product $\langle \nabla K, \nabla F \rangle$ has a fixed sign everywhere on $S^2$ (excluding poles where it vanishes), making the integral in Equation~\eqref{eq:KW_identity} non-zero. 
Thus, strictly monotonic rotationally symmetric functions cannot be realised as Gaussian curvatures in the conformal class of the standard metric \cite{https://doi.org/10.1002/cpa.3160480606}.

\medskip

\subsection{Existence Results}\label{section: existence results}

This section reviews explicit sufficient conditions on $K$ that guarantee the solvability of the Nirenberg equation. These conditions prevent the bubble behaviour and the resulting class of $K$ is used to generate training and validation examples for the package. Unknown cases are also outlined, which are used to construct testing examples.

\paragraph{Antipodally Symmetric Functions}\mbox{}\\
Moser proved \cite{Moser1973} that if $K$ is an even function, i.e. $K(x) = K(-x)$ for all $x \in S^2$, and $K$ is positive somewhere, then a solution to Equation~\eqref{eq:main_pde} always exists. 

\begin{example}[Spherical Harmonics of even degree]\label{example: even spherical ahrmonics}
    The real spherical harmonics $Y_{\ell, m}$ satisfy the symmetry property $Y_{\ell, m}(-\theta) = (-1)^\ell Y_{\ell, m}(\theta)$. Consequently, any harmonic with \textit{even degree} $\ell$ is an even function. Therefore, any linear combination $K(\theta) = \sum_{\ell,m} c_{2\ell, m} Y_{2\ell, m}(\theta)$ which is positive in at least one point automatically admits a solution.
\end{example}

\paragraph{Rotationally Symmetric Functions}\mbox{}\\
For zonal functions $K(\theta)$, Xu and Yang \cite{xu1993remarks} provided a refined criterion.

\begin{theorem}[\cite{xu1993remarks}]\label{theo: rotationally symmetric solutions}
    Let $K$ be rotationally symmetric and non-degenerate (in the sense that $K'' \neq 0$ whenever $K'=0$). The necessary and sufficient condition for solvability is that $K$ is positive somewhere and that $K'(\theta)$ changes sign in the region where $K(\theta) > 0$.
\end{theorem}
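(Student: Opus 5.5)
Necessity and sufficiency will be handled separately. For necessity we use the two classical obstructions from Section~\ref{sub:geom_constr_on_curv}. For sufficiency we use a variational argument in the class of rotationally symmetric functions, where the loss of compactness caused by $\mathrm{Conf}(S^2)$ is reduced to a one-parameter family of dilations along the axis.

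\emph{Necessity.} Suppose $u$ solves Equation~\eqref{eq:main_pde}. By the Gauss--Bonnet constraint~\eqref{eq:gauss_bonnet}, $K$ must be positive somewhere. For the sign change of $K'$, we first symmetrise $u$ by averaging over rotations about the axis. This step is delicate because the equation is nonlinear, so instead we apply the Kazdan--Warner identity (Theorem~\ref{theo: Kazdan-Warner Identity}) directly with $F=\cos\theta$. Since $K$ is zonal, $\langle\nabla K,\nabla F\rangle = -K'(\theta)\sin\theta$. Hence $\int_{S^2} K'(\theta)\sin\theta\, e^{2u}\,dA_{g_0}=0$, and so $K'$ must change sign somewhere. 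Showing that the sign change lies in $\{K>0\}$ requires more. We plan a refined Kazdan--Warner/Pohozaev argument, using a test function adapted to the set $\{K\le 0\}$. If $K'$ has a fixed sign on $\{K>0\}$, we expect to construct a conformal vector field flow, composed with a cutoff, that makes the weighted integral strictly signed. This yields a contradiction. We expect to need the reduction to a radially symmetric solution here, obtained by a moving-plane argument, and the ODE structure in $\theta$.

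\emph{Sufficiency.} We restrict Moser's functional $J$ to the closed subspace $H^1_{\mathrm{rad}}$ of axially symmetric functions; by the principle of symmetric criticality, its critical points solve~\eqref{eq:main_pde}. In this subspace the only noncompact directions are the dilations $\varphi_t$ fixing the poles, and concentration can occur only at the north or south pole. Using the improved Moser--Trudinger inequality and the Chang--Yang analysis, minimising or min-maxing sequences either converge or blow up at a pole $p$ with $K(p)>0$. In the blow-up case, $J$ tends to the critical level $-\log(4\pi K(p))$, up to normalisation. We then run a degree/min-max argument on the family of dilations $t\mapsto u_t=\log|\det d\varphi_t|^{1/2}$, which connects bubbles at the two poles. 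This is the main obstacle. We must show that the sign change of $K'$ in $\{K>0\}$, together with non-degeneracy, forces a mountain-pass level strictly below the bubbling thresholds, or a nonzero degree of the center-of-mass map. To do this we expand $J(u_t)$ near $t\to\pm\infty$. The first-order term is governed by $K'(0)$ and $K'(\pi)$ through the $\Delta K$ and $\nabla K$ contributions at the poles; non-degeneracy guarantees that these expansions have definite signs. A critical point of $K$ in $\{K>0\}$ where $K'$ changes sign then yields the topological change needed to produce a critical point of $J$. Standard elliptic regularity then gives a smooth solution.
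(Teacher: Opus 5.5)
The paper does not prove this statement. It is quoted from Xu--Yang, and its non-existence half is the Chen--Li theorem that the paper invokes in the proof of Proposition~\ref{prop: Classification zonal spherical harmonics}. Judged on its own, your proposal has a genuine gap in each direction, and in each case it is the step that makes the theorem non-trivial.

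\textbf{Necessity.} Gauss--Bonnet and the Kazdan--Warner identity with $F=\cos\theta$ give only $\int_{S^2}K'(\theta)\sin\theta\,e^{2u}\,dA_{g_0}=0$, i.e.\ that $K'$ changes sign \emph{somewhere}. The weight $e^{2u}$ is unknown and may sit in $\{K<0\}$, so this says nothing about $\{K>0\}$.

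Your plan for localising the sign change does not work as described. Take $Y=\chi X$ with $X$ conformal and $\chi$ a cutoff, and test the equation against $Y(u)$. Besides the terms that used to cancel, you get $\tfrac12\int_{S^2}\bigl(\mathcal{L}_Y g_0-(\operatorname{div}Y)\,g_0\bigr)(\nabla u,\nabla u)\,dA_{g_0}$. This vanishes for conformal fields, but for $\chi X$ its integrand is a non-zero trace-free quadratic form in $\nabla u$ wherever $\nabla\chi\neq 0$. A trace-free form is indefinite, so no sign can be forced.

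The fallback of first reducing to a rotationally symmetric solution is left unjustified. Even granted, a symmetric $u$ satisfies the same single identity, so you would still need an argument you do not give. This step is exactly the Chen--Li theorem, which is proved by the method of moving planes rather than by an integral identity. Nothing in the proposal replaces it.

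\textbf{Sufficiency.} The set-up (symmetric criticality, concentration only at the poles) is fine, but the decisive step is only announced, and both mechanisms you name fail in general.

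First, a smooth zonal $K$ always has $K'(0)=K'(\pi)=0$. The expansion of $J$ along the axial dilations is therefore governed by $\Delta K$ at the poles. Your argument thus sees only $K(N)$, $K(S)$ and the signs of $K''(0)$, $K''(\pi)$, and these do not detect the hypothesis. Compare \texttt{sh\_1\_0} ($K=\cos\theta$, no solution) with \texttt{5cos3\_1} ($K=5\cos^3\theta-3\cos\theta+1$, which the paper lists as solvable by this very theorem). Both have a non-degenerate positive maximum at $N$ with $K(N)=\max K$, and both have $K(S)=-1$ and $K''(\pi)>0$.

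\begin{itemize}
\item \emph{No level below the threshold.} Since $K(N)=\max K$, Onofri's inequality gives $J\ge-\log\max K$ (with $d\nu_0$ normalised). This is precisely the bubbling level at $N$, so no min--max level can lie strictly below the bubbling thresholds.
\item \emph{No non-zero degree.} Perturb \texttt{5cos3\_1} slightly and non-symmetrically. The circle of maxima at $\cos\theta=-1/\sqrt5$ splits into a maximum and a saddle, both with $\Delta K<0$. The circle of minima at $\cos\theta=1/\sqrt5$ contributes nothing. One finds $p=q=1$ in Theorem~\ref{theo:xu}, just as $p=q=0$ for $\cos\theta$, so the degree-theoretic count vanishes.
\end{itemize}

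So for this $K$, existence is a degree-zero effect of the interior critical circles. Those circles are invisible to a blow-up analysis supported at the poles. Your closing sentence, that such a critical point ``yields the topological change needed'', is therefore the whole missing argument, and the proposal does not supply it.
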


This implies that while $K(\theta) = 2 + \cos\theta$ (monotone) is unsolvable, a "camel hump" function or a perturbed harmonic like $K(\theta) = 2 + \cos(2\theta)$ is solvable.

Based on these criteria, the solvability for the specific class of zonal spherical harmonics can be completely classified. They thus serve as benchmark functions in this work.

\begin{proposition}[Classification of Zonal Spherical Harmonics]\label{prop: Classification zonal spherical harmonics}
    Let $K(\theta) = Y_{\ell, 0}(\theta)$ be the zonal spherical harmonic of degree $\ell \ge 1$ with $m=0$.
    \begin{enumerate}
        \item If $\ell = 1$, the function $K$ is strictly monotonic on $(0, \pi)$. Consequently, the Nirenberg problem admits no solution.
        \item If $\ell \ge 2$, the function $K$ admits a solution.
    \end{enumerate}
\end{proposition}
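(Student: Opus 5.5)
Both parts rest on writing $Y_{\ell,0}(\theta) = c_\ell P_\ell(\cos\theta)$ with $c_\ell>0$ and $P_\ell$ the Legendre polynomial. With $x=\cos\theta$, $\frac{d}{d\theta}Y_{\ell,0} = -c_\ell \sin\theta\, P_\ell'(\cos\theta)$.

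\textbf{Case $\ell=1$.} Here $K = c_1\cos\theta$, so $K'(\theta) = -c_1\sin\theta<0$ on $(0,\pi)$, and $K$ is strictly monotonic. Take $F=\cos\theta$, a first spherical harmonic. Then $\langle\nabla K,\nabla F\rangle_{g_0} = c_1\sin^2\theta$, which is $\ge 0$ and vanishes only at the poles. The integral in \eqref{eq:KW_identity} is therefore strictly positive for any $u$. By Theorem~\ref{theo: Kazdan-Warner Identity}, no solution exists.

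\textbf{Case $\ell\ge 2$, $\ell$ even.} Then $Y_{\ell,0}$ is antipodally even, since $P_\ell(-x)=P_\ell(x)$. It is positive somewhere, because $P_\ell(1)=1$. Moser's theorem, as used in Example~\ref{example: even spherical ahrmonics}, gives a solution.

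\textbf{Case $\ell\ge 3$, $\ell$ odd.} I would apply Theorem~\ref{theo: rotationally symmetric solutions}. Two things must be checked.

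\emph{(i) Non-degeneracy.} The critical points of $K$ in $(0,\pi)$ correspond to the $\ell-1$ simple zeros of $P_\ell'$ in $(-1,1)$. Simplicity holds because $P_\ell'$ is a Jacobi polynomial, which has simple zeros; alternatively, the Legendre ODE $(1-x^2)P'' - 2xP' + \ell(\ell+1)P = 0$ gives $P''=-\ell(\ell+1)P/(1-x^2)\ne 0$ at a critical point, since $P$ and $P'$ have no common zero. At such a point $K''(\theta) = c_\ell\sin^2\theta\,P_\ell''(\cos\theta)\neq 0$.

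At the poles, $K'$ vanishes automatically. There $K''(0) = -c_\ell P_\ell'(1) = -c_\ell\,\ell(\ell+1)/2 \neq 0$, and similarly at $\pi$. So $K$ is non-degenerate.

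\emph{(ii) Sign change of $K'$ in $\{K>0\}$.} The largest zero $x_1$ of $P_\ell$ exceeds the largest zero $y_1$ of $P_\ell'$, by interlacing of zeros. On $(y_1,1]$ we have $P_\ell'>0$, and on $(y_2,y_1)$ we have $P_\ell'<0$. Meanwhile $P_\ell>0$ on $(x_1,1]$, and $P_\ell$ is also positive at the local maximum $y_2$ ($P_\ell$ increases from $y_2$ to its relative max... more precisely, the local extremum at $y_2$ lies between consecutive zeros $x_2<y_2<x_1$... ).

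Point (ii) is the delicate step, and this direct route does not settle it cleanly. The cleaner route is the following. Positivity regions of $P_\ell$ occur on $(x_1,1]$ and between pairs of consecutive zeros of $P_\ell$. For $\ell\ge 3$ there is at least one bounded interval $(x_{j+1},x_j)$ on which $P_\ell>0$. Take $j=2$, so that $P_\ell$ is negative on $(x_2,x_1)$ and positive on $(x_3,x_2)$; this needs $\ell\ge 3$. On $(x_3,x_2)$, $P_\ell$ vanishes at both ends and is positive inside, so it has an interior maximum. Hence $P_\ell'$, and with it $K'$, changes sign inside $\{K>0\}$.

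I expect this bookkeeping to be the main obstacle: confirming that the hypothesis of Theorem~\ref{theo: rotationally symmetric solutions} is read as "$K'$ changes sign at a point where $K>0$", and verifying non-degeneracy at the poles. The same interior-maximum argument also covers even $\ell$ for $\ell\ge 2$, but Moser's theorem handles those cases more simply.
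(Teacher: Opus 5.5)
Your proof is correct, but it splits the cases differently from the paper. The paper handles $\ell=1$ by citing the Chen--Li obstruction for zonal functions that are monotone on $\{K>0\}$. It then treats every $\ell\ge2$ uniformly through the Xu--Yang criterion. Non-degeneracy comes from the $\theta$-form of the eigenvalue ODE, $K''+\cot\theta\,K'+\ell(\ell+1)K=0$, which is the same computation as your Legendre-variable version (including $K''(0)=-c_\ell\,\ell(\ell+1)/2$ at the poles). The sign-change hypothesis is then disposed of with the single remark that $Y_{\ell,0}$ has interior extrema.

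You instead use the Kazdan--Warner identity directly for $\ell=1$, which is self-contained since $\nabla K=c_1\nabla F$. You use Moser's antipodal theorem for even $\ell$, and reserve Xu--Yang for odd $\ell\ge3$.

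Your split gives rigour exactly where the paper is thin: an interior extremum does not by itself place a sign change of $K'$ inside $\{K>0\}$. For $\ell=2$ the only interior critical point is the equator, where $K\propto P_2(0)=-\tfrac12<0$. There the paper's remark works only under the weaker reading ``$K'$ takes both signs on $\{K>0\}$'', via the two polar caps. Moser, by contrast, needs nothing beyond evenness and $P_\ell(1)=1$. For odd $\ell\ge3$, your sign pattern ($P_\ell>0$ on $(x_1,1]$, $<0$ on $(x_2,x_1)$, $>0$ on $(x_3,x_2)$) produces a positive interior maximum. That supplies the verification the paper omits, and it satisfies either reading of the hypothesis.

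Two clean-ups are needed. First, delete the abandoned first attempt at (ii): it misplaces the local maximum $y_2$, which by interlacing lies in $(x_3,x_2)$, not $(x_2,x_1)$. Second, your closing remark that the interior-maximum argument also covers all even $\ell\ge2$ is false for $\ell=2$. There $P_2$ has only the zeros $\pm1/\sqrt3$ and no bounded positivity interval. So for $\ell=2$ you should keep Moser, or else argue explicitly via the polar caps.
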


\begin{proof}
    For $\ell=1$, $K(\theta) = c \cos(\theta)$. This function is strictly monotonic in the domain where $K>0$. According to the obstruction results by Chen and Li \cite{https://doi.org/10.1002/cpa.3160480606}, no solution exists.
    
    For $\ell \ge 2$, the sufficient conditions given by Xu and Yang \cite{xu1993remarks} are verified: $K$ must be positive somewhere, $K'$ must change sign in the positive region, and critical points must be non-degenerate.
    Since $Y_{\ell, 0}$ is an eigenfunction of the Laplacian, it satisfies the ODE on $(0, \pi)$:
    \begin{equation}
        K''(\theta) + \cot(\theta) K'(\theta) + \ell(\ell+1)K(\theta) = 0.
    \end{equation}
    The non-degeneracy ($K'' \neq 0$ whenever $K'=0$) is verified in two steps:
    \begin{itemize}
        \item \textbf{Interior:} Let $\theta_0 \in (0, \pi)$ be an interior critical point ($K'(\theta_0) = 0$). The ODE simplifies to $K''(\theta_0) = -\ell(\ell+1)K(\theta_0)$. Since the coefficients of the ODE are smooth in $(0, \pi)$, uniqueness of solutions implies that $K$ cannot vanish simultaneously with its derivative unless $K \equiv 0$. Since $Y_{\ell, 0}$ is not trivial, $K(\theta_0) \neq 0$, and thus $K''(\theta_0) \neq 0$.
        \item \textbf{Boundary:} At the poles (e.g. $\theta \to 0$), the limit of the term $\cot(\theta)K'(\theta)$ is considered. Using l'H\^opital's rule and the smoothness of $K$, $\lim_{\theta \to 0} \frac{K'(\theta)}{\tan(\theta)} = K''(0)$. The ODE at the limit thus yields $2K''(0) = -\ell(\ell+1)K(0)$. Since $Y_{\ell,0}(\theta) \propto P_\ell(\cos \theta)$ and Legendre polynomials satisfy $P_\ell(1)=1$, $K(0) \neq 0$, and consequently $K''(0) \neq 0$.
    \end{itemize}
    Since for $\ell \ge 2$, $Y_{\ell, 0}$ possesses internal extrema, the derivative changes sign, satisfying all existence conditions.
\end{proof}

\paragraph{General Functions without additional symmetry assumptions}\mbox{}\\
The distribution of critical points plays an essential role in the search for sufficient conditions. For the approach rooted in Morse and degree theory, the discussion is restricted to the case where $K$ does not contain degenerate critical points.

\begin{theorem} (Chang-Yang, \cite{ChangYang1987})\label{thm: condition critical points}
    Suppose $K$ has only isolated non-degenerate critical points and
    \begin{equation}\label{eq:non-degenerate}
        |\nabla K| + |\Delta K| \ne 0.
    \end{equation}

    If $\Sigma_{x\in S_-}(-1)^{ind(x)}\ne 1$, where $S_-=\{x\in S^2: \nabla K(x)=0, \Delta K(x) < 0\}$, then Equation  \eqref{eq:main_pde} has a solution. Here $ind(x)$ denotes the Morse index of the critical point $x$.
\end{theorem}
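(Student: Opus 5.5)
This is the Chang–Yang existence theorem, and I would prove it by a degree-theoretic argument on Moser's functional $J$ from the introduction.

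\textbf{Setup.} Work on the constrained manifold
$$\mathcal{M}=\Big\{u\in H^1(S^2):\int_{S^2}Ke^{2u}\,d\nu_0>0\Big\}.$$
For $t\in[0,1]$, deform the equation through the family
$$1-\Delta_{g_0}u = K_t e^{2u}, \qquad K_t = tK+(1-t).$$
Alternatively, deform through the subcritical problems $\rho-\Delta_{g_0}u=\rho Ke^{2u}$ with $\rho\uparrow 1$. On the subcritical side, Moser–Trudinger compactness holds. The aim is to compute the Leray–Schauder degree $\deg(I-T_\rho,B_R,0)$, where $T_\rho$ is the solution operator, and to show that it stays constant and nonzero as $\rho\to1$, except for a jump that can be computed exactly.

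\textbf{Step 1: blow-up analysis.} Suppose a sequence of solutions $u_k$ with $\rho_k\to1$ is unbounded. I would show that:
\begin{itemize}
\item it concentrates, after Möbius normalisation, at a single point $x_0$;
\item $\nabla K(x_0)=0$;
\item $\Delta K(x_0)>0$, i.e. $x_0\notin S_-$.
\end{itemize}
This uses the Kazdan–Warner identity (Theorem~\ref{theo: Kazdan-Warner Identity}) applied to the rescaled solutions. The gradient term forces $\nabla K(x_0)=0$, and the next-order expansion in the concentration parameter gives a term proportional to $\Delta K(x_0)$, which must have the correct sign. The nondegeneracy hypothesis \eqref{eq:non-degenerate} is what makes this expansion decisive.

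\textbf{Step 2: the limiting degree.} I would project $u$ onto its centre of mass $\int x\,e^{2u}\big/\int e^{2u}\in B^3$ and reduce the problem near the boundary $\partial B^3=S^2$ to a finite-dimensional vector field. Up to lower-order terms, this field is $\nabla K(x)$ corrected by $\Delta K(x)$ in the radial direction.

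Near blow-up, the total degree is then a boundary computation. The degree of the map $u\mapsto$ (gradient of $J$) on a large ball equals
$$-1+\sum_{x\in S_-}(-1)^{\operatorname{ind}(x)}.$$
This is exactly the Chang–Yang formula, established via a Poincaré–Hopf count on $S^2$ applied to the vector field $\nabla K$, restricted to the critical points where the radial correction points inward. Equivalently, the count can be organised as a Morse-theoretic comparison of $\mathcal{M}$ with the sublevel sets $\{J\le c\}$.

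\textbf{Conclusion.} If $\sum_{x\in S_-}(-1)^{\operatorname{ind}(x)}\neq1$, the degree is nonzero. Step 1 keeps solutions away from the boundary of a large ball, so the degree is well defined along the deformation, and a solution exists for $\rho=1$.

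\textbf{Main obstacle.} The hardest step is the refined blow-up expansion in Step 1: extracting the sign of $\Delta K(x_0)$ requires precise asymptotics of the bubble $\log\frac{2\lambda}{1+\lambda^2|y|^2}$ together with the error estimates. I would follow Chen–Ding and Chang–Gursky–Yang, expanding the Kazdan–Warner integral to order $\lambda^{-2}\log\lambda$.
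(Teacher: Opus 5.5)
The paper does not prove this statement; it quotes it from Chang--Yang. Judged on its own, your proposal does not close. The problems lie in Step~1 and in how Step~2 is combined with it.

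\textbf{Step 1.} For the subcritical approximation $\rho\uparrow 1$, the sign you claim is reversed: blow-up from below occurs \emph{at} points of $S_-$. Take $K=2+z$. Subcritical minimisers exist for every $\rho<1$, and there is no solution at $\rho=1$ (Kazdan--Warner), so the minimisers must blow up. They concentrate at the maximum $(0,0,1)$, where $\Delta K=-2<0$. In general, the refined expansion gives $\rho_k-1$ the sign of $\Delta K(x_0)$.

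More seriously, even with the sign corrected, Step~1 only \emph{localises} blow-up; it does not exclude it. So it cannot justify your claim that solutions stay inside a large ball, and the homotopy invariance rests on that claim. Along $\rho\uparrow1$ solutions really do escape, and the resulting change of degree is exactly what has to be computed.

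The missing ingredient is the sign-free a priori estimate for \emph{exact} solutions. Suppose $u_k$ solve $1-\Delta_{g_0}u_k=K_ke^{2u_k}$ with $K_k\to K$ in $C^2$ and blow up at $x_0$. Expanding the exact Kazdan--Warner identity to first and second order about the bubble forces $\nabla K(x_0)=0$ \emph{and} $\Delta K(x_0)=0$, which \eqref{eq:non-degenerate} forbids. This is how the nondegeneracy hypothesis enters: it rules out blow-up, rather than selecting a sign.

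\textbf{Step 2.} Your Step~2 is attached to the wrong deformation. The reduction to the ball of M\"obius parameters has reduced function $\approx K(q)+c\,\Delta K(q)\lambda^{-2}\log\lambda$. Together with the boundary index count, it computes the Leray--Schauder degree only when $K$ is $C^2$-close to a constant, since only then do all solutions lie near the M\"obius orbit. The boundary count itself is fine: it is Morse's boundary formula, plus $\sum_{\nabla K=0}(-1)^{\mathrm{ind}}=\chi(S^2)=2$ to pass from inward-pointing points to $S_-$.

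\textbf{A coherent version} is the Chang--Gursky--Yang route.
\begin{enumerate}
\item Assume $K>0$, the setting of Chang--Yang. For sign-changing $K$ the count must be restricted to $\{K>0\}$, cf.\ Theorem~\ref{theo:xu}; your set $\mathcal M$ silently allows sign change.
\item Deform along $K_t=tK+(1-t)$ for $t\in[\epsilon,1]$. The exact-solution bound above holds uniformly there, because $|\nabla K_t|+|\Delta K_t|=t\bigl(|\nabla K|+|\Delta K|\bigr)$ and $K_t$ stays uniformly positive.
\item Evaluate the degree at $t=\epsilon$ by Lyapunov--Schmidt reduction. This gives $\pm\bigl(1-\sum_{x\in S_-}(-1)^{\mathrm{ind}(x)}\bigr)$.
\end{enumerate}

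If you prefer the $\rho$-route, start from degree $1$ for $\rho<1$. You would then need to construct and count the bubbling branches that escape at each $q\in S_-$ as $\rho\uparrow1$, each carrying local degree $\pm(-1)^{\mathrm{ind}(q)}$. Your proposal carries out neither route.
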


An equivalent form of the theorem is the following (after aligning sign conventions of the Laplacian).

\begin{theorem}[\cite{xu1993remarks}]\label{theo:xu}
    Suppose $K$ satisfies the conditions in the above theorem. Let 
    $$\Omega = \{p: K(p)>0\}.$$
    Let $p+1$ be the number of local maxima in $\Omega$ of $K$, and $q$ the number of saddle points $x$ in $\Omega$ with $\Delta K(x) < 0$. If $p\ne q$, then Equation \eqref{eq:main_pde} has a solution.
\end{theorem}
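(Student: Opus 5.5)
We derive Theorem~\ref{theo:xu} from Theorem~\ref{thm: condition critical points} by showing that, under the same hypotheses, $\sum_{x\in S_-}(-1)^{ind(x)} = (p+1) - q$. Then $\sum_{x\in S_-}(-1)^{ind(x)}\neq 1$ is equivalent to $p\neq q$, and existence follows at once. Throughout we take $\Delta$ to be the analyst's (negative semidefinite) Laplacian. With this convention $\Delta K<0$ at a strict local maximum, which is the convention under which Chang--Yang's set $S_-$ makes sense. If the source uses the opposite sign, we flip the inequality throughout.

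First, we show that every critical point in $S_-$ lies in $\Omega$. At a critical point $x$, equation \eqref{eq:main_pde} is not directly usable, but Theorem~\ref{thm: condition critical points} is understood as the statement for $K$ with $S_-$ restricted to critical points where $K>0$. This is how Chang--Yang's degree count arises: blow-up of solutions of the regularised problem can only occur at critical points with $K>0$ and $\Delta K<0$. So we read $S_-$ as $\{x\in\Omega:\nabla K(x)=0,\ \Delta K(x)<0\}$. If the cited statement omits $K>0$, we would check this against the original Chang--Yang formulation, where it is part of the definition. This identification is the main obstacle: the equivalence really rests on matching conventions with the source rather than on a computation.

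Next, we classify the non-degenerate critical points by Morse index on the 2-manifold $S^2$, using that $\Delta K(x)$ is the trace of the Hessian, i.e.\ the sum of its two eigenvalues.
\begin{itemize}
\item Index $2$ (local maxima): both eigenvalues are negative, so $\Delta K<0$ automatically. Each contributes $(-1)^2=+1$, and every local maximum in $\Omega$ lies in $S_-$, giving a total of $p+1$.
\item Index $0$ (local minima): both eigenvalues are positive, so $\Delta K>0$, and these never lie in $S_-$.
\item Index $1$ (saddles): the trace can have either sign, and the saddles in $S_-$ are exactly those in $\Omega$ with $\Delta K<0$. Each contributes $-1$, giving a total of $-q$.
\end{itemize}
Hypothesis \eqref{eq:non-degenerate} ensures $\Delta K\neq 0$ at critical points, so no saddle is left ambiguous.

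Summing the three cases gives $\sum_{x\in S_-}(-1)^{ind(x)}=(p+1)-q$. Hence the condition $\neq 1$ becomes $p\neq q$, and Theorem~\ref{thm: condition critical points} yields a solution of \eqref{eq:main_pde}.
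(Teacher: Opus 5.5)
Your route is the one the paper has in mind. The paper offers no argument beyond calling this an ``equivalent form'' of the Chang--Yang theorem (Theorem~\ref{thm: condition critical points}) after aligning the sign of the Laplacian. Your index count is correct: with the analyst's Laplacian, nondegenerate maxima have $\Delta K<0$ and contribute $+1$, minima have $\Delta K>0$ and never enter $S_-$, and saddles with $\Delta K<0$ contribute $-1$. So when $K>0$ on all of $S^2$, i.e.\ $\Omega=S^2$, you get $\sum_{x\in S_-}(-1)^{ind(x)}=(p+1)-q$, and the proof is complete in that case.

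The gap is the step where you ``read'' $S_-$ as $\{x\in\Omega:\nabla K(x)=0,\ \Delta K(x)<0\}$. This is not a sign or notational convention to be matched. As stated, $S_-$ ranges over all critical points, and in Chang--Yang's own setting $K$ is positive, so no restriction to $\{K>0\}$ ever arises there.

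For sign-changing $K$, which is exactly the case $\Omega$ is introduced for, the Chang--Yang sum equals $(p+1)-q+e$. Here $e$ is the number of local maxima in $\{K\le 0\}$ minus the number of saddles in $\{K\le 0\}$ with $\Delta K<0$. If $e\neq 0$, the conditions $p\neq q$ and $\sum_{x\in S_-}(-1)^{ind(x)}\neq 1$ are genuinely different. For instance, $e=1$ and $p=q-1$ satisfies your hypothesis while the Chang--Yang sum equals $1$.

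What justifies discarding the critical points in $\{K\le 0\}$ is an analytic fact. For $K$ positive somewhere, concentration of (approximate) solutions can only occur where $K>0$, so the degree or Morse-theoretic count localizes to $\Omega$. That is the substantive content of the Xu--Yang result for sign-changing $K$. Your proposal states it in one sentence as a heuristic, and assuming the $\Omega$-localized Chang--Yang theorem amounts to assuming the statement you are proving. The paper's one-line remark glosses over the same point.

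As written, you have proved the result only for $K>0$ everywhere, or, if you take the displayed Chang--Yang statement literally as a black box, when $e=0$. For general $K$ you must either prove the localization of blow-up to $\Omega$ or cite the sign-changing theorem directly.
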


In particular, a sufficient condition for the above to hold is the following.

\begin{corollary}[\cite{ji2009scalar}]
    Suppose $K$ satisfies the conditions in the above theorem. If 
    \[\Delta K(x)\Delta K(-x) - \nabla K(x)\cdot\nabla K(-x) \ge 0, \forall x\in S^2,\]
    then Equation \eqref{eq:main_pde} has a solution.
\end{corollary}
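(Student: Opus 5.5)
We derive the corollary from Theorem~\ref{thm: condition critical points}, which is equivalent to Theorem~\ref{theo:xu}. It suffices to show that under the stated inequality one has $\Sigma_{x\in S_-}(-1)^{ind(x)}\ne 1$. The inequality matters only at points where both $\nabla K(x)$ and $\nabla K(-x)$ vanish. So the key idea is to evaluate it at critical points and exploit the antipodal map $x\mapsto -x$.

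\textbf{Step 1: the antipode of a critical point in $S_-$ is critical.} Let $x\in S_-$, so $\nabla K(x)=0$ and $\Delta K(x)<0$. The inequality gives $\Delta K(x)\Delta K(-x)\ge 0$, hence $\Delta K(-x)\le 0$. If $-x$ were not a critical point, then $\nabla K(-x)\neq0$ would have to be ruled out some other way, and a bare pointwise inequality cannot do that. I therefore plan to use a degree/homotopy argument instead.

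\textbf{Step 2: homotopy to the antipodal map.} Consider on $S^2$ the vector-valued map $G(x)=(\nabla K(x),\Delta K(x))$, viewed in the usual way as the map $x\mapsto \nabla K(x)+\Delta K(x)\,x\in\mathbb{R}^3\setminus\{0\}$, which is nonvanishing by \eqref{eq:non-degenerate}. The Chang--Yang count $\Sigma_{x\in S_-}(-1)^{ind(x)}$ equals, up to sign conventions, $\deg\big(G/|G|\big)$ computed via the critical points where the last component has a definite sign; this is how the condition $\ne 1$ arises in \cite{ChangYang1987}.

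The hypothesis says exactly that $\langle G(x),\tilde G(-x)\rangle\ge0$, where $\tilde G$ denotes the appropriately reflected field. I will show that this forces $G(x)$ and $G(-x)$ never to point in "opposite" directions in the relevant sense, so that $G(x)$ and $-G(-x)$ are never positively parallel. The straight-line homotopy between $G(x)/|G(x)|$ and the map $x\mapsto$ (reflection of $G(-x)$) then stays away from zero. This yields a relation of the form $\deg G=\pm\deg(G\circ A)$ with $A$ the antipodal map, $\deg A=-1$ on $S^2$. The relation forces the degree to be even, in particular $0$, so the count is $\ne1$.

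\textbf{Main obstacle.} The main difficulty is the sign and orientation bookkeeping. I must identify precisely how $\Sigma_{S_-}(-1)^{ind}$ relates to $\deg(G/|G|)$ (it is $1-\deg$ or similar). I must also check that the quantity $\Delta K(x)\Delta K(-x)-\nabla K(x)\cdot\nabla K(-x)$ is exactly the inner product that prevents antipodal alignment, after transporting tangent vectors at $-x$ to $x$ (tangent planes at $x$ and $-x$ coincide as subspaces of $\mathbb{R}^3$). The equality case $=0$ needs care, and I would handle it by a small perturbation of the homotopy, using nonvanishing of $G$.
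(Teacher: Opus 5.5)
\textbf{There is a genuine gap: Step~2 runs in the wrong direction, and as written it would prove the opposite of what you need.}

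With $G(x)=\nabla K(x)+\Delta K(x)\,x$ and $\nabla K(\pm x)\perp x$, the cross terms vanish and
\[
G(x)\cdot G(-x)=\nabla K(x)\cdot\nabla K(-x)-\Delta K(x)\,\Delta K(-x).
\]
So the hypothesis is exactly $G(x)\cdot G(-x)\le 0$; your ``reflected field'' is just $\tilde G=-G$. In words, $G(x)$ and $G(-x)$ are never \emph{positively} parallel. This makes $G$ nearly odd, not nearly even. For even $K$ the map $G$ is odd and the hypothesis holds strictly. For $K=2+x_3$, whose derivatives are odd, the map $G$ is even and the hypothesis fails.

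The homotopy the hypothesis actually provides is $G\simeq -G\circ A$. That gives $\deg G=(-1)\,\deg G\,(-1)=\deg G$, which says nothing. The homotopy $G\simeq G\circ A$ you need for $\deg G=-\deg G$ would require $G(x)$ and $G(-x)$ never to be antiparallel, and that is not assumed.

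Worse, $\deg G=0$ is the wrong target. Deform $\nabla K+\big((1-t)\Delta K+tM\big)x$, with $M>\max|\Delta K|$, from $t=1$ (degree $1$) to $t=0$. The zeros met along the way occur exactly at the points of $S_-$. Each is a transversal zero contributing $(-1)^{ind(x)}$ times a universal sign. You fix that sign by replacing $\Delta K$ with a negative constant, which gives degree $-1$ with total index $\chi(S^2)=2$. The result is
\[
\deg\big(G/|G|\big)=1-\sum_{x\in S_-}(-1)^{ind(x)} .
\]
So degree zero is precisely the case $\sum_{S_-}(-1)^{ind}=1$ that Chang--Yang exclude. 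Indeed $K=2+x_3$ has $\deg G=0$ and no solution. Separately, ``even'' would not imply ``$0$''.

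\textbf{The missing ingredient is Borsuk's theorem that odd maps $S^2\to S^2$ have odd degree.} Set $f=G/|G|$, which is well defined by \eqref{eq:non-degenerate}. The hypothesis gives $f(x)\cdot f(-x)\le 0$, so $f(x)\neq f(-x)$. Hence $g(x)=\big(f(x)-f(-x)\big)/|f(x)-f(-x)|$ is a well-defined odd map, and
\[
f(x)\cdot g(x)=\big(1-f(x)\cdot f(-x)\big)/|f(x)-f(-x)|>0 .
\]
The straight-line homotopy from $f$ to $g$ therefore never vanishes, and $\deg f=\deg g$ is odd. Consequently $\sum_{S_-}(-1)^{ind}=1-\deg f$ is even, in particular $\neq 1$, and Theorem~\ref{thm: condition critical points} gives a solution. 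The equality case needs no perturbation, and your Step~1 is not needed. The paper itself gives no proof of this corollary; it only cites \cite{ji2009scalar}.
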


Other than the above mentioned explicit criterion, structural theorems are developed \cite{anderson2021nirenberg} from the geometric convergence theorems of Cheeger-Gromov. Consider the map 
\[\pi: C^{m, \alpha}(S^2)\rightarrow C^{m-2,\alpha}_{+}(S^2),\]
defined by $u\mapsto K_{g_u}$. Here $C^{m,\alpha}_+(S^2)$ denotes the functions $f$ in $C^{m, \alpha}(S^2)$ such that $f(x)>0$ for some $x\in S^2$. Standard elliptic theory imposes that $\pi$ is smooth, non-linear, Fredholm with index $0$. When restricted in the non-degenerate region, $\pi$ is proper (c.f. Theorem 1.1, \cite{anderson2021nirenberg}), and the notion of degree makes sense. The author provides explicit formulae for computing the degree (c.f. Theorem 1.2 and Theorem 1.3, \cite{anderson2021nirenberg}). When degree is non-zero, this gives a signed count on the number of generic solutions, hence providing solutions to Equation~\eqref{eq:main_pde}. For the degree zero region $\Omega_0$, it's shown to be connected (c.f. Remark 4.2) and analysis of regular and singular points of $\pi$ is conducted.

\begin{theorem}[\cite{anderson2021nirenberg}, Theorem 1.4]
    The regular points are regular values of $\pi$, are open and dense in the domain and range of $\pi$, and the set of singular points of $\pi$ is a stratified space with strata of codimension $s\ge 1$ in $C^{m,\alpha}(S^2)$. 
\end{theorem}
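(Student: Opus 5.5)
The plan is to follow the standard Smale--Tromba and Fredholm framework for nonlinear elliptic maps, as in \cite{anderson2021nirenberg}. The statement has three parts: (i) regular points map to regular values; (ii) regular points are open and dense in both domain and range; (iii) the singular set is a stratified space of positive codimension. The order will be: Fredholm structure, then properness, then Sard--Smale, then stratification by kernel dimension.

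\textbf{Step 1: Fredholm structure.} Write $\pi(u) = e^{-2u}(1-\Delta_{g_0}u)$. Its linearisation at $u$ is
\[
D\pi_u(v) = -e^{-2u}\Delta_{g_0} v - 2K_{g_u} v = -\Delta_{g_u} v - 2K_{g_u}v .
\]
This is a compact perturbation of the Laplacian, and it is formally self-adjoint with respect to $dA_{g_u}$. Hence $D\pi_u$ is Fredholm of index $0$. It follows that $u$ is a regular point exactly when $\ker(\Delta_{g_u}+2K_{g_u})=0$. Openness of regular points in the domain is then immediate, since invertibility is an open condition.

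\textbf{Step 2: Regular points versus regular values.} I take ``regular points are regular values'' to mean the following: if $K=\pi(u)$ with $u$ regular, then every other preimage $u'$ of $K$ is also regular. This is the hardest step. The Möbius group acts on solutions, and generically all preimages of $K$ are non-degenerate. However, the step requires an argument that degeneracy of one preimage forces degeneracy at the level of $K$.

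My first attempt would use the properness of $\pi$ on the non-degenerate region (Theorem~1.1 of \cite{anderson2021nirenberg}). Together with the Kazdan--Warner identity (Theorem~\ref{theo: Kazdan-Warner Identity}), this should show that the kernel elements are governed by first spherical harmonics pulled back by conformal maps. Failing that, I would fall back on the definition used in \cite{anderson2021nirenberg}: a point is regular if its image is a regular value. Under that definition, (i) holds by definition and the content shifts to (ii).

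\textbf{Step 3: Density.} Properness on each connected component of the non-degenerate region lets me apply the Sard--Smale theorem to the smooth Fredholm map $\pi$. This gives that regular values are residual, hence dense, in $C^{m-2,\alpha}_+$. For density in the domain, I would perturb a singular $u$ to $u+\epsilon w$ with $w$ chosen so that the kernel is destroyed. Standard unique-continuation arguments show the kernel of $\Delta_{g_u}+2K_{g_u}$ is not stable under generic potential perturbations, and a perturbation of $u$ changes the potential $2K_{g_u}$ freely.

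\textbf{Step 4: Stratification.} Define
\[
\Sigma_s = \{u : \dim\ker D\pi_u = s\}.
\]
Near $u_0\in\Sigma_s$, a Lyapunov--Schmidt reduction turns the condition into the vanishing of an $s\times s$ symmetric matrix-valued map. A transversality argument, via the same potential-perturbation freedom as in Step 3, shows this map is a submersion onto symmetric matrices. Hence $\Sigma_s$ is a Banach submanifold of codimension $s(s+1)/2 \ge s \ge 1$. The frontier condition $\overline{\Sigma_s}\subset\bigcup_{t\ge s}\Sigma_t$ follows from upper semicontinuity of kernel dimension, which gives the stratified structure.

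The main obstacle is the transversality claim shared by Steps 3 and 4, together with the interpretation needed in Step 2.
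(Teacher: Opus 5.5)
The paper gives no proof of this statement; it is quoted from Anderson's Theorem~1.4. Your proposal therefore has to stand on its own, and as written it has genuine gaps.

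\textbf{Step~2 does not prove (i).} The premise of your first attempt is false. If $\varphi^*g_0=J_\varphi g_0$ and $\ell$ is a first spherical harmonic, then $\Delta_{g_0}(\ell\circ\varphi)+2J_\varphi\,(\ell\circ\varphi)=0$. On the other hand $\ker D\pi_u=\ker(\Delta_{g_0}+V_u)$ with $V_u:=2K_{g_u}e^{2u}=2(1-\Delta_{g_0}u)$. So $\ell\circ\varphi\in\ker D\pi_u$ forces $V_u=2J_\varphi$, i.e.\ $g_u$ is homothetic to $\varphi^*g_0$ and $K$ is constant. Away from the fibre over constants, the kernel has nothing to do with first harmonics.

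More fundamentally, nothing in your toolkit (Fredholm index $0$, properness, Sard--Smale) can yield ``regular points are regular values''. For a general index-zero Fredholm map, a fold point and a non-degenerate point can share a fibre: already $x\mapsto x^3-3x$ takes the value $2$ at the fold $x=-1$ and at the regular point $x=2$. An argument specific to $\pi$ is needed, and none is given.

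Your fallback (calling $u$ regular when $\pi(u)$ is a regular value) makes (i) a tautology. It moves the content into openness of regular values in $C^{m-2,\alpha}_+$, which Step~3 never addresses. Sard--Smale gives residual, hence dense, regular values with no properness at all. It only needs a $C^1$ index-$0$ Fredholm map on a \emph{separable} domain, so you must pass from the non-separable $C^{m,\alpha}$ to little H\"older or Sobolev spaces. Openness is exactly where properness is needed, and $\pi$ is proper only over the non-degenerate region. Elsewhere, degenerate preimages of $K_n\to K$ may bubble off, so a regular value need not be interior to the set of regular values.

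\textbf{The transversality claim in Step~4 (and the ``free potential'' claim in Step~3) fails at the most basic singular points.} From $e^{2u}D\pi_u v=-(\Delta_{g_0}v+V_u v)$ and $V_{u+tw}=V_u-2t\Delta_{g_0}w$, varying $u$ moves the potential only by mean-zero functions, since $\int_{S^2}V_u\,dA_{g_0}=8\pi$ by Gauss--Bonnet.

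At $u=0$ the kernel is spanned by $\phi_i=\sqrt{3/(4\pi)}\,x_i$, and $\sum_i\phi_i^2=3/(4\pi)$. Hence the derivative of your Lyapunov--Schmidt matrix, $w\mapsto\bigl(-2\int_{S^2}\Delta_{g_0}w\,\phi_i\phi_j\,dA_{g_0}\bigr)_{ij}$, has trace $-\tfrac{3}{2\pi}\int_{S^2}\Delta_{g_0}w\,dA_{g_0}=0$. Its image is the $5$-dimensional space of trace-free matrices, not $\mathrm{Sym}_3$. Since $\sum_i(x_i\circ\varphi)^2=1$ as well, the same failure occurs along the whole non-compact fibre $\pi^{-1}(1)$. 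Your argument therefore gives no manifold structure for $\Sigma_3$ there, let alone codimension $6$.

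What your set-up does prove is the top stratum. For $s=1$ you need $\phi^2$ non-constant, which holds because $\phi\equiv\mathrm{const}$ would force $V_u\equiv0$, contradicting $\int V_u=8\pi$. So $\Sigma_1$ is a smooth hypersurface. For $s\ge 2$ you need a genuine result on the products $\phi_i\phi_j$ of kernel elements modulo constants, or a different route to the stratification (for instance one that accounts for the conformal group). This is the missing idea. Upper semicontinuity of $\dim\ker$ only gives the frontier condition, not the structure of the strata.
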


\subsection{Generating Synthetic Train and Test Data}
\label{sec:spectralpair}
For the purpose of training and validating machine learning models, suitable data may be generated considering the inversion of the problem. Instead of prescribing $K$ and then solving for $u$, synthetic pairs $(u, K)$ can be directly generated.

\begin{proposition}[Spectral Pairs]\label{prop: spectral pairs}
    Let $\{Y_{\ell, m}\}$ denote the real-valued spherical harmonics on $S^2$. For any finite set of coefficients $\{c_{\ell, m}\} \subset \mathbb{R}$ with $1 \le \ell \le L$, define the conformal factor $u$ by
    \begin{equation}
        u(x) = \sum_{\ell=1}^{L} \sum_{m=-\ell}^{\ell} \frac{c_{\ell, m}}{\ell(\ell+1)} Y_{\ell, m}(x).
    \end{equation}
    Then $u$ is an exact solution to the Nirenberg Equation~\eqref{eq:main_pde} with prescribed Gaussian curvature $K$ given by
    \begin{equation}
        K(x) = e^{-2u(x)} \left( 1 + \sum_{\ell=1}^{L} \sum_{m=-\ell}^{\ell}  c_{\ell, m} Y_{\ell, m}(x) \right).
    \end{equation}
\end{proposition}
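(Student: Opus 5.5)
The approach is to compute $\Delta_{g_0} u$ directly from the spectral definition of $u$, substitute it into Equation~\eqref{eq:main_pde}, and read off $K$. The key input is the eigenvalue relation for real spherical harmonics on the round unit sphere. With the sign convention implicit in Theorem~\ref{theo: Kazdan-Warner Identity}, where first harmonics satisfy $\Delta_{g_0}F = 2F$, the Laplacian is taken as the positive (geometer's) operator. Degree-$\ell$ harmonics then satisfy $\Delta_{g_0} Y_{\ell,m} = \ell(\ell+1) Y_{\ell,m}$.

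The first check is that this convention is the one under which Equation~\eqref{eq:main_pde} is the correct curvature equation. For $g = e^{2u}g_0$ in dimension two, $K_g = e^{-2u}(K_{g_0} + \Delta u)$ with $\Delta$ the nonnegative Laplacian. Since $K_{g_0}=1$, requiring $K_g = K$ gives exactly $1 + \Delta_{g_0} u = K e^{2u}$. As printed, Equation~\eqref{eq:main_pde} reads $1 - \Delta_{g_0} u = Ke^{2u}$, which matches the analyst's convention ($\Delta \le 0$). In that convention $\Delta_{g_0} Y_{\ell,m} = -\ell(\ell+1)Y_{\ell,m}$, so $-\Delta_{g_0} u = \sum c_{\ell,m} Y_{\ell,m}$ either way. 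The main obstacle is therefore purely one of bookkeeping: fixing one sign convention and applying it consistently, so that the combination $1 - \Delta_{g_0}u$ in Equation~\eqref{eq:main_pde} equals $1 + \sum_{\ell,m} c_{\ell,m}Y_{\ell,m}$. I would state explicitly which convention is adopted and note that it agrees with the curvature formula above.

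With the convention fixed, the computation is immediate. The sum defining $u$ is finite, so $u$ is smooth and $\Delta_{g_0}$ can be applied termwise. The factor $1/(\ell(\ell+1))$ cancels the eigenvalue, giving
\[
1 - \Delta_{g_0} u = 1 + \sum_{\ell=1}^{L}\sum_{m=-\ell}^{\ell} c_{\ell,m} Y_{\ell,m}.
\]
This is where $\ell \ge 1$ is needed: it keeps the factor $1/(\ell(\ell+1))$ finite. Defining $K$ as in the statement, $Ke^{2u}$ equals the right-hand side above, so Equation~\eqref{eq:main_pde} holds identically and $u$ is an exact solution. $K$ is smooth as a product of smooth functions.

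Finally, as a consistency check against Section~\ref{sub:geom_constr_on_curv}, I would remark the following. Since every $Y_{\ell,m}$ with $\ell\ge1$ has zero mean, $\int K e^{2u}\,dA_{g_0} = 4\pi$, which recovers Equation~\eqref{eq:gauss_bonnet}. No positivity assumption on the coefficients is needed for the proposition itself. However, $g = e^{2u}g_0$ is always a genuine Riemannian metric, so the resulting $K$ is automatically realisable.
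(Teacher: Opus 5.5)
Your argument is correct and is the intended one. The paper states this proposition without a written proof, and the verification is exactly your termwise cancellation of $1/(\ell(\ell+1))$ against the eigenvalue in $-\Delta_{g_0}Y_{\ell,m}=\ell(\ell+1)Y_{\ell,m}$, using the analyst's sign under which Equation~\eqref{eq:main_pde} is the correct curvature equation, as you rightly point out despite the paper's $\Delta_{g_0}F=2F$ in the Kazdan--Warner statement.

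One small wording fix: ``$-\Delta_{g_0}u=\sum c_{\ell,m}Y_{\ell,m}$ either way'' holds only in the analyst's convention, since in the geometer's convention it is $+\Delta_{g_0}u$ that equals this sum. Your actual conclusion, that the curvature-equation combination equals $1+\sum c_{\ell,m}Y_{\ell,m}$ in both conventions, is right.
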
%

\begin{remark}
    The Kazdan-Warner identity (Theorem \ref{eq:KW_identity}) provides a powerful tool to construct counterexamples. Specifically, any function $K$ for which the gradient $\nabla K$ has a non-vanishing projection onto a first spherical harmonic $\nabla F$ (e.g. strictly monotonic functions like $K = 2 + x$) cannot be realised as Gaussian curvature. Such functions will be used, along with more subtle cases in Section \ref{sub:results_general_funcs}.
\end{remark}

From the discussion in Section~\ref{section: existence results}, potential unknown examples lie either in the region $\Omega_0$, where $\pi$ has degree zero, or in settings where degenerate points are allowed. For testing purposes, non-rotationally symmetric examples that violate the hypotheses of the currently available existence theorems are constructed. These begin with low-degree polynomial curvatures $K$, and the data for which Theorem~\ref{theo:xu} applies are discarded. Some remaining examples with $p=q$ are listed in Table~\ref{tab:unknown_exist_general}. Test functions for the prescribed curvature $K$ built from spherical harmonics are also considered. While the even-degree cases are understood (c.f. Example~\ref{example: even spherical ahrmonics}), odd-degree cases yield additional examples for testing.

\begin{example}[Spherical harmonics of odd degree]\label{ex:unknown_sh}
Let $Y_{\ell,m}$ be a spherical harmonic of odd degree and $|m|>1$ or $\ell>1$ and $|m|= 1$. As degenerate points are precisely given by the self-intersections of the nodal sets \cite{berardhelffer2014nodal}, it has critical zeros $p\in S^2$, i.e. $\nabla Y_{\ell,m}(p)=\Delta Y_{\ell,m}(p)=0$. The non-degeneracy condition in Theorem \ref{thm: condition critical points} is thus violated.
\end{example}

\section{The Nirenberg Neural Network}
\label{sec:Nirenberg_impl}

This section introduces the Nirenberg Neural Network (NNN) architecture. 
The NNN is a physics-informed neural network (PINN) designed to approximate the solution to the Nirenberg equation, where PINNs are a class of neural architectures designed to learn approximate solutions to PDEs \cite{raissi2017physicsinformeddeeplearning}. 
A neural network can be viewed as a parametrized function \( NN(x; \theta) \). The goal is then to optimize the parameters so that \( \theta \) approaches \( \theta^\star \) where \( NN(x; \theta^\star) \) approximates a solution of the PDE. The network is trained by minimising a loss function that includes boundary conditions, consistency conditions, and other constraints of the PDE. 
This loss minimisation process optimises the network's parameters, conditional on the data, using gradient-based methods such as Adam \cite{kingma2014adam}.

Essential to their successes, neural networks are \emph{universal approximators} \cite{HORNIK1989359, hanin2018approximatingcontinuousfunctionsrelu, kidger2020universalapproximationdeepnarrow}, meaning that, under mild assumptions on the activation function, they can approximate any continuous function on a compact domain to arbitrary accuracy. 
This makes them a natural tool for approximating solutions to PDEs. 
Consequently, this approach has recently attracted significant interest and has been successfully applied in a wide range of settings, including geometric analysis contexts such as the construction of Calabi-Yau, G2, and Einstein metrics, with a lot of other research carried out in this context \cite{Ashmore:2019wzb, larfors2022numerical, larfors2021learning, gerdes2023cyjax, jejjala2022neural, douglas2020numerical, douglas2024harmonic, hirst2024ainstein, deluca2025gravity, haydys2025}.

To motivate the choice of using NNs for studying the Nirenberg problem, recall that the key Equation~\eqref{eq:main_pde} is a nonlinear elliptic PDE \cite{KW1974curvature}. 
Even in the rotationally symmetric case, determining the conformal factor $u$ for a given curvature $K$ involves solving a nonlinear ODE which rarely admits closed-form solutions. For a general non-symmetric prescribed curvature $K$, a key analytical difficulty is the potential loss of compactness. 
Explicit formulae are not expected in general, and solvability is subtle. 
The curvature map $u \mapsto K_{e^{2u}g_{0}}$ modulo the conformal group $\mathrm{Conf}(S^{2})$ is usually studied using non-linear elliptic theory, degree-theoretic ideas, and variational methods. For these reasons, a data-driven approach approximating the solution map (finding $u$ for a given $K$) provides a novel and promising perspective. 
Although this is a natural direction, numerical approaches to the Nirenberg problem remain relatively underutilised.

The description of the Nirenberg architecture begins with the input sample generation method (Subsection \ref{sec: input}); architecture details (Subsection \ref{sec:architecture}); the training objective (Subsection \ref{sec:loss}); and ends with output diagnosis (Subsection \ref{sec: visualisation}).

\subsection{Input Structure and Sampling Strategy} \label{sec: input}
The sphere is represented using standard \textit{stereographic projection}, which provides a smooth atlas with two coordinate patches (each one covering the whole sphere except a point). In the Nirenberg architecture, both patches are used to encourage geometric consistency and stabilize training. In this setting, the northern hemisphere (points with positive $z$-coordinate) is projected from the south pole onto a disc, while the southern hemisphere (points with negative $z$-coordinate) is projected from the north pole onto a separate disc (see Figure \ref{Fig: stereographic_maps}). This construction ensures that each hemisphere, including an overlap region, is covered by a single chart. By making the overlap far from maximal, singularities in the coordinate representation (as observed in \cite{hirst2024ainstein}) are avoided.

The stereographic projection maps points on the sphere to the Euclidean plane via a conformal transformation:
\begin{equation} \label{eq: stereographic projection}
\pi_0 : S^2 \setminus \{S\} \to \mathbb{R}^2, \quad
(x, y, z) \mapsto (u, v) = \left( \frac{x}{1+z}, \frac{y}{1+z} \right),
\end{equation}
where $S=(0,0,-1)$ is the South pole. The inverse map
\begin{equation}
\pi_0^{-1} : \mathbb{R}^2 \to S^2, \quad
(u, v) \mapsto (x, y, z) = \left( \frac{2u}{1+r^2}, \frac{2v}{1+r^2}, \frac{1-r^2}{1+r^2} \right),
\end{equation}
with \(r^2 = u^2 + v^2\).  An equivalent definition applies for maps in the northern hemisphere.  

In these coordinates, the round metric takes the explicit form
\begin{equation}
g_0 = \frac{4}{(1+r^2)^2} I,
\end{equation}
where $I$ is the identity matrix. This conformal metric has the crucial property that its scalar curvature is constant and equal to 2 everywhere on the sphere.

\begin{figure} [t!]
  \centering
  \includegraphics[scale=0.27, clip]{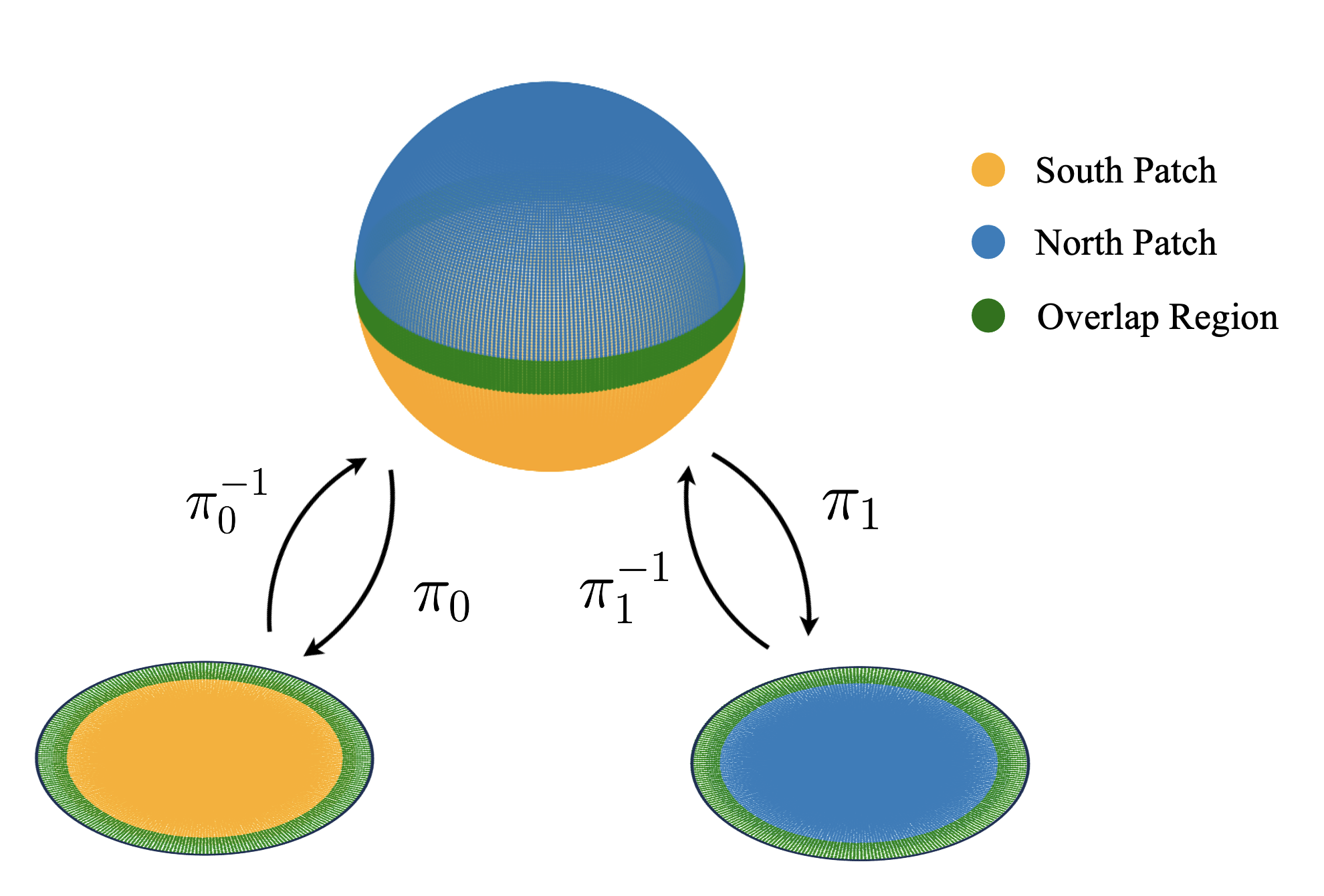}
  \caption{Stereographic map and inverse between a sphere and disks}
  \label{Fig: stereographic_maps}
\end{figure}

Training data consists of points uniformly distributed on the 2-sphere according to the standard surface measure. Uniform sampling is achieved by sampling the polar angle $\theta$ such that $\cos(\theta)$ is uniform on $[-1, 1]$ (corresponding to uniform distribution of the $z$-coordinate), and sampling the azimuthal angle $\phi$ uniformly on $[0, 2\pi]$. This ensures that the sampling density is proportional to the surface element $\sin(\theta) \, d\theta \, d\phi$, yielding \textit{uniform} coverage of the sphere. Once points are sampled in spherical coordinates, they are converted to Cartesian coordinates $(x, y, z) = (\sin\theta\cos\phi, \sin\theta\sin\phi, \cos\theta)$. These 3D coordinates are then projected into the appropriate patch coordinates using the stereographic projection described in Equation \eqref{eq: stereographic projection}. 

The sampling procedure includes a configurable radial offset parameter that allows points to extend slightly beyond the equator into the opposite hemisphere, giving rise to an overlap region. Training on points in this region carries one clear advantage, and one clear disadvantage. The network receives signals from both patches simultaneously, encouraging global smoothness of the conformal factor which is desirable. Conversely, the overlap region is weighted twice as much as the rest of the sphere, introducing a bias. It is found that due to the architecture of the network (described in the next section), it is not necessary to include any overlap region to achieve a solution that is smooth across the equator; therefore, all investigations are obtained with the following sampling strategy: two hemispheres sampled separately, with no overlap.

At each sampled point, the prescribed curvature function $K$ is evaluated to provide the target label for supervised learning. Additionally, a normalisation constant is computed once at the beginning of training by numerically integrating the squared curvature function over the sphere using either Monte Carlo sampling or Gauss-Legendre quadrature. This normalisation constant is attached to each training batch and used to scale the loss function as described below. 

\subsection{Architecture Details: the Nirenberg Neural Network} \label{sec:architecture}

The core innovation lies in using a neural network to directly parameterise the conformal factor $u$ as a function on the sphere thought of as a hypersurface in $\mathbb{R}^3$. Rather than working \textit{just} with patch coordinates, which would involve dealing with potential discontinuities at chart boundaries, the network operates on three-dimensional Cartesian coordinates $(x, y, z)$ lying on the unit sphere. This ensures that the conformal factor is globally well-defined and smooth across all patches.

The network begins with an optional encoding layer that transforms the input coordinates using random Fourier features (RFFs). This technique, based on kernel approximation theory, maps the input through randomly sampled trigonometric functions to capture high-frequency variations in the conformal factor. The encoding takes the form
\begin{equation}
\phi(x) = \sqrt{\frac{2}{m}}\left[\cos(Fx + \omega), \sin(Fx + \omega)\right],
\end{equation}
where the frequency matrix $F$ is sampled from a Gaussian distribution with unit standard deviation, and the phase vector $\omega$ is sampled uniformly. This preprocessing enables the network to represent complex, even periodic, features that might be difficult to capture with small numbers of linear layers. 

Figure~\ref{Fig:rff_loss_as_fn_l} shows the training loss for spherical harmonics of degree $\ell$, namely $Y_{\ell,0}(\theta,\phi)$. As $\ell$ increases, the functions become more oscillatory, and the neural network struggles to approximate them accurately. As explained above, it is expected that the use of RFFs will alleviate this behaviour. 

\begin{figure} [t!]
  \centering
  \includegraphics[scale=0.67, clip]{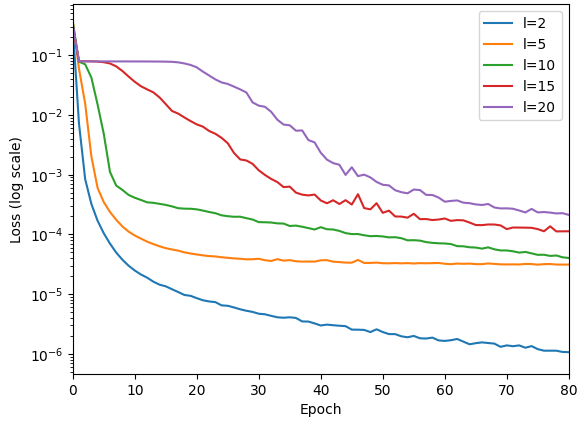}
  \caption{Training dynamics of the network without RFFs as a function of spherical harmonic order $l$.}
  \label{Fig:rff_loss_as_fn_l}
\end{figure}

Following the encoding, the architecture consists of several residual blocks, each containing fully connected layers with smooth non-linear activations. The residual connections allow the network to learn perturbations from the identity mapping, which is particularly effective when the solution is expected to be close to the trivial case (where $u = 0$, corresponding to the round metric). Each block processes the input through a pair of dense layers with activations such as GELU or SiLU, and adds the result back to the input via skip connections. This design helps avoid vanishing gradients in deeper networks.

The final layer produces a single scalar output representing the conformal factor $u$ at the input point on the sphere, and hence producing a $u$ value for all sphere points. This output layer uses small random initialisation with standard deviation of order $10^{-3}$, ensuring that the network begins near the trivial solution. The bias is initialised to zero, providing a neutral starting point that allows the optimization to explore the solution space without strong initial biases. The architecture just outlined is represented in Figure \ref{fig:stereograhic}.

\newpage
\begin{figure}[t!] 
    \centering
    \includegraphics[width=0.97\linewidth]{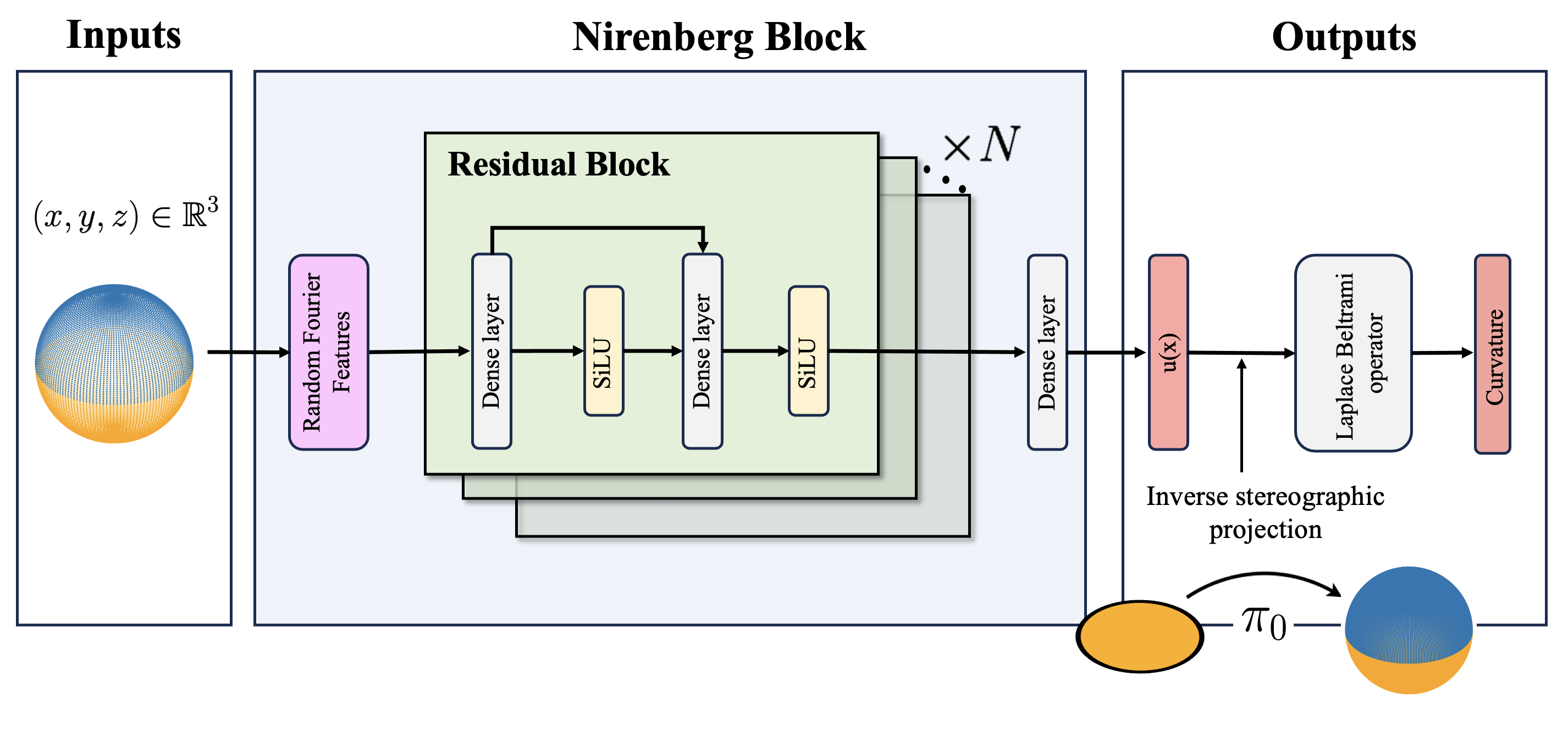}
    \caption{Topology of the Nirenberg Neural Network architecture.}
    \label{fig:stereograhic}
\end{figure}

A key technical challenge is computing the Laplace-Beltrami operator $\Delta_{g_0} u$ appearing in the Nirenberg PDE and the associated loss. Traditional numerical methods would require finite-difference approximations on a discretised grid, introducing discretisation errors and boundary artifacts. Instead, the Nirenberg Neural Network leverages automatic differentiation provided by modern deep learning frameworks to compute derivatives exactly (up to numerical precision) by constructing a computational graph and applying the chain rule.

The computation proceeds in two stages using nested gradient tracking. The first stage computes the gradient of the conformal factor with respect to patch coordinates, yielding $\nabla u = (\partial_x u, \partial_y u)$. The second stage computes the divergence of a weighted gradient, implementing the formula for the Laplace-Beltrami operator on a Riemannian manifold:
\begin{equation}
\Delta_{g_0} u = \frac{1}{\sqrt{|g_0|}} \partial_i\left(\sqrt{|g_0|} g_0^{ij} \partial_j u\right).
\end{equation}

\subsection{Loss Function}
\label{sec:loss}

The Nirenberg Neural Network is trained in the semi-supervised regime to approximate the metric $g$ which pointwise solves the Nirenberg partial differential equation. The loss function measures the mean squared error (MSE) between the predicted scalar curvature $R_g(x)$, and the rescaled prescribed Gaussian curvature $K(x)$\footnote{On a surface $R_g = 2K_g$, where $R_g$ and $K_g$ are the Ricci and Gaussian curvatures, respectively. This is accounted for by the factor of 2 in the definition of $R_g$ in terms of the Laplace-Beltrami operator.}, evaluated across points sampled on the sphere. In practice, the predicted curvature is computed from the network's conformal factor $R_g = e^{-2u}(2 - 2\Delta_{g_0} u)$, where $\Delta_{g_0}$ denotes the Laplace-Beltrami operator. The loss $\mathcal L$ is defined as
\begin{equation}\label{eq:nn_loss}
\mathcal{L} = \mathbb{E}_{x \sim S^2}\left[\frac{(R_g(x) - 2 K(x))^2}{\mathcal N}\right],
\end{equation}
where $\mathcal N$ is a normalisation factor given by the integral of the squared prescribed curvature over the sphere such that $\mathcal N = \int_{S^2} (2K)^2 \, d\Omega$. This normalisation serves two purposes: it balances the loss contributions across different prescribed functions with varying magnitudes, and it provides scale-invariant training that helps stabilise optimisation for curvature functions with extreme values. 

\subsection{Visualisation and Analytic Checks} \label{sec: visualisation}

The code comes equipped with comprehensive visualisation and diagnostics tools, enabling detailed inspection of the learnt solution. The utility of these tools will be presented in the next section, for a chosen set of prescribed functions. These plots reveal the spatial structure of the solution and help identify regions where the network may show pathological behaviours which were not captured by the loss value alone. These include numerical artifacts which can often be quite elusive, unless an explicit inspection is performed. 

A crucial verification step involves checking the Gauss-Bonnet theorem, c.f. Equation~\eqref{eq:gauss_bonnet}. This integral is computed numerically by sampling points uniformly on the sphere, evaluating the predicted scalar curvature $K_g$ and the conformal volume element $dA_g = e^{2u} dA_{g_0}$, and averaging over samples weighted by the surface measure. The result should equal $4\pi$ regardless of the conformal factor, as this is a topological invariant. Significant deviation from this value indicates numerical instabilities or failure to satisfy the underlying geometric constraints.

\section{Results}
\label{sec: results}

This section presents the training and testing results of the Nirenberg Neural Network over a series of prescribed curvature functions (prescribers). Section~\ref{sub:results_spectralpairs} studies the stable benchmark of Spectral Pairs generated via Proposition~\ref{prop: spectral pairs}. Section~\ref{sub:results_spherical_harmonics} then considers prescribers built from spherical harmonics, including known solvable and unsolvable cases as well as candidates whose solvability is currently unknown. Section~\ref{sub:results_general_funcs} extends the experiments to more general prescribers. Finally, Section~\ref{sub:sh_expansion} fits spherical harmonic expansion ansatz to the trained models, yielding interpretable closed-form approximations of the learned conformal factors $u$.

\subsection{Spectral Pairs}
\label{sub:results_spectralpairs}
Proposition \ref{prop: spectral pairs} offers a controlled evaluation setting to directly probe the correctness of the network. For a given spectral pair $(u_\text{true}, K)$, the ground truth conformal factor $u_\text{true}$ is compared with the network's predicted conformal factor $u_\text{pred}$. Over a fixed set of evaluation points $\{x_i\}_{i=1}^N \subset S^2$, the following metrics are reported:
\begin{enumerate}
    \item Mean absolute error (MAE),
    \[\operatorname{MAE}(u_\text{pred}, u_\text{true}) := \frac{1}{N}\sum_{i=1}^N\big| u_\text{pred}(x_i)-u_\text{true}(x_i) \big|.\]
    \item Bias,
    \[\operatorname{Bias}(u_\text{pred}, u_\text{true})
    := \frac{1}{N}\sum_{i=1}^N \Big( u_\text{pred}(x_i)-u_\text{true}(x_i) \Big).\]
    \item Pearson correlation coefficient (PCC)
    \[
    \operatorname{PCC}(u_\text{pred}, u_\text{true})
    := \frac{\sum_{i=1}^N \big(u_\text{pred}(x_i)-\overline{u_\text{pred}}\big)\big(u_\text{true}(x_i)-\overline{u_\text{true}}\big)}
    {\sqrt{\sum_{i=1}^N \big(u_\text{pred}(x_i)-\overline{u_\text{pred}}\big)^2}
    \sqrt{\sum_{i=1}^N \big(u_\text{true}(x_i)-\overline{u_\text{true}}\big)^2}},
    \]
    where $\overline{u_\text{pred}} := \frac{1}{N}\sum_{i=1}^N u_\text{pred}(x_i)$ and
    $\overline{u_\text{true}} := \frac{1}{N}\sum_{i=1}^N u_\text{true}(x_i)$.
\end{enumerate}

\begin{table}[ht!]
\centering
\resizebox{\textwidth}{!}{%
\renewcommand{\arraystretch}{1.2}
\setlength{\tabcolsep}{10pt}
\begin{tabular}{l|ccc}
\toprule
\textbf{Prescriber} & MAE & PCC & Bias \\
\midrule
\texttt{prop\_a}  & $7.95\times10^{-6}\,{\color{gray}\scriptstyle \pm\,3.1\times10^{-6}}$ & $1.0\,{\color{gray}\scriptstyle \pm\,6.6\times10^{-9}}$ & $1.98\times10^{-6}\,{\color{gray}\scriptstyle \pm\,6.3\times10^{-6}}$ \\
\midrule
\texttt{prop\_b}  & $1.21\times10^{-2}\,{\color{gray}\scriptstyle \pm\,2.5\times10^{-3}}$ & $0.93\,{\color{gray}\scriptstyle \pm\,5.7\times10^{-2}}$ & $4.22\times10^{-4}\,{\color{gray}\scriptstyle \pm\,4.7\times10^{-5}}$ \\
\midrule
\texttt{prop\_c}  & $2.12\times10^{-2}\,{\color{gray}\scriptstyle \pm\,1.6\times10^{-2}}$ & $0.98\,{\color{gray}\scriptstyle \pm\,3.1\times10^{-3}}$ & $-1.17\times10^{-3}\,{\color{gray}\scriptstyle \pm\,1.5\times10^{-3}}$ \\
\midrule
\texttt{round}  & $7.75\times10^{-7}\,{\color{gray}\scriptstyle \pm\,7.4\times10^{-7}}$ & -- & $8.68\times10^{-9}\,{\color{gray}\scriptstyle \pm\,1.1\times10^{-8}}$ \\
\bottomrule
\end{tabular}%
}
\caption{Spectral pair evaluation metrics (mean ${\color{gray}\scriptstyle \pm \text{std}}$ over 5 random training seeds).}
\label{tab:spectral_pair_metrics}
\end{table}

Table \ref{tab:spectral_pair_metrics} reports these metrics for three spectral pairs (\texttt{prop\_a}, \texttt{prop\_b}, \texttt{prop\_c}) and a \texttt{round} baseline, for which $u_\text{true}\equiv 0$. For \texttt{prop\_a}, the MAE is comparable to the \texttt{round} baseline, indicating near-perfect recovery of the reference conformal factor. The MAE is larger for \texttt{prop\_b} and \texttt{prop\_c}, and their bias is non-negligible, revealing a significant global offset component in the error. At the same time, the PCC remains consistently high, indicating that the \emph{spatial variation} of the conformal factor is accurately captured: after mean-centering, the extrema and oscillatory structure of $u_\text{pred}$ aligns closely with those of $u_\text{true}$ across the sphere (equivalently, $u_\text{pred}-\overline{u_\text{pred}}$ agrees strongly with $u_\text{true}-\overline{u_\text{true}}$ up to a positive scaling). 

It is worth noting that this distinction is related to non-uniqueness in the Nirenberg problem. If $u$ solves the prescribed curvature equation for $K$, then composing with a Möbius transformation yields another solution corresponding to the pullback curvature. Hence, to compare learnt solutions with a prescribed target, a gauge-fixing to quotient out the action of $\mathrm{Conf}(S^2)$ shall be implemented. We leave a systematic gauge-fixing procedure for future work.

\subsection{Spherical Harmonics}
\label{sub:results_spherical_harmonics}

Before employing the Nirenberg Neural Network to infer solvability for general prescribers, performance is benchmarked on the class of spherical harmonics $Y_{\ell, m}(\theta, \phi)$. This class provides a rigorous testing ground as the analytical results sharply divide the parameter space $(\ell, m)$ into regions of guaranteed existence, guaranteed non-existence, and analytical ambiguity.

Based on the theoretical background presented in Section \ref{section: existence results}, the spherical harmonics are categorised as presented in Table \ref{tab:spherical_harmonics}.
\medskip
\begin{table}[h]
\centering
\begin{tabularx}{\textwidth}{|p{0.24\textwidth}|p{0.25\textwidth}|X|}
\hline
\rowcolor{blue!30}
\textbf{Formula} & \textbf{Name} & \textbf{Comments/Reason} \\
\Xhline{2pt}

\multicolumn{3}{|c|}{\textbf{Known to exist}} \\
\Xhline{2pt}

\(Y_{2\ell,m}\) & \texttt{sh\_2l\_m} & Example \ref{example: even spherical ahrmonics} \\
\hline
$Y_{2,0}$ & \texttt{sh\_2\_0} & Proposition \ref{prop: Classification zonal spherical harmonics}\\
\hline
$Y_{3,0}$ & \texttt{sh\_3\_0} &  Proposition \ref{prop: Classification zonal spherical harmonics}\\
\Xhline{2pt}

\rowcolor{blue!15}
\multicolumn{3}{|c|}{\textbf{Known not to exist}} \\
\Xhline{2pt}

$Y_{1,0}$ & \texttt{sh\_1\_0} & Proposition \ref{prop: Classification zonal spherical harmonics} \\
\hline
$Y_{1,1}$ & \texttt{sh\_1\_1} & Theorem \ref{theo: Kazdan-Warner Identity}\\
\hline
\Xhline{2pt}

\rowcolor{blue!15}
\multicolumn{3}{|c|}{\textbf{Existence unknown}} \\
\Xhline{2pt}

$Y_{2\ell +1,m}$, $\ell\geq 1$ & \texttt{sh\_2l+1\_m} & Unknown \\
\hline
$Y_{3,1}$ & \texttt{sh\_3\_1} & Unknown \\
\hline
$Y_{3,2}$ & \texttt{sh\_3\_2} & Unknown \\
\hline
$Y_{3,3}$ & \texttt{sh\_3\_3} & Unknown \\
\hline

\end{tabularx}
\caption{Overview of spherical harmonics under test. }
\label{tab:spherical_harmonics}
\end{table}

\subsubsection{Motivating Existence}
From Figure \ref{fig:sh_loss_scatter}, is is observed that the spherical harmonics $Y_{2,0}, Y_{3,0}$ (which are known to be realisable) exhibit significantly lower loss than those for $Y_{1,0}$ and $Y_{1,1}$ (which are known to be unrealisable). 
More precisely, the former differ by loss approximately two orders of magnitude lower than the latter. All experimental results in this section are obtained from 50 independent experiments run with unique seeds, and over two discrete hyperparameter configurations specified in Section \ref{sub:training_parameters}.

From the level of loss it is possible to infer the solvability of a given prescriber. Examining a series of known cases allows a threshold to be drawn: if the loss obtained from training with an unknown prescriber is of similar order to those which are known to be solvable, then it is increasingly likely that, to an accuracy attainable with the network, the prescriber admits a solution to the Nirenberg equation. It can be seen that the losses for the $Y_{3,1}, Y_{3,2},$ and $Y_{3,3}$ spherical harmonics are of similar order to the realisable prescribers, suggesting that these are likely to be solvable. 

As is suggested in Section \ref{sub:geom_constr_on_curv}, the Gauss-Bonnet theorem may be utilised to validate the necessary condition for the solvability. Figure \ref{fig:sh_gauss_bonnet} shows the Gauss-Bonnet error for the targeted spherical harmonic prescribers; the Gauss-Bonnet errors of $Y_{3,1}, Y_{3,2},$ and $Y_{3,3}$ are of comparable order to that of $Y_{3,0}$, whilst the errors of $Y_{1,0}$ and $Y_{1,1}$ are at least two orders of magnitude higher. These findings are further supported by the visualisations in Figure \ref{fig: spherical harmonic plots}, which show that the predicted curvature for $Y_{1,1}$ and $Y_{3,1}$ closely matches the target curvature. 

\begin{figure} [ht!]
    \centering
    \includegraphics[width=0.8\linewidth]{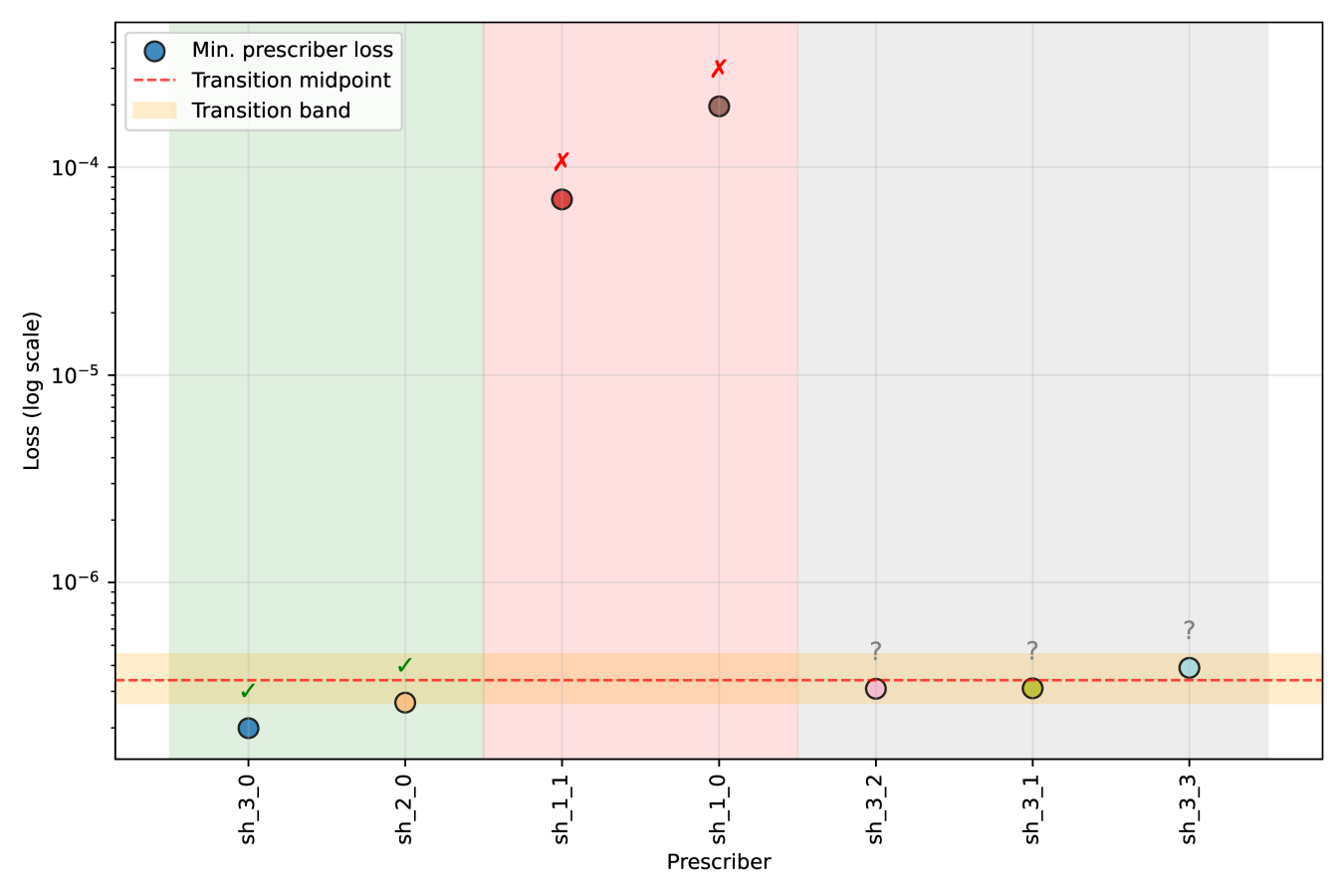}
    \caption{Minimum losses for spherical harmonic prescribers. Known realisable and unrealisable solutions are shown as ticks and crosses respectively. Unknown solutions are represented by `?'. Band represents the space between highest loss of known realisable and lowest loss of known unrealisable solutions from ensemble of all runs in this work (presented in Figure \ref{fig:general_loss_scatter}).}
    \label{fig:sh_loss_scatter}
\end{figure}

\begin{figure} [ht!]
    \centering
    \includegraphics[width=0.8\linewidth]{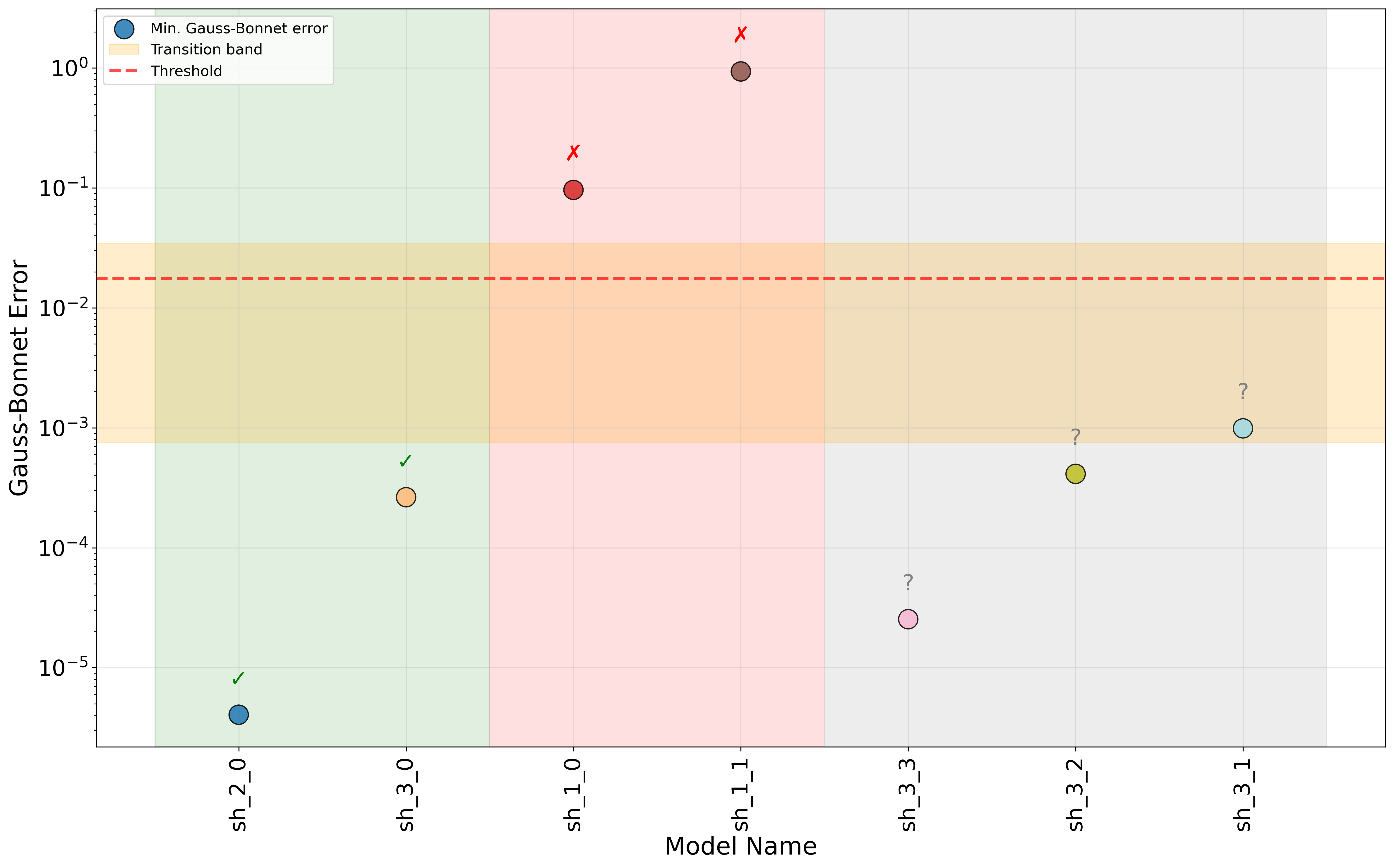}
    \caption{Gauss-Bonnet violations for the spherical harmonic prescribers. The integral of the predicted scalar curvature over the sphere is performed numerically, using the conformal volume element. The plot shows the relative deviation of this quantity, $\int_{S^2} K_g e^{2u} \, dA_{g_0}$, from the expected value of $ 2 \pi$. The notation and stylistic choices are the same as in Figure \ref{fig:sh_loss_scatter}; accordingly, the band represents the space between highest loss of known realisable and lowest loss of known unrealisable solutions from ensemble of all runs in this work (presented in Figure \ref{fig:general_gauss_bonnet}).}
    \label{fig:sh_gauss_bonnet}
\end{figure}

\begin{figure*}[ht!]
    \centering

    \noindent
    \colorbox{myLightgreen}{%
        \parbox{\dimexpr\textwidth-2\fboxsep}{%
            \centering\bfseries \texttt{sh\_2\_0}
        }%
    }    
    \vspace{1.0em}
        \begin{subfigure}[t]{0.22\textwidth}
        \centering
        \includegraphics[width=\linewidth]{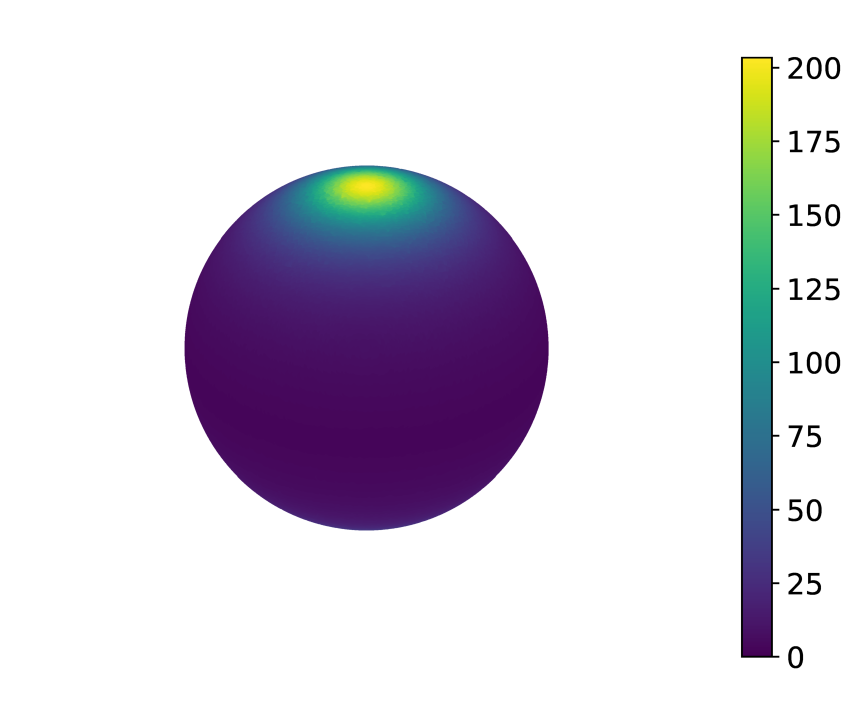}
        \caption{\texttt{Metric $g_{00}$}}
    \end{subfigure}
    \hfill
    \begin{subfigure}[t]{0.22\textwidth}
        \centering
        \includegraphics[width=\linewidth]{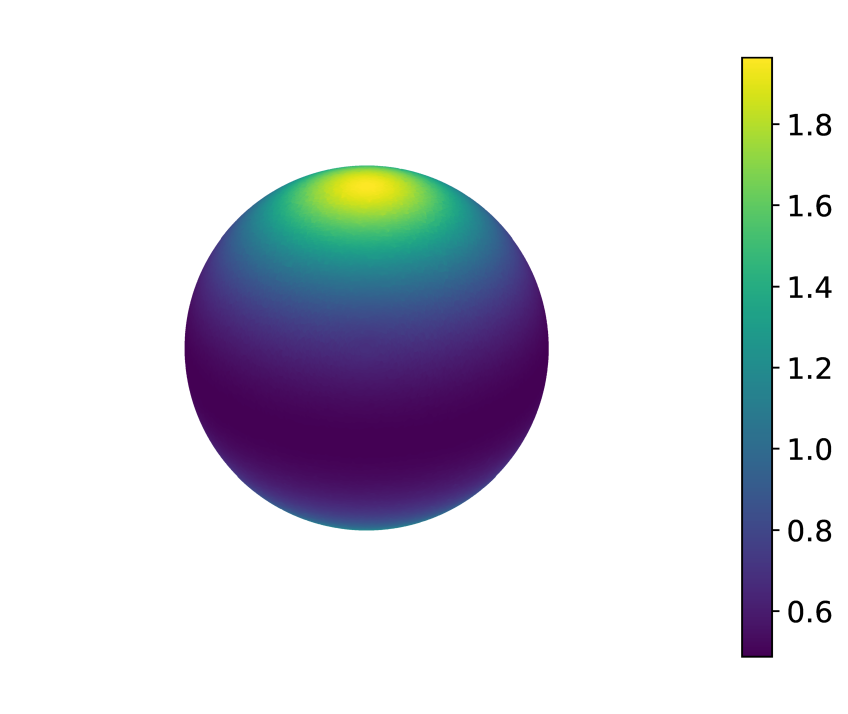}
        \caption{\texttt{u}}
    \end{subfigure}
    \hfill
    \begin{subfigure}[t]{0.22\textwidth}
        \centering
        \includegraphics[width=\linewidth]{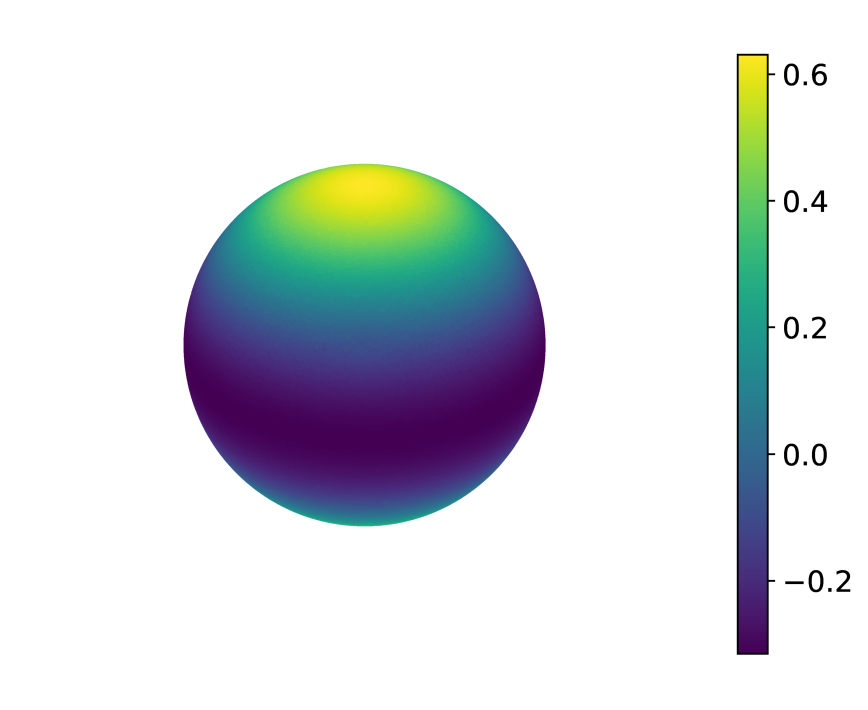}
        \caption{\texttt{Predicted R}}
    \end{subfigure}
        \hfill
    \begin{subfigure}[t]{0.22\textwidth}
        \centering
        \includegraphics[width=\linewidth]{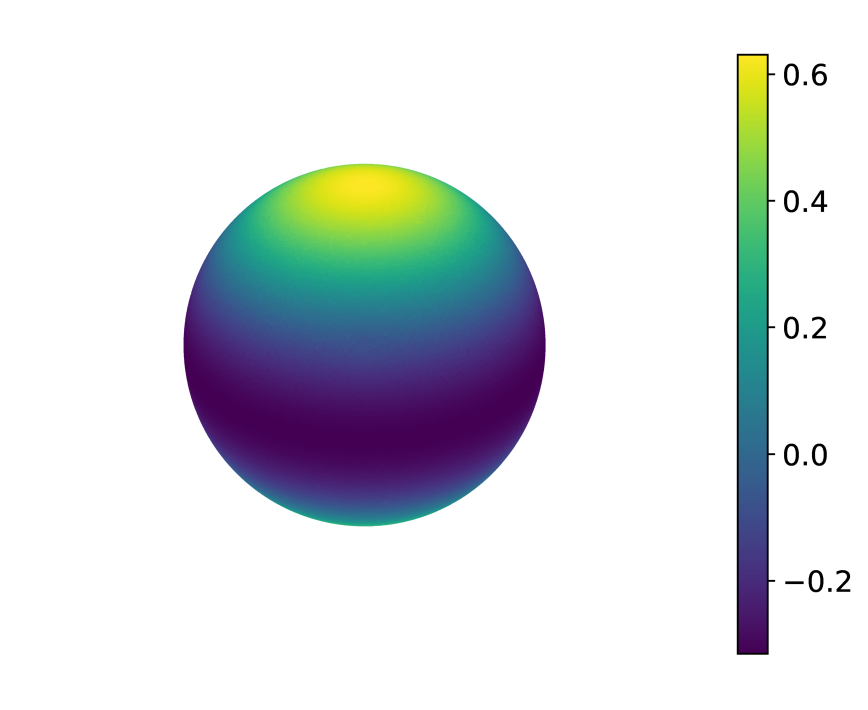}
        \caption{\texttt{Target R}}
    \end{subfigure}
   
      \vspace{1em}

    \noindent
    \colorbox{red!15}{%
        \parbox{\dimexpr\textwidth-2\fboxsep}{%
            \centering\bfseries \texttt{sh\_1\_1}
        }%
    }    
    \vspace{1.0em}
        \begin{subfigure}[t]{0.22\textwidth}
        \centering
        \includegraphics[width=\linewidth]{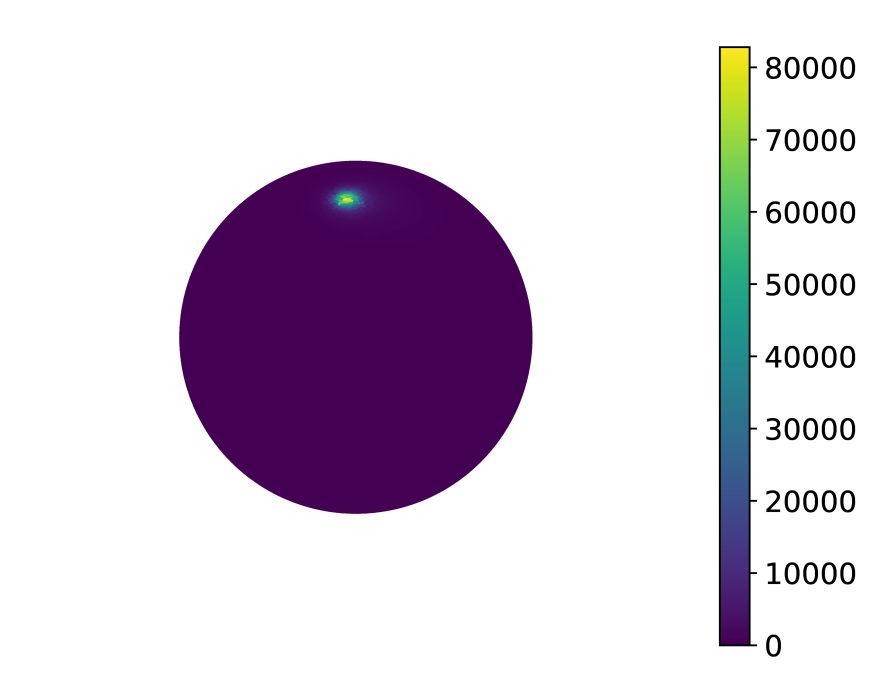}
        \caption{\texttt{Metric $g_{00}$}}
    \end{subfigure}
    \hfill
    \begin{subfigure}[t]{0.22\textwidth}
        \centering
        \includegraphics[width=\linewidth]{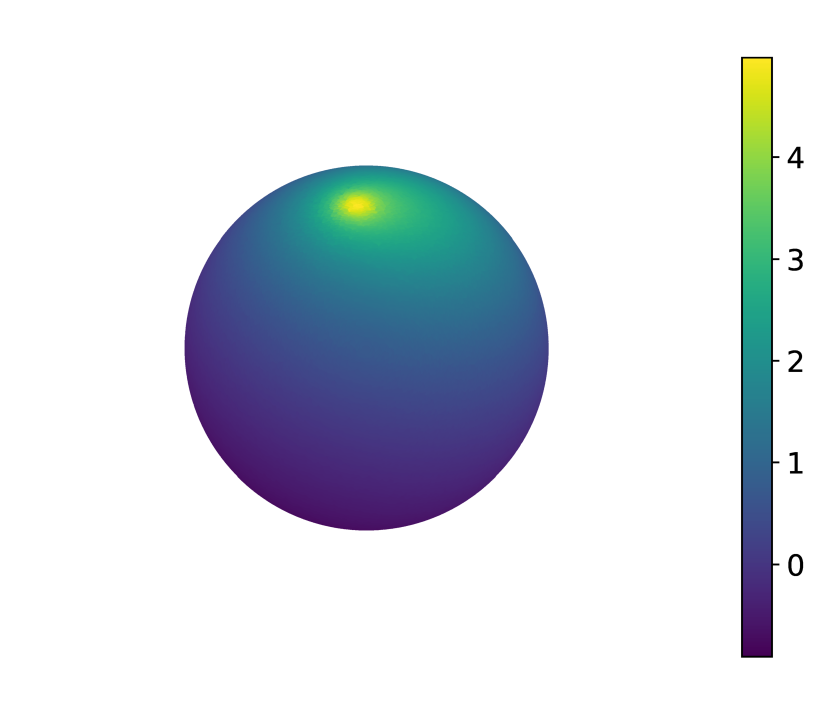}
        \caption{\texttt{u}}
    \end{subfigure}
    \hfill
    \begin{subfigure}[t]{0.22\textwidth}
        \centering
        \includegraphics[width=\linewidth]{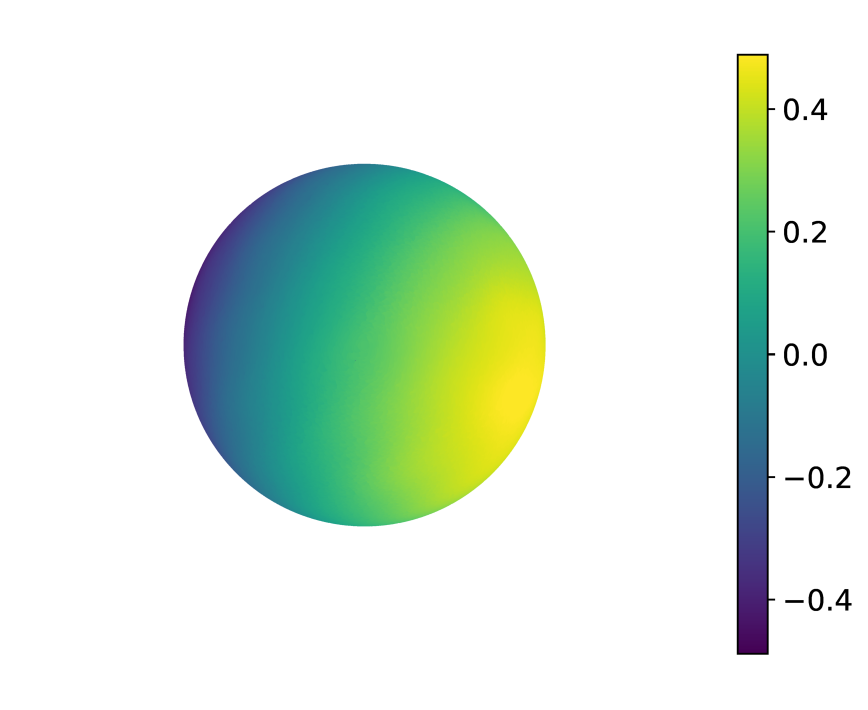}
        \caption{\texttt{Predicted R}}
    \end{subfigure}
        \hfill
    \begin{subfigure}[t]{0.22\textwidth}
        \centering
        \includegraphics[width=\linewidth]{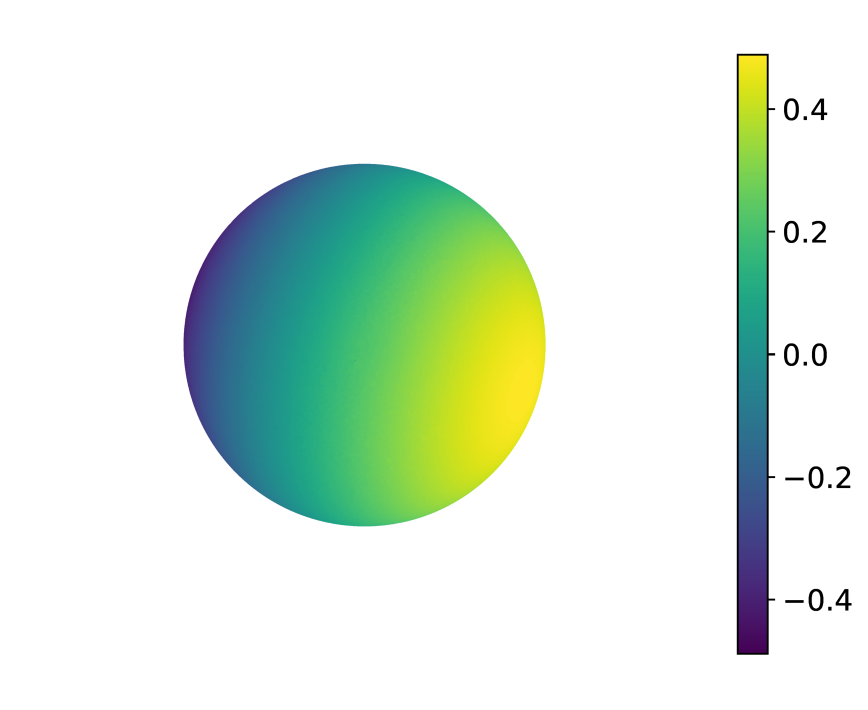}
        \caption{\texttt{Target R}}
    \end{subfigure}

          \vspace{1em}

    \noindent
    \colorbox{myLightgrey}{%
        \parbox{\dimexpr\textwidth-2\fboxsep}{%
            \centering\bfseries \texttt{sh\_3\_1}
        }%
    }    
    \vspace{1.0em}
        \begin{subfigure}[t]{0.22\textwidth}
        \centering
        \includegraphics[width=\linewidth]{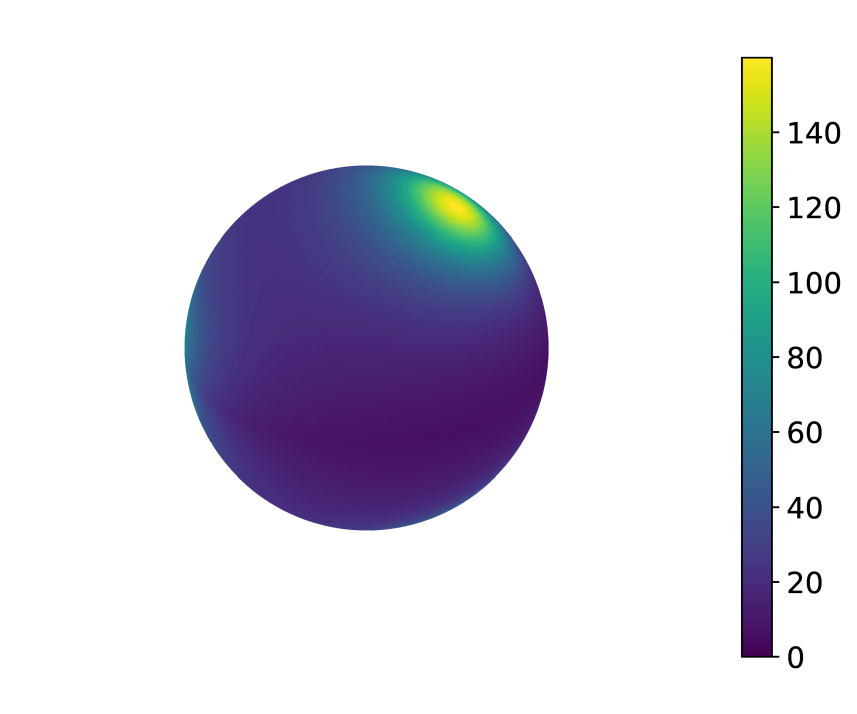}
        \caption{\texttt{Metric $g_{00}$}}
    \end{subfigure}
    \hfill
    \begin{subfigure}[t]{0.22\textwidth}
        \centering
        \includegraphics[width=\linewidth]{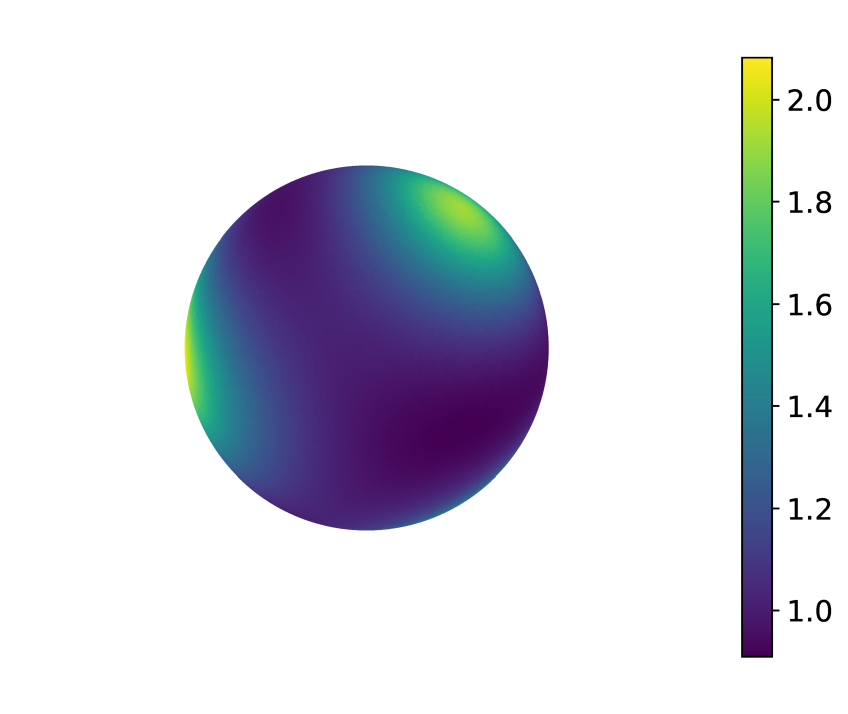}
        \caption{\texttt{u}}
    \end{subfigure}
    \hfill
    \begin{subfigure}[t]{0.22\textwidth}
        \centering
        \includegraphics[width=\linewidth]{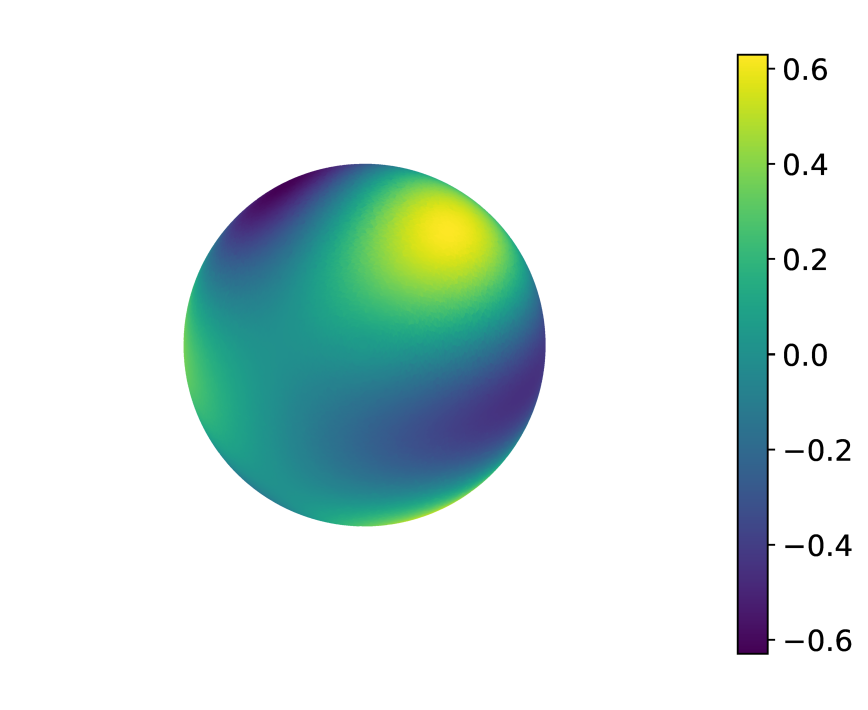}
        \caption{\texttt{Predicted R}}
    \end{subfigure}
        \hfill
    \begin{subfigure}[t]{0.22\textwidth}
        \centering
        \includegraphics[width=\linewidth]{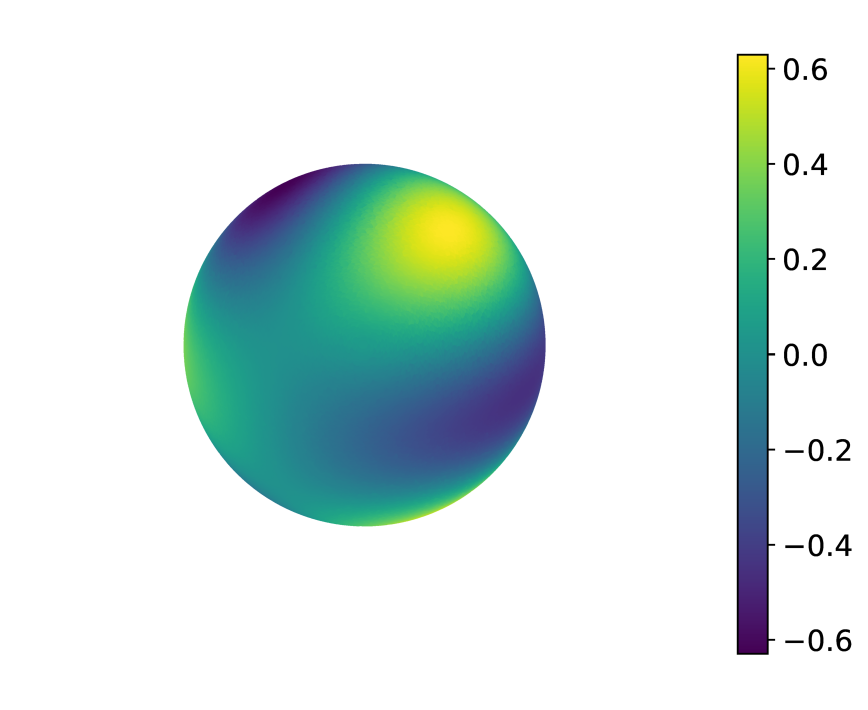}
        \caption{\texttt{Target R}}
    \end{subfigure}
    \caption{Metric component $g_{00}$, conformal factor $u$, and predicted and target curvature $R$.} 
    \label{fig: spherical harmonic plots}
\end{figure*}

\subsubsection{Motivating Lack of Existence}

In addition to validating realisability, the Nirenberg Neural Network provides corroborating evidence \textit{against} the existence of solutions. That being said, drawing a lack of existence is perhaps more challenging: the inability to obtain solution does not prohibit its existence. Additional steps are taken to remediate this issue. A potential reason for the network’s failure to converge to a solution could be a lack of expressivity in the architecture. This is particularly relevant for spherical harmonics. Typically neural networks struggle to encode highly oscillatory behaviour such as that of the trigonometric functions which act as the building blocks of spherical harmonics. In order to ameliorate this risk, a large number of random Fourier features (RFFs), described in Section \ref{sec:architecture}, are used to ensure sufficient expressibility. The identification of solutions for prescribers exhibiting similar complexity to those which do not yield results provides first-order reassurance that the network has the ability to encode the periodic behaviour of the trigonometric functions. To mitigate biases induced by the choice of network properties, a selection of identical hyperparameter configurations are used for each prescriber. The lowest losses are collated in Figure \ref{fig:sh_loss_scatter}. 

Note that $Y_{1,0}$ and $Y_{1,0}$ (i.e. \texttt{sh\_1\_0} and \texttt{sh\_1\_1}) are unrealisable. The high loss in Figure \ref{fig:sh_loss_scatter} and the pathological behaviour of the metric shown in Figure \ref{fig: spherical harmonic plots} confirms that the network is unable to construct such a metric. The lowest losses for the $Y_{1,0}$ and $Y_{1,1}$ prescribers are of a magnitude greater than for the known realisable prescribers. The lack of existence is further corroborated by the Gauss-Bonnet verification, as shown in Figure \ref{fig:sh_gauss_bonnet}; we find this test to be a very good detector for numerical artifacts and pathologies. The results of the analytic test outlined in the previous section associated with the terminal metric for the aforementioned spherical harmonics are 5 orders of magnitude larger than that of the lowest-loss known cases, and 3 orders of magnitude higher than the largest. This provides reassurance that even \textit{if} the `metrics' realised the desired curvature, they would not belong to the same topological class as the sphere.

There is a question as to why spherical harmonics over degree 3 were not studied. Empirically, it was found that for the hyperparameters set in \ref{sub:training_parameters}, the lack of expressibility of the network caused $l=5$ and $l=7$ losses to be misleadingly large. A small set of runs with more complex networks resulted in losses which were of similar order to the $l=3$ cases for some configurations of $m$, although a systematic exploration was not completed. Whilst running with a greater number of Fourier features allows greater expressivity and thus access to higher $l$, the decision to set the upper limit to $l = l_\text{upper}$ is nonetheless arbitrary, and future work with larger architectures and more compute power has the potential to address this. 
\subsection{General Functions}
\label{sub:results_general_funcs}
The collection of general prescribers studied is summarised in Tables \ref{tab:known_exist_general}, \ref{tab:known_noexist_general}, and \ref{tab:unknown_exist_general}. This set includes: (i) functions that are known to be solvable, either by explicit construction or via Theorem ~\ref{theo: rotationally symmetric solutions}; (ii) functions that are known to be unsolvable; and (iii) functions for which the solvability is unknown. In each case, the same training protocol and ensemble of hyperparameter choices described in the previous subsection is employed, with 50 random seed runs with two network complexity levels (see Appendix \ref{sub:training_parameters}), allowing for a direct comparison of achieved loss values across prescribers. All models are considered at the point where the training losses have converged. Figure~\ref{fig:general_loss_scatter} provides a global summary of these results by displaying, on a logarithmic scale, the minimum loss attained across all hyperparameter configurations for each general prescriber. Known realisable and unrealisable cases are marked by ticks and crosses respectively, while prescribers of unknown validity are denoted by question marks. The dashed horizontal line indicates the empirical loss threshold inferred from the ensemble of spherical harmonic experiments. Figure $\ref{fig: general function plots}$ visualizes the metric component $g_{00}$, conformal factor $u$, and predicted and target curvature $R$ for some of the considered functions. 

This section explores the application of the Nirenberg Neural Network to prescriber functions beyond spherical harmonics. Unlike the spherical harmonic case, where symmetry and spectral structure provide strong analytical guidance, the functions considered here are chosen to probe a broader range of geometric and analytic features, including monotonicity, sign-changing behaviour, degeneracy of critical points, and the validity conditions outlined in Section \ref{section: existence results}. 

\begin{table}[h]
\begin{tabularx}{\textwidth}{|p{0.28\textwidth}|p{0.21\textwidth}|X|}
\hline
\rowcolor{blue!30}
\textbf{Formula} & \textbf{Name} & \textbf{Comments/Reason} \\
\Xhline{2pt}

\rowcolor{blue!15}
\multicolumn{3}{|c|}{\textbf{Known to exist}} \\
\Xhline{2pt}

$3\cos^2\theta \; e^{(\cos^2\theta-\frac{1}{3})}$ & \texttt{prop\_a} & Proposition \ref{prop: spectral pairs}, with $c_{2,0}=1$. \\
\hline
$2\cosh(\cos\theta)$ & \texttt{cosh\_profile} & Theorem \ref{theo: rotationally symmetric solutions}\\
\hline
\ldots & \texttt{prop\_b}  & Proposition \ref{prop: spectral pairs}, with $c_{1,1}=1/6$, $c_{2,2}=1/18$, $c_{3,1}=3/11$. \\
\hline
\small $\tanh(2\cos^2\theta) + \cos^2\theta + 0.5$ & \texttt{tanh\_wave} & Theorem \ref{theo: rotationally symmetric solutions} \\
\hline
$(1.5 + \cos\theta)^2(1.2 - \cos\theta)$ & \texttt{egg} &  Theorem \ref{theo: rotationally symmetric solutions}. \\
\hline
$5\cos^3(\theta) - 3\cos(\theta)+1$ & \texttt{5cos3\_1} & Theorem \ref{theo: rotationally symmetric solutions}\\
\hline
$\cos^2(\theta)+0.2$ & \texttt{cos2\_theta\_plus\_offset} &Theorem \ref{theo: rotationally symmetric solutions}\\
\hline
\ldots & \texttt{prop\_c}  & Proposition \ref{prop: spectral pairs}, with $c_{1,1}=2$, $c_{2,2}=6$, $c_{3,1}=12$. \\
\hline
$1+0.5\cos(3\theta)$ & \texttt{sinusoidal} &  Theorem \ref{theo: rotationally symmetric solutions} \\
\hline
\end{tabularx}
\caption{Table of prescribers which are solvable.}
\label{tab:known_exist_general}
\end{table}
\medskip

\begin{table}[ht!]
\begin{tabularx}{\textwidth}{|p{0.24\textwidth}|p{0.25\textwidth}|X|}
\hline
\rowcolor{blue!30}
\textbf{Formula} & \textbf{Name} & \textbf{Comments/Reason} \\
\Xhline{2pt}

\rowcolor{blue!15}
\multicolumn{3}{|c|}{\textbf{Known not to exist}} \\
\Xhline{2pt}
\hline
$\exp(2 \cos \theta)$ & \texttt{monotonic\_exp} & Theorem \ref{theo: rotationally symmetric solutions} (No sign change of $f'$) \\
\hline
$\cos^2 \theta - 0.9$ & \texttt{nodal\_crossing} & Theorem \ref{theo: rotationally symmetric solutions} ($f'$ does only change sign for $f<0$) \\
\hline
$2 + y$ & \texttt{two\_plus\_y} & $\nabla K\cdot\nabla y = |\nabla y|^2$, violating \eqref{eq:KW_identity}. \\
\hline
$2 + x$ & \texttt{two\_plus\_x} & $\nabla K\cdot\nabla x = |\nabla x|^2$, violating \eqref{eq:KW_identity}. \\
\hline
$z + \frac{1}{4}x^2$ & \texttt{z\_plus\_x\_squared\_over\_4} \; & $\nabla_{\partial\theta}K = \sin\theta(\cos\theta\cos^2\phi-1)\le 0$, violating \eqref{eq:KW_identity}. \\
\hline
$z + xy$ & \texttt{z\_plus\_xy} & $\nabla_{\partial\theta}K = \sin\theta(\cos\theta\sin2\phi-1)\le 0$, violating Kazdan-Warner Identity. \\
\hline
$-(1 + \cos^2 \theta)^{-1}$ & \texttt{negative\_dip} & Theorem \ref{theo: rotationally symmetric solutions} (The necessary condition $f > 0$ somewhere is violated.) \\
\hline
\end{tabularx}
\caption{Table of prescribers which are unsolvable.}
\label{tab:known_noexist_general}
\end{table}
\medskip

\begin{table}[ht!]
\begin{tabularx}{\textwidth}{|p{0.35\textwidth}|X|}
\hline
\rowcolor{blue!30}
\textbf{Formula} & \textbf{Name}\\
\Xhline{2pt}

\rowcolor{blue!15}
\multicolumn{2}{|c|}{\textbf{Unknown existence}}\\
\Xhline{2pt}
\hline
$x^2-y^2+z$ & \texttt{x2\_minus\_y2\_plus\_z} \\
\hline
$x^2+2y^2+3z^2+x+y$ & \texttt{x2\_plus\_2yz\_plus\_3x2\_plus\_x\_plus\_y} \\
\hline
$x^2+yz+x$ & \texttt{xz\_plus\_yz\_plus\_x} \\
\hline
$x^2+yz+x+1$ & \texttt{x2\_plus\_yz\_plus\_x\_plus\_1} \\
\hline
$xy+x+y$ & \texttt{xy\_plus\_x\_plus\_y} \\
\hline
$y^2+xz+z-1$ & \texttt{yz\_plus\_xz\_plus\_z\_minus\_1} \\
\hline
\end{tabularx}
\caption{Collection of general prescribers with unknown realisability.}
\label{tab:unknown_exist_general}
\end{table}
\newpage

\paragraph{Known validity prescribers}\mbox{}\\
For prescribers whose validity is known analytically, the network behaviour is fully consistent with theoretical expectations. As an example, the prescribers \texttt{prop\_a}, \texttt{prop\_b}, and \texttt{prop\_c}, arising directly from Proposition~\ref{prop: Classification zonal spherical harmonics}, all yield terminal losses that lie well below the empirical cut-off threshold. These losses are comparable in magnitude to those observed for realisable spherical harmonics, suggesting that the network successfully constructs conformal factors whose induced curvature closely matches the prescribed function. The prescribers included here span a slew of qualitative behaviours starting simply with rotationally symmetric profiles to more complicated non-linear functions. This indicates the observed convergence is not restricted to a narrow functional class. The results are also supported visually for the \texttt{egg} and \texttt{sinusoidal} functions in Figure \ref{fig: general function plots}.  

On the other hand, prescribers that are known to violate necessary conditions, such as \texttt{negative\_dip} (strictly negative), and \texttt{nodal\_crossing} (derivative changes sign only at points where the function itself is consistently negative) produce losses of higher order than the known realisable prescribers. This mirrors the behaviour observed in Section~\ref{sub:results_spherical_harmonics} and reinforces the interpretation that elevated loss values correspond to genuine geometric obstructions, rather than to deficiencies in network expressivity or optimisation. 

Figure \ref{fig:general_gauss_bonnet} shows the Gauss-Bonnet violations for the more general class of functions; these results are reassuringly consistent with the findings in the spherical harmonic case. The \texttt{nodal\_crossing} and \texttt{monotonic\_exp} prescribers (which are closest to loss threshold) result in metrics with a Gauss-Bonnet violation 3 orders of magnitude larger than the metrics of known realisable prescribers. This provides definitive proof that the terminal metrics are not valid metrics on the sphere. Figure \ref{fig: general function plots} further supports this idea, as it shows that the learnt metric of  \texttt{nodal\_crossing} and \texttt{negative\_dip} has some pathological behaviour. 

\paragraph{Unknown validity prescribers}\mbox{}\\
The most interesting regime is formed by prescribers whose validity is \textit{not} known analytically. Beyond illustrating a clear separation between valid and invalid prescribers, Figure \ref{fig:general_loss_scatter} reveals a notable clustering phenomenon: prescribers believed to admit solutions not only fall below the threshold, but do so with losses of comparable magnitude. This clustering suggests that, once a solution exists, the difficulty of optimisation is largely insensitive to the detailed functional form of the prescriber. In contrast, prescribers without solutions populate a broad band of higher losses, with no indication of convergence toward the realisable regime even under extensive hyperparameter variation. 

A more granular inspection of the loss values in Figure~\ref{fig:general_loss_scatter} allows for tentative existence statements to be made. The \texttt{x2\_plus\_yz\_plus\_x\_plus\_1} prescriber achieve a minimum loss that lies comfortably below the empirical threshold from the cases where existence is known. Moreover, the loss is comparable in magnitude to those obtained for prescribers known to admit solutions. This close alignment suggests that, within the resolution afforded by the present network, this function likely admits conformal metrics whose Gaussian curvature realises the prescribed profile. For the \texttt{x2\_plus\_yz\_plus\_x\_plus\_1} prescriber, the Gauss-Bonnet test provides further reassurance with a relative error of 0.07\% deviation from the expected value. 

On the other hand, the best performing model for the \texttt{x2\_plus\_yz\_plus\_x} prescriber has a conformal loss which sits just below the threshold; alone, this is not compelling evidence that the prescriber yields a valid metric. The statistical error of other runs encroaches the region occupied by the \texttt{x2\_plus\_yz\_plus\_x} prescriber. Moreover, the Gauss-Bonnet test presents a large error, 5 orders of magnitude larger than the known existence cases, suggesting the Nirenberg architecture did not find a valid metric on the sphere. This is further supported by the visualization in Figure \ref{fig: general function plots} that suggest that the converged metric explodes at a point. From the ensemble of runs, it is unlikely the \texttt{x2\_plus\_yz\_plus\_x} prescriber yields a metric within the expressivity limits of the network which is able to solve the Nirenberg PDE. 

In contrast, a number of polynomial prescribers exhibit losses that lie significantly \emph{above} the empirical threshold, with no indication of convergence toward the realisable regime across the ensemble of hyperparameter configurations. Notably, the prescribers \texttt{y2\_plus\_xz\_plus\_z\_minus\_1}, \texttt{x2\_plus\_2y2\_plus\_3z2\_plus\_x\_plus\_y}, \texttt{xy\_plus\_x\_plus\_y}, and \texttt{x2\_minus\_y2\_plus\_z} all produce minimum losses that exceed the threshold by several orders of magnitude. Since Figure \ref{fig:general_loss_scatter} shows minimum losses and the sweep was performed over a series of model architectures, the elevated values for these prescribers are not attributable to optimisation noise or insufficient expressivity of the network. Figure \ref{fig:general_gauss_bonnet} shows each of these functions possesses a Gauss-Bonnet error orders of magnitude higher than the gap, further reaffirming that the terminal metric associated with the aforementioned prescribers does not present as a solution to the the Nirenberg PDE. In the case of \texttt{xy\_plus\_x\_plus\_y}, the visualization in Figure~\ref{fig: general function plots} provides additional evidence that the network has converged to a geometrically non-realisable solution. 

The persistent failure of the network to approximate these prescribers at a level comparable to known solutions provides compelling numerical evidence against the existence of conformal metrics realising the aforementioned curvature functions. While such results cannot constitute a proof of non-existence, the consistency of the outcome across multiple architectures and initialisations suggests that there may not exist a realisable metric for these cases. As in the spherical harmonic setting, the sharp contrast in loss magnitude relative to the calibrated threshold supports the interpretation of the neural network loss as a meaningful heuristic indicator of solvability for the Nirenberg problem, and as a means of identifying promising candidates for future analytical study.

\begin{figure}[t!]
    \centering
    \includegraphics[width=1.0\linewidth]{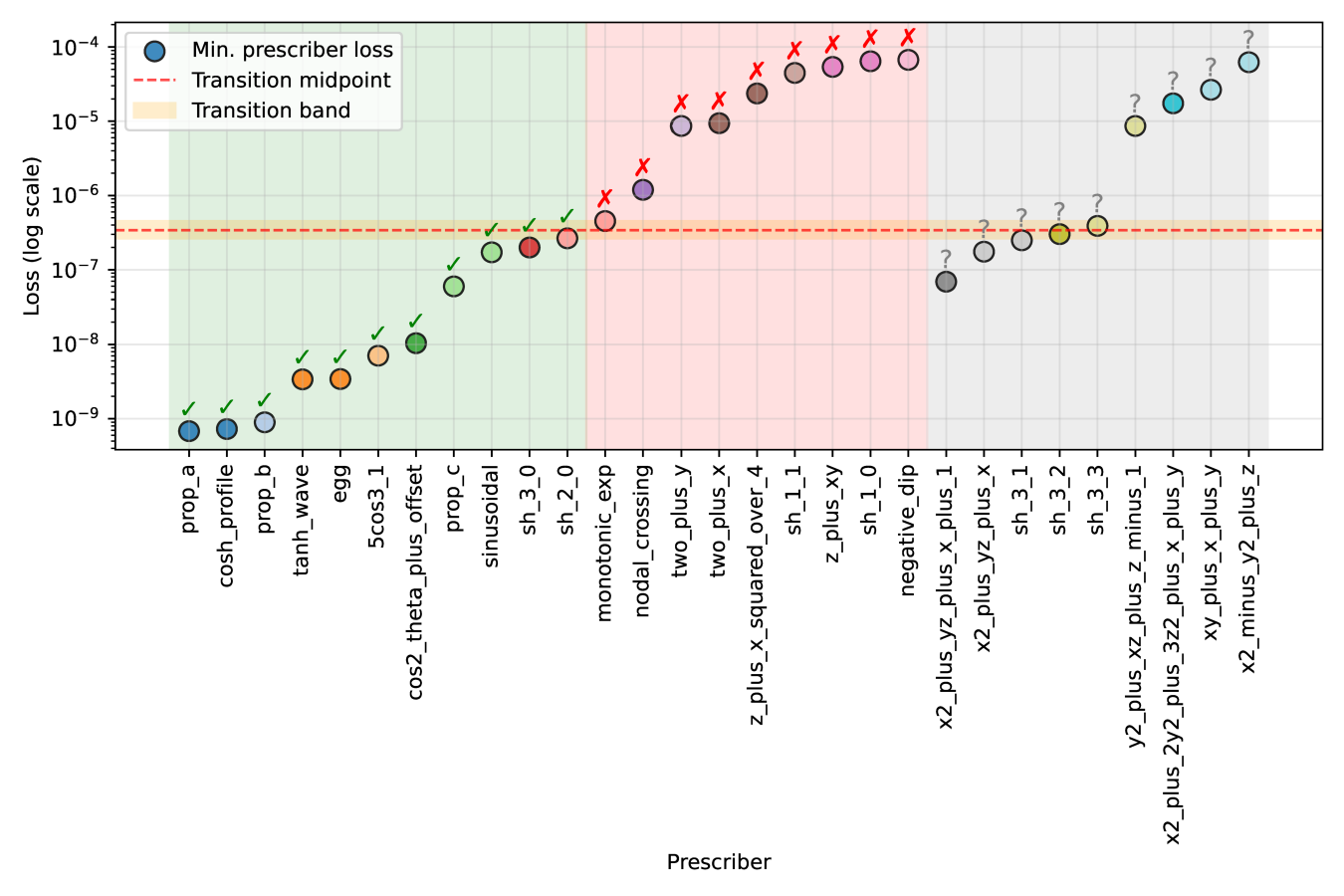}
    \caption{Minimum losses for general prescriber functions, including spherical harmonics (Tables \ref{tab:spherical_harmonics},  
    \ref{tab:known_exist_general}, \ref{tab:known_noexist_general}, and \ref{tab:unknown_exist_general}). Known realisable and unrealisable solutions are shown as ticks and crosses respectively. Unknown solutions are represented by `?'. The dashed line represents the empirically predicted transition of realisability.}
    \label{fig:general_loss_scatter}
\end{figure}

\begin{figure} [t!]
    \centering
    \includegraphics[width=0.99\linewidth]{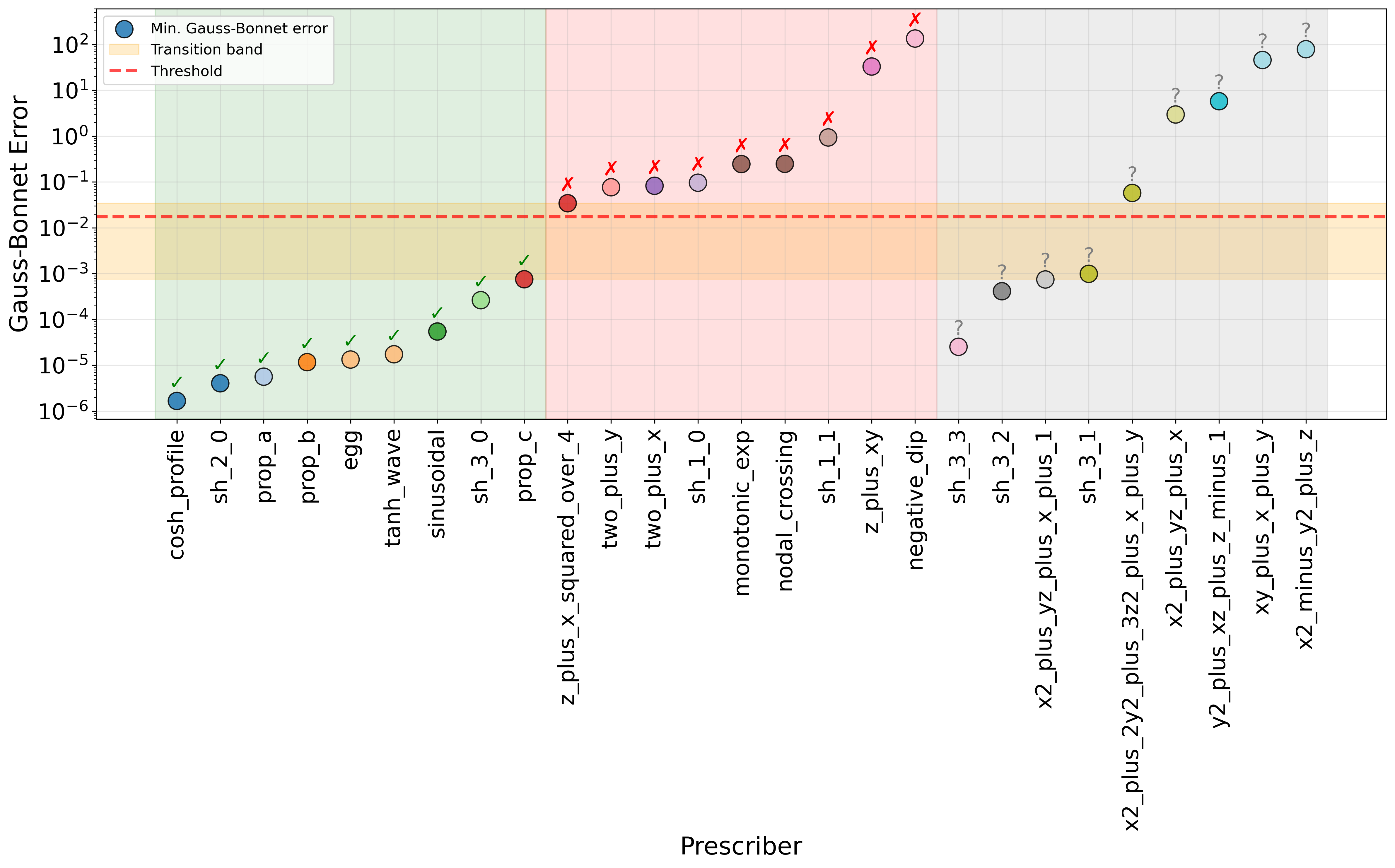}
    \caption{Gauss--Bonnet error for the general prescriber functions, including spherical harmonics (Tables \ref{tab:spherical_harmonics},  
    \ref{tab:known_exist_general}, \ref{tab:known_noexist_general}, and \ref{tab:unknown_exist_general}). These refer to the same runs as those in Figure \ref{fig:general_loss_scatter}, the style and notation is also the same.}
    \label{fig:general_gauss_bonnet}
\end{figure}

\begin{figure*}[ht!]
    \centering

    \noindent
    \colorbox{myLightgreen}{%
        \parbox{\dimexpr\textwidth-2\fboxsep}{%
            \centering\bfseries Egg
        }%
    }
    \vspace{1.0em}
    \begin{subfigure}[t]{0.22\textwidth}
        \centering
        \includegraphics[width=\linewidth]{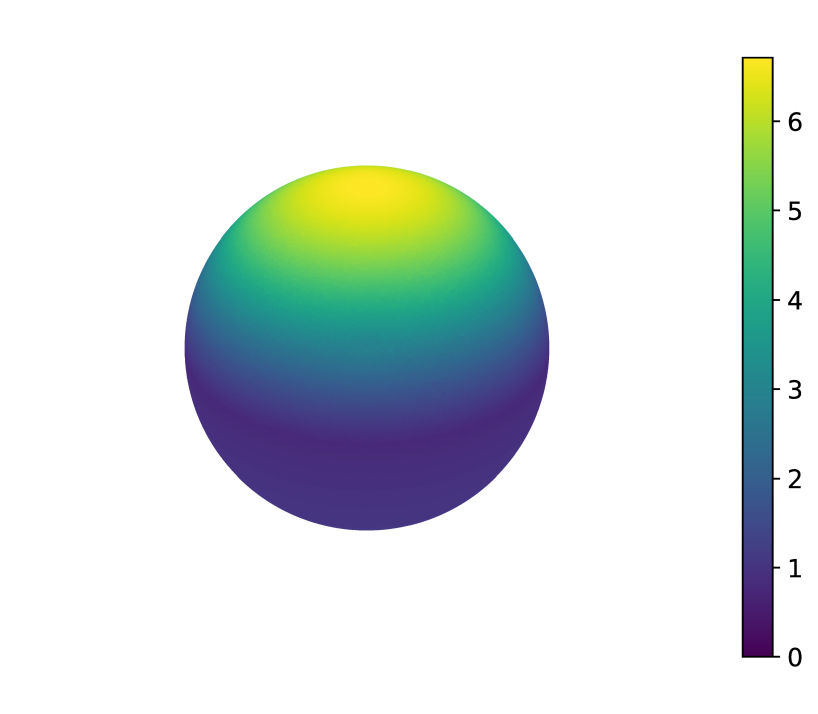}
        \caption{\texttt{Metric $g_{00}$}}
    \end{subfigure}
    \hfill
    \begin{subfigure}[t]{0.22\textwidth}
        \centering
        \includegraphics[width=\linewidth]{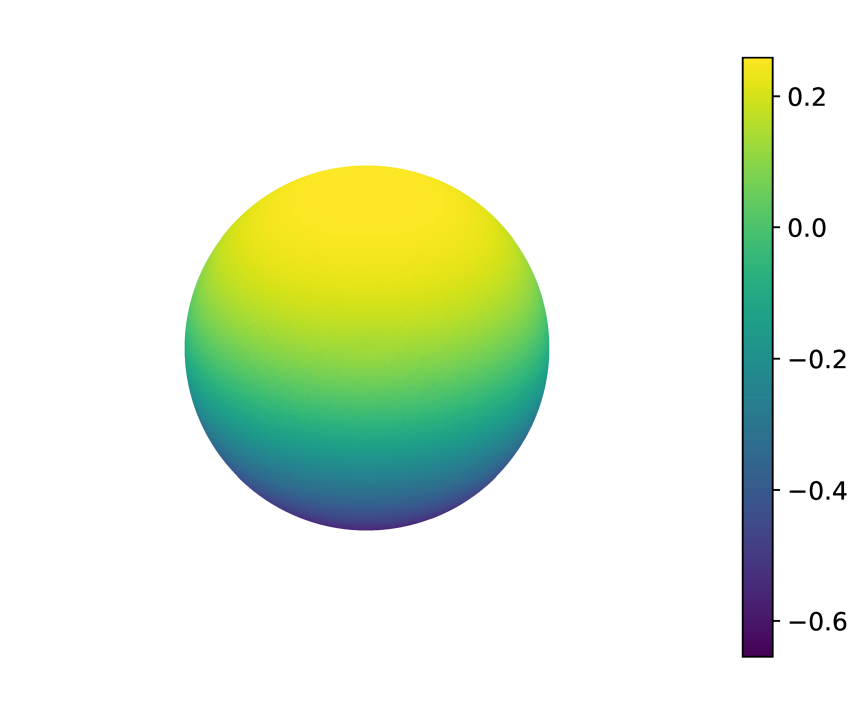}
        \caption{\texttt{u}}
    \end{subfigure}
    \hfill
    \begin{subfigure}[t]{0.22\textwidth}
        \centering
        \includegraphics[width=\linewidth]{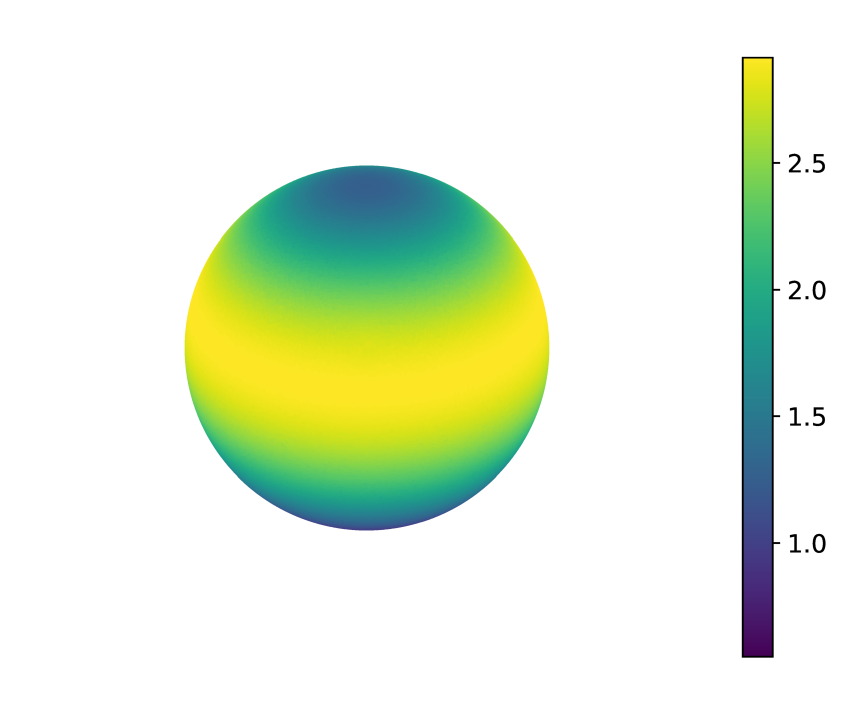}
        \caption{\texttt{Predicted R}}
    \end{subfigure}
        \hfill
    \begin{subfigure}[t]{0.22\textwidth}
        \centering
        \includegraphics[width=\linewidth]{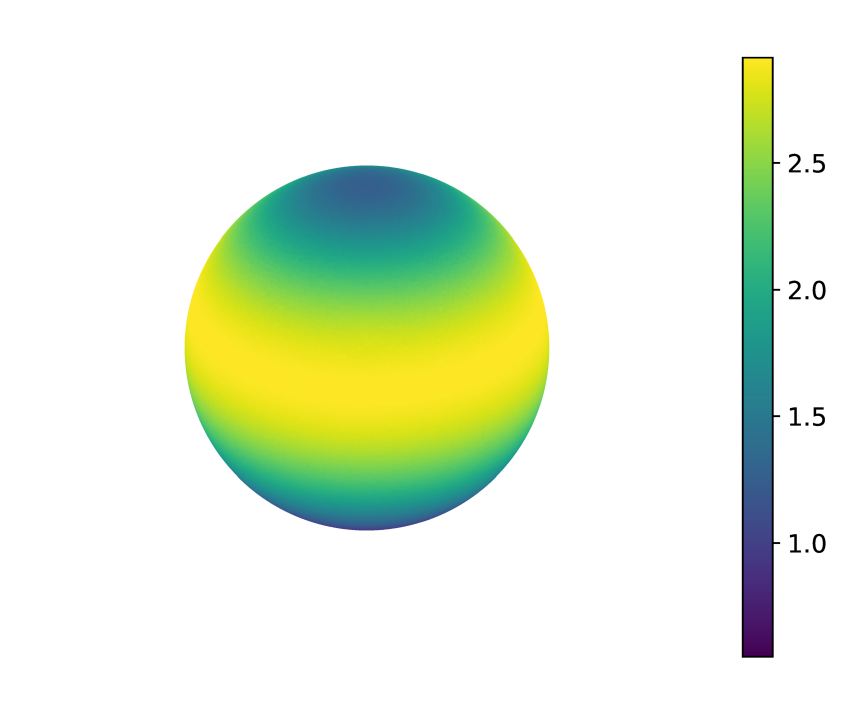}
        \caption{\texttt{Target R}}
    \end{subfigure}
    
    \vspace{1em}

    \noindent
    \colorbox{myLightgreen}{%
        \parbox{\dimexpr\textwidth-2\fboxsep}{%
            \centering\bfseries Sinusoidal
        }%
    }
    \vspace{1.0em}
        \begin{subfigure}[t]{0.22\textwidth}
        \centering
        \includegraphics[width=\linewidth]{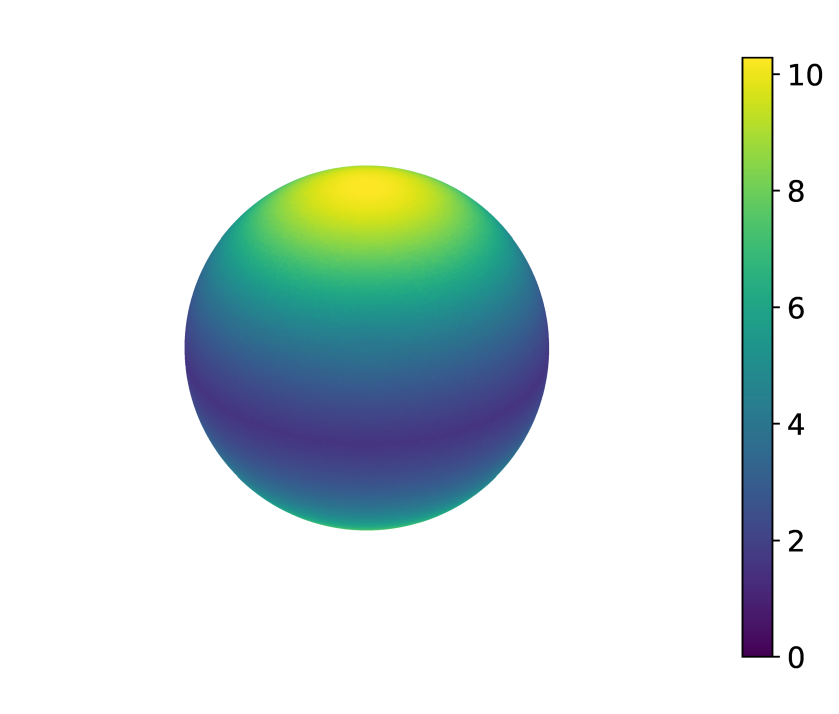}
        \caption{\texttt{Metric $g_{00}$}}
    \end{subfigure}
    \hfill
    \begin{subfigure}[t]{0.22\textwidth}
        \centering
        \includegraphics[width=\linewidth]{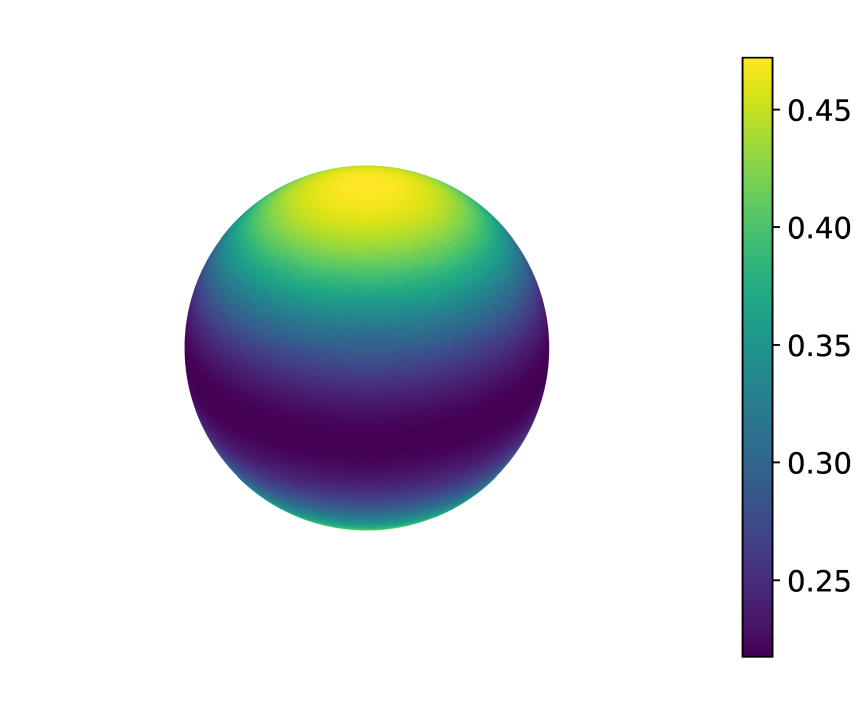}
        \caption{\texttt{u}}
    \end{subfigure}
    \hfill
    \begin{subfigure}[t]{0.22\textwidth}
        \centering
        \includegraphics[width=\linewidth]{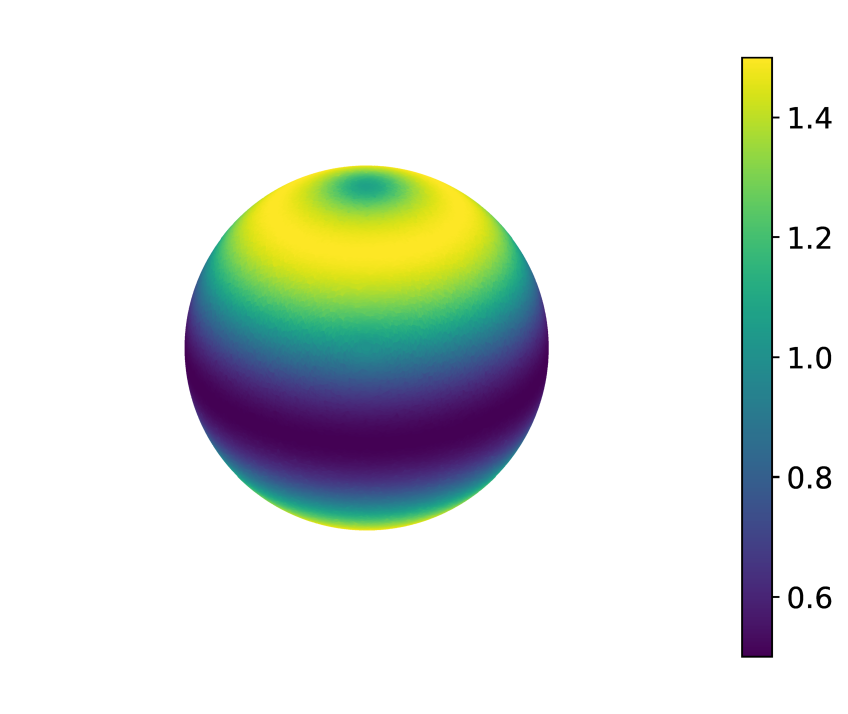}
        \caption{\texttt{Predicted R}}
    \end{subfigure}
        \hfill
    \begin{subfigure}[t]{0.22\textwidth}
        \centering
        \includegraphics[width=\linewidth]{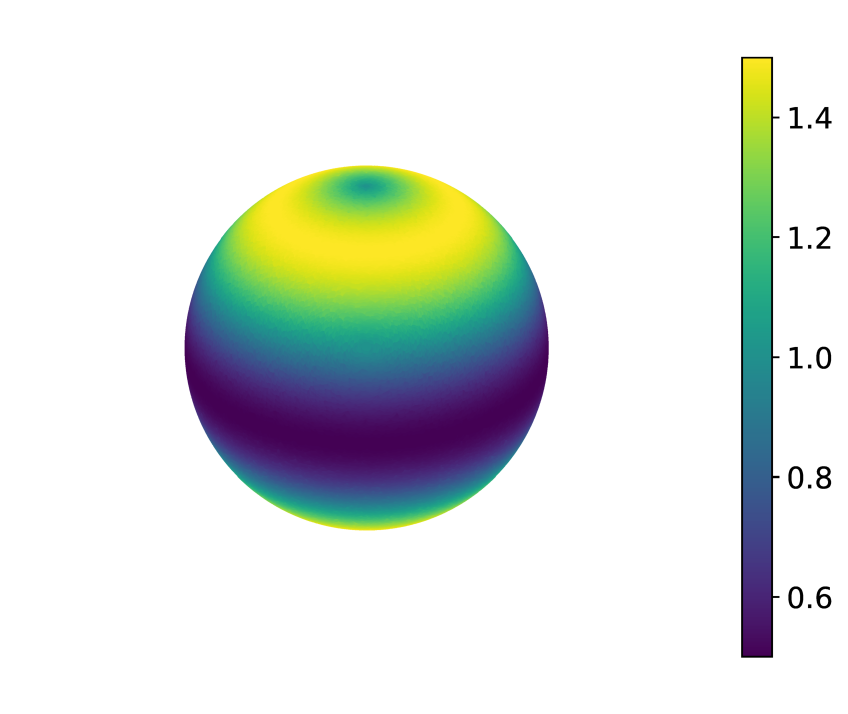}
        \caption{\texttt{Target R}}
    \end{subfigure}
\end{figure*}

\begin{figure*}[ht!]
    \centering

      \vspace{1em}
    \noindent
    \colorbox{red!15}{%
        \parbox{\dimexpr\textwidth-2\fboxsep}{%
            \centering\bfseries  \texttt{negative\_dip}
        }%
    }

    \vspace{1.0em}

        \begin{subfigure}[t]{0.22\textwidth}
        \centering
        \includegraphics[width=\linewidth]{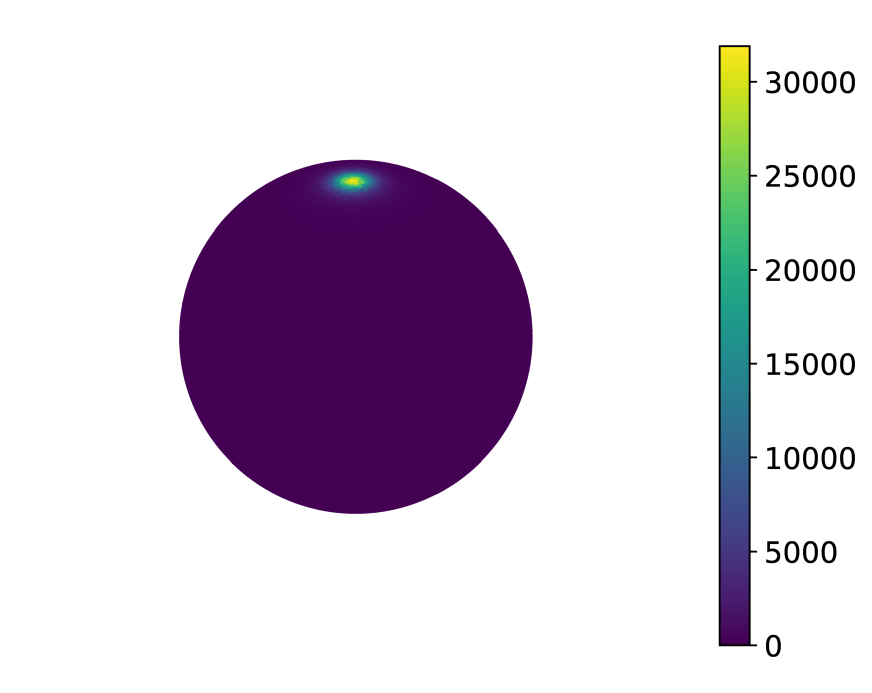}
        \caption{\texttt{Metric $g_{00}$}}
    \end{subfigure}
    \hfill
    \begin{subfigure}[t]{0.22\textwidth}
        \centering
        \includegraphics[width=\linewidth]{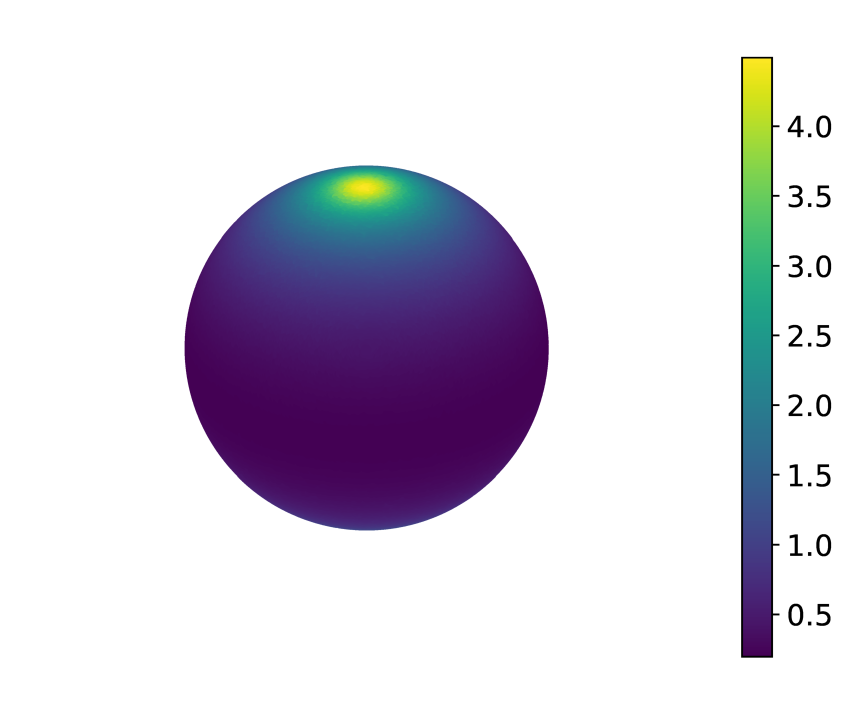}
        \caption{\texttt{u}}
    \end{subfigure}
    \hfill
    \begin{subfigure}[t]{0.22\textwidth}
        \centering
        \includegraphics[width=\linewidth]{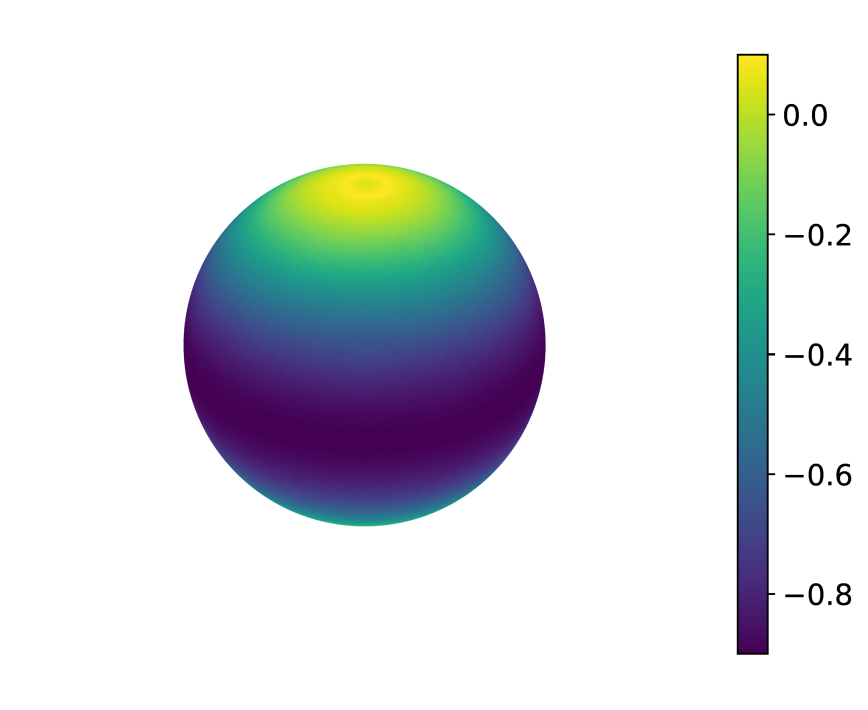}
        \caption{\texttt{Predicted R}}
    \end{subfigure}
        \hfill
    \begin{subfigure}[t]{0.22\textwidth}
        \centering
        \includegraphics[width=\linewidth]{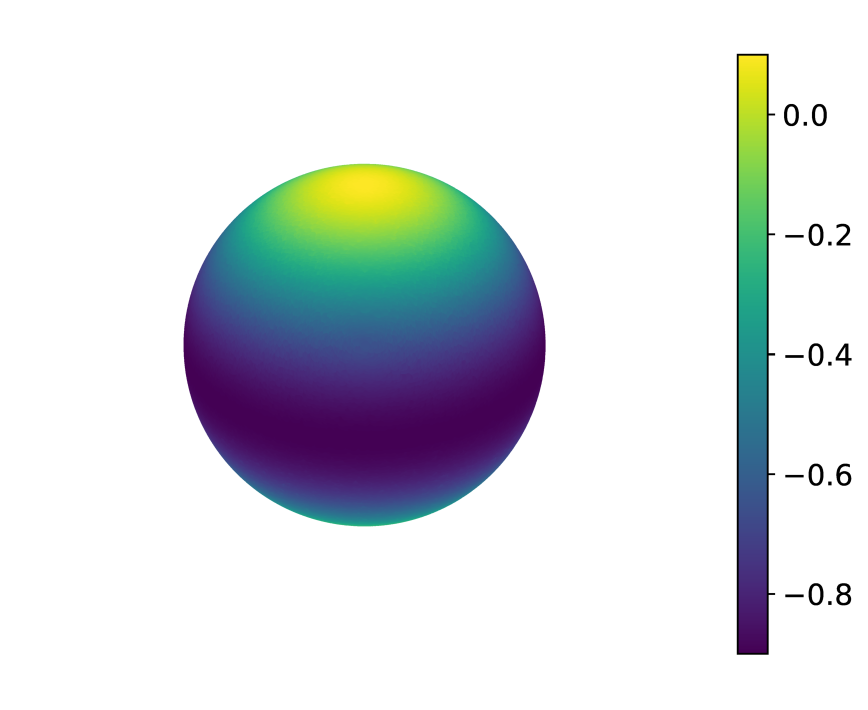}
        \caption{\texttt{Target R}}
    \end{subfigure}

      \vspace{1em}

    \noindent
    \colorbox{red!15}{%
        \parbox{\dimexpr\textwidth-2\fboxsep}{%
            \centering\bfseries \texttt{nodal\_crossing}
        }%
    }
    \vspace{1.0em}

        \begin{subfigure}[t]{0.22\textwidth}
        \centering
        \includegraphics[width=\linewidth]{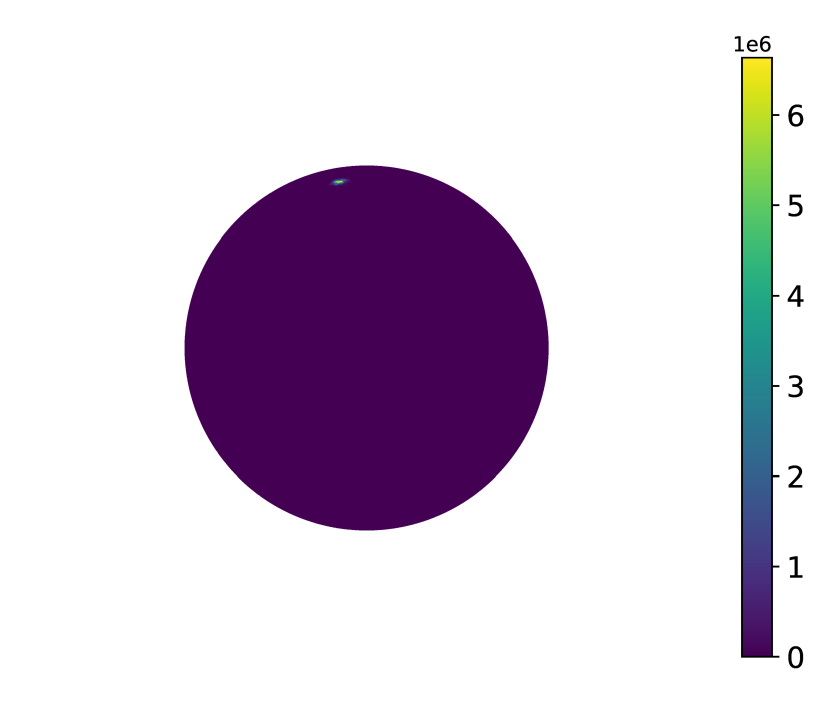}
        \caption{\texttt{Metric $g_{00}$}}
    \end{subfigure}
    \hfill
    \begin{subfigure}[t]{0.22\textwidth}
        \centering
        \includegraphics[width=\linewidth]{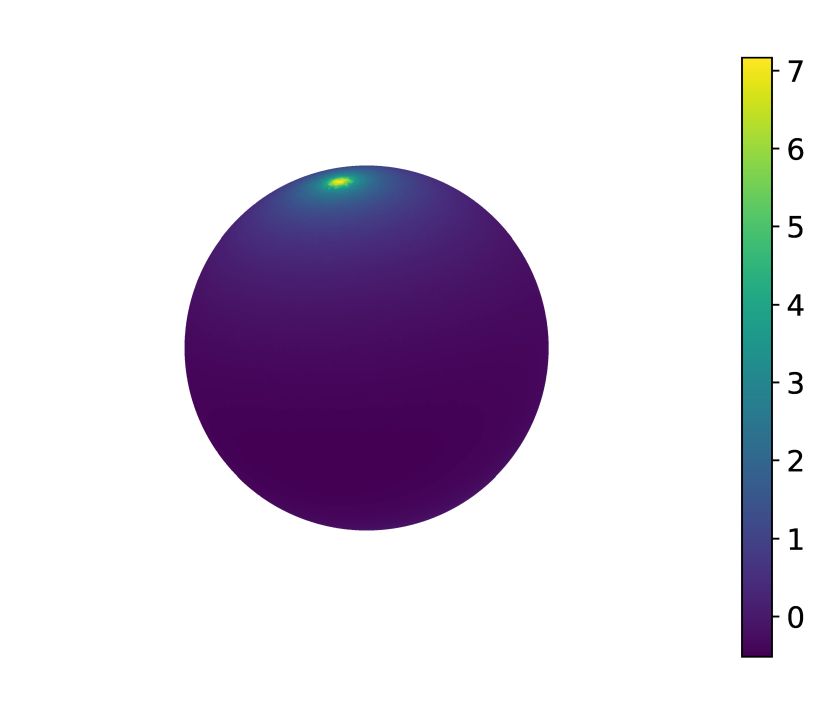}
        \caption{\texttt{u}}
    \end{subfigure}
    \hfill
    \begin{subfigure}[t]{0.22\textwidth}
        \centering
        \includegraphics[width=\linewidth]{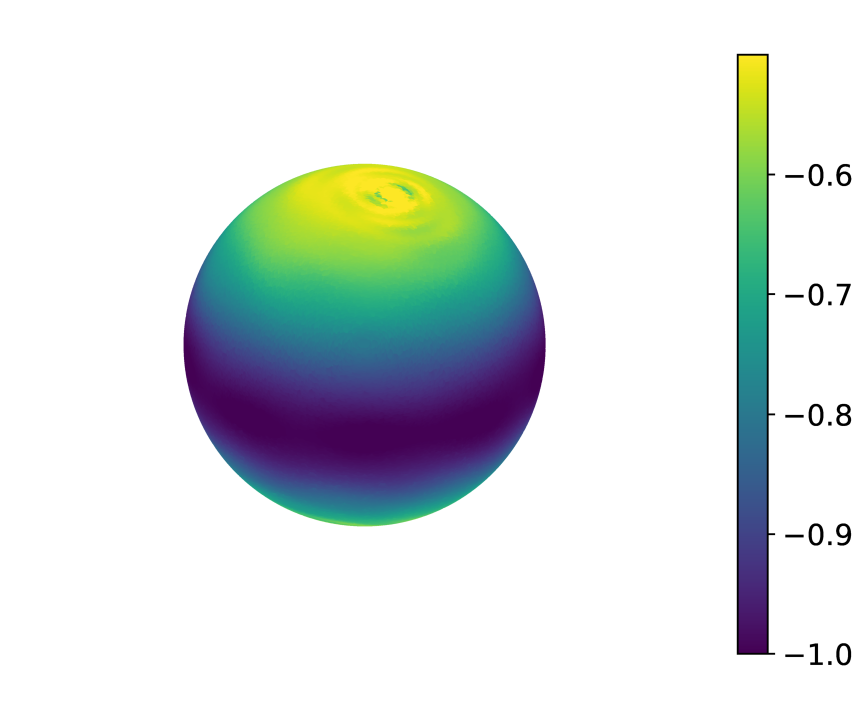}
        \caption{\texttt{Predicted R}}
    \end{subfigure}
        \hfill
    \begin{subfigure}[t]{0.22\textwidth}
        \centering
        \includegraphics[width=\linewidth]{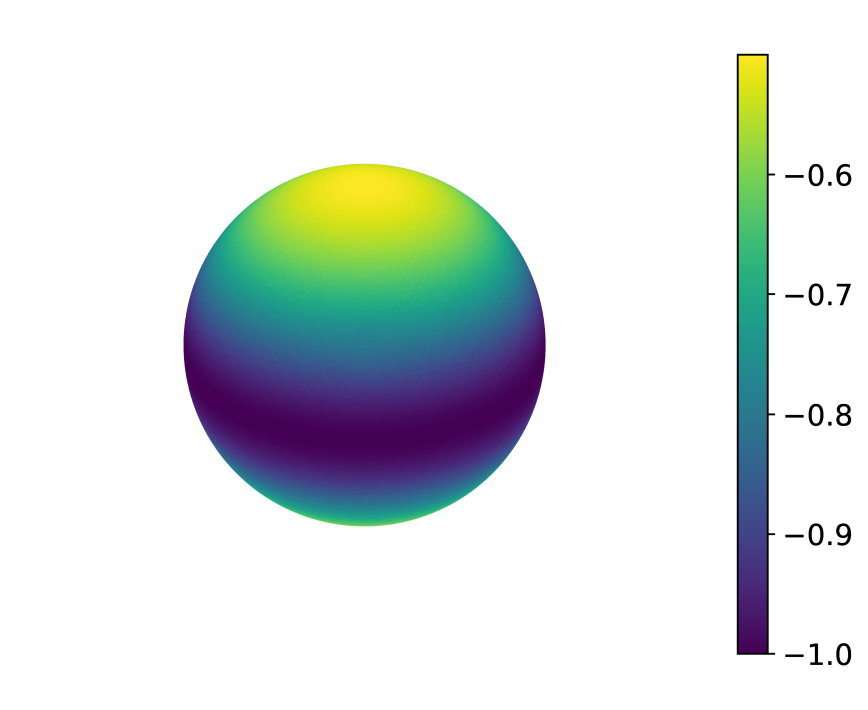}
        \caption{\texttt{Target R}}
    \end{subfigure}

    \vspace{1.0em}

    \noindent
    \colorbox{myLightgrey}{%
        \parbox{\dimexpr\textwidth-2\fboxsep}{%
            \centering\bfseries \texttt{x2\_plus\_yz\_plus\_x}
        }%
    }    
    \vspace{1.0em}
        \begin{subfigure}[t]{0.22\textwidth}
        \centering
        \includegraphics[width=\linewidth]{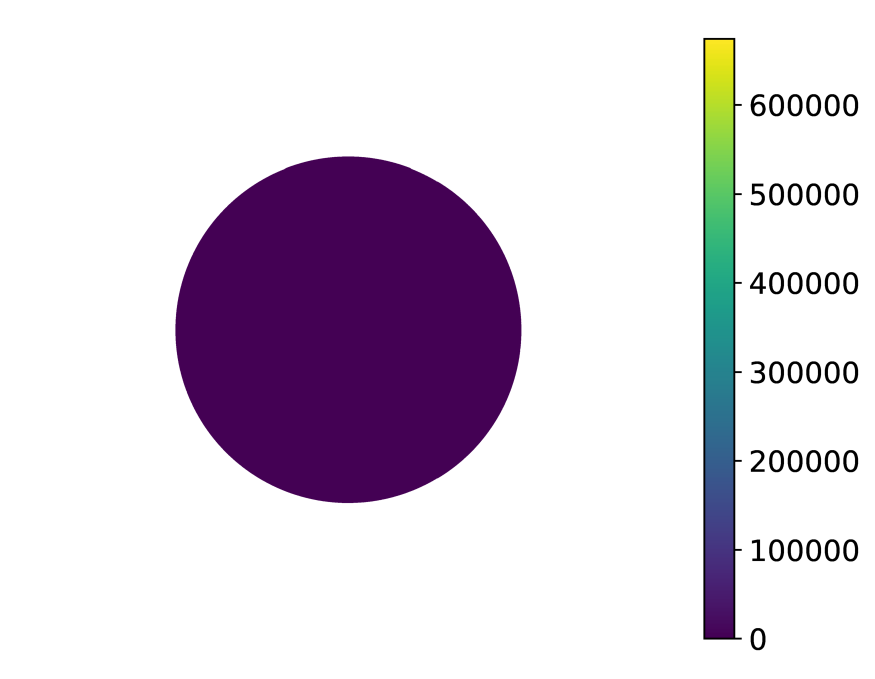}
        \caption{\texttt{Metric $g_{00}$}}
    \end{subfigure}
    \hfill
    \begin{subfigure}[t]{0.22\textwidth}
        \centering
        \includegraphics[width=\linewidth]{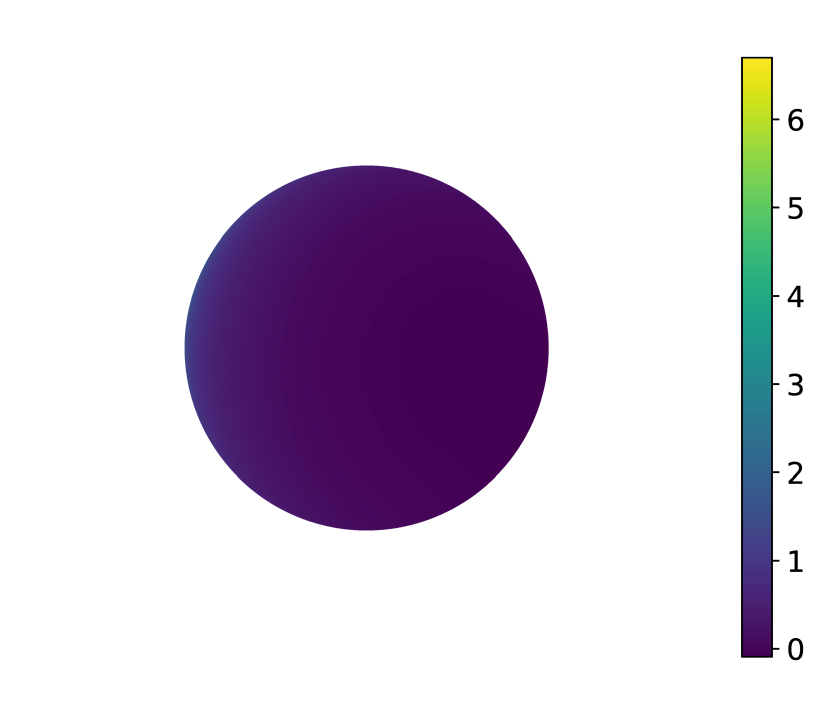}
        \caption{\texttt{u}}
    \end{subfigure}
    \hfill
    \begin{subfigure}[t]{0.22\textwidth}
        \centering
        \includegraphics[width=\linewidth]{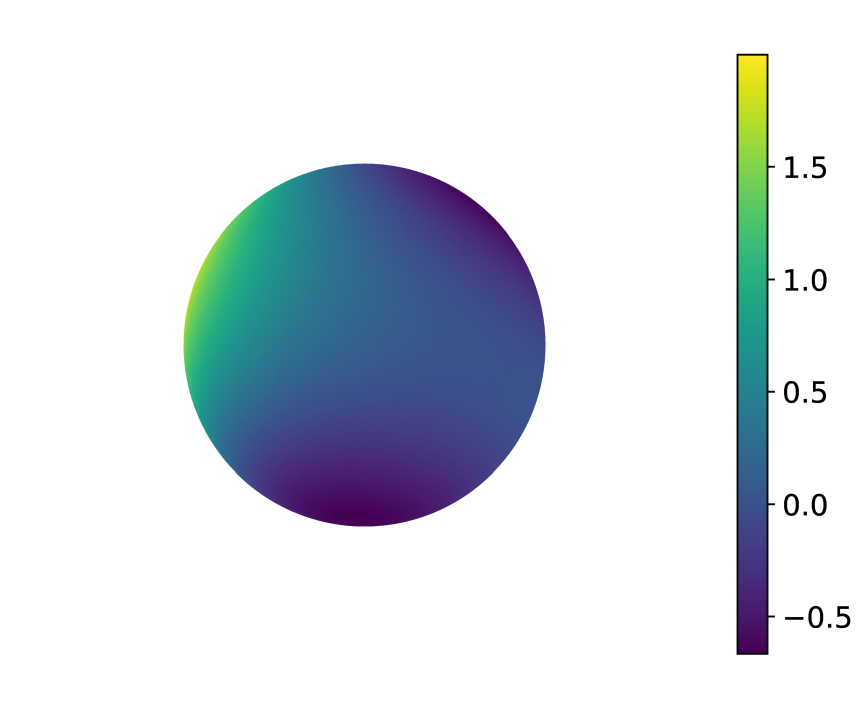}
        \caption{\texttt{Predicted R}}
    \end{subfigure}
        \hfill
    \begin{subfigure}[t]{0.22\textwidth}
        \centering
        \includegraphics[width=\linewidth]{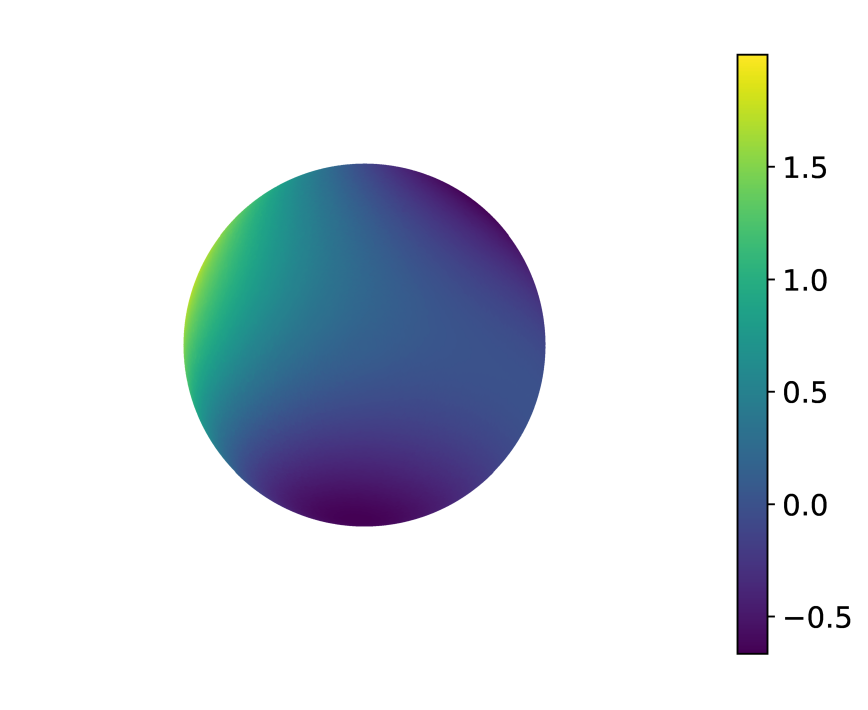}
        \caption{\texttt{Target R}}
    \end{subfigure}
   
      \vspace{1em}

    \noindent
    \colorbox{myLightgrey}{%
        \parbox{\dimexpr\textwidth-2\fboxsep}{%
            \centering\bfseries \texttt{xy\_plus\_x\_plus\_y}
        }%
    }    
    \vspace{1.0em}
        \begin{subfigure}[t]{0.22\textwidth}
        \centering
        \includegraphics[width=\linewidth]{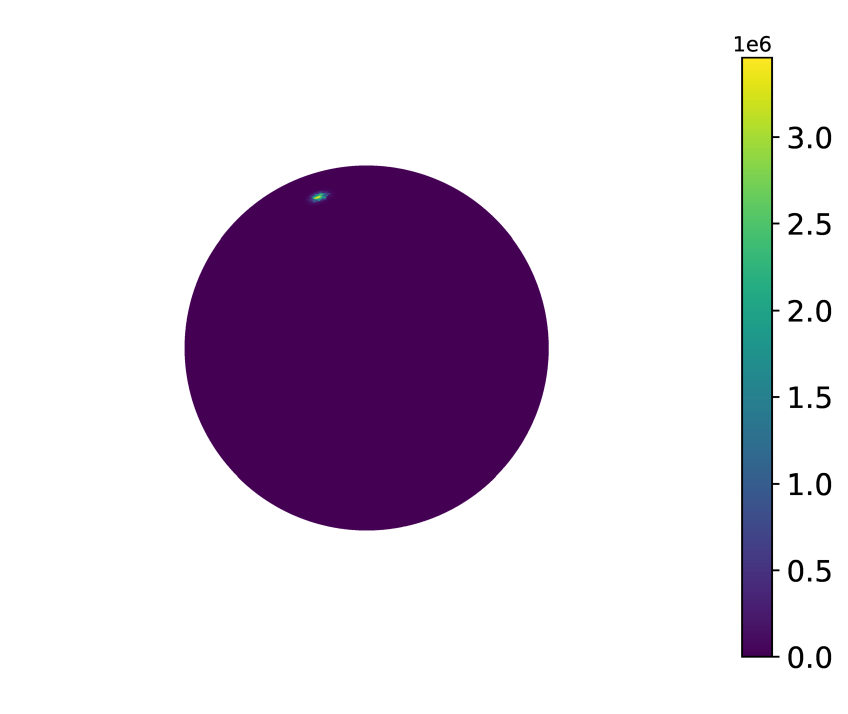}
        \caption{\texttt{Metric $g_{00}$}}
    \end{subfigure}
    \hfill
    \begin{subfigure}[t]{0.22\textwidth}
        \centering
        \includegraphics[width=\linewidth]{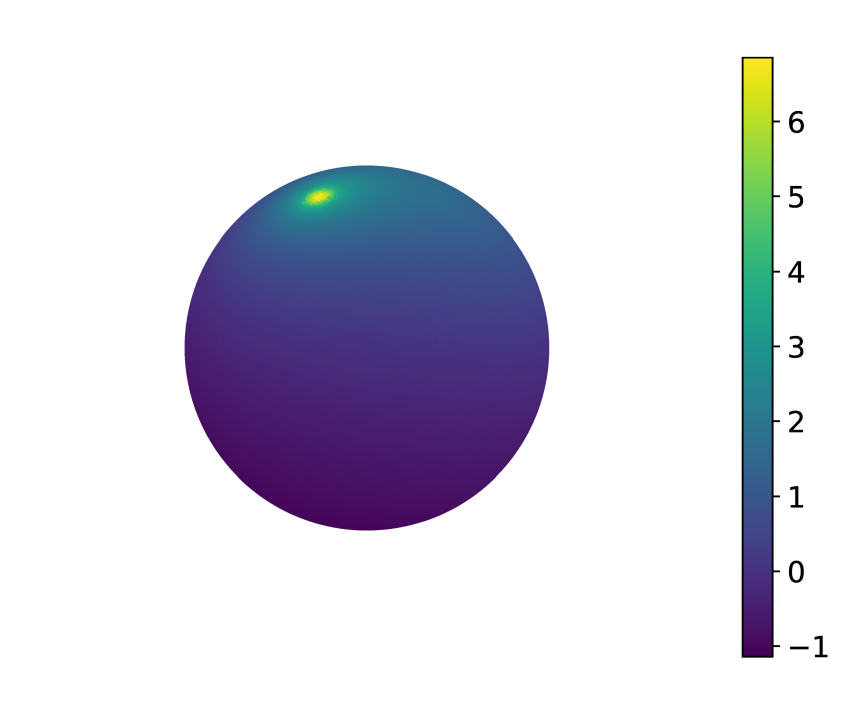}
        \caption{\texttt{u}}
    \end{subfigure}
    \hfill
    \begin{subfigure}[t]{0.22\textwidth}
        \centering
        \includegraphics[width=\linewidth]{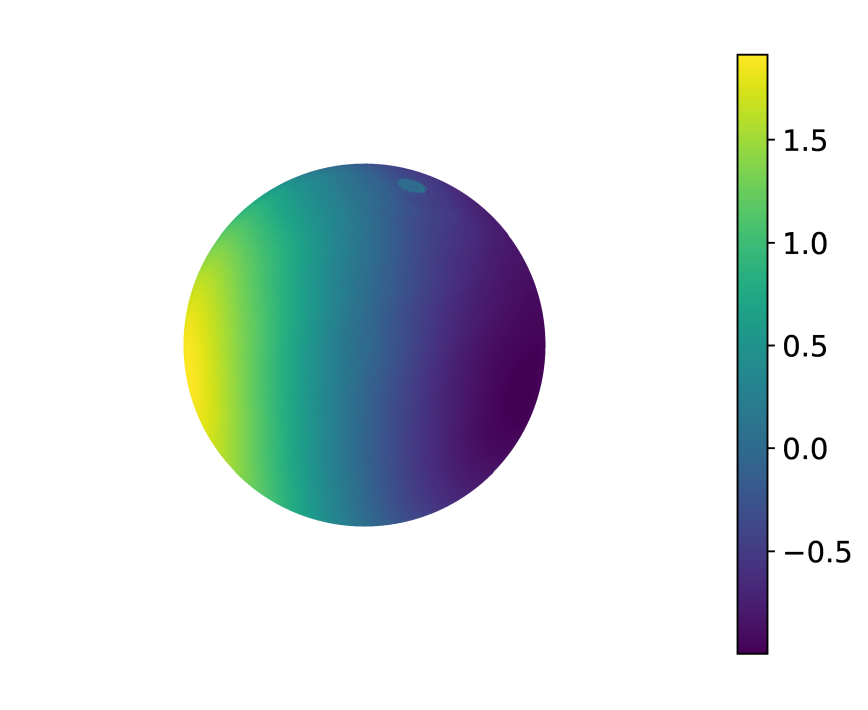}
        \caption{\texttt{Predicted R}}
    \end{subfigure}
        \hfill
    \begin{subfigure}[t]{0.22\textwidth}
        \centering
        \includegraphics[width=\linewidth]{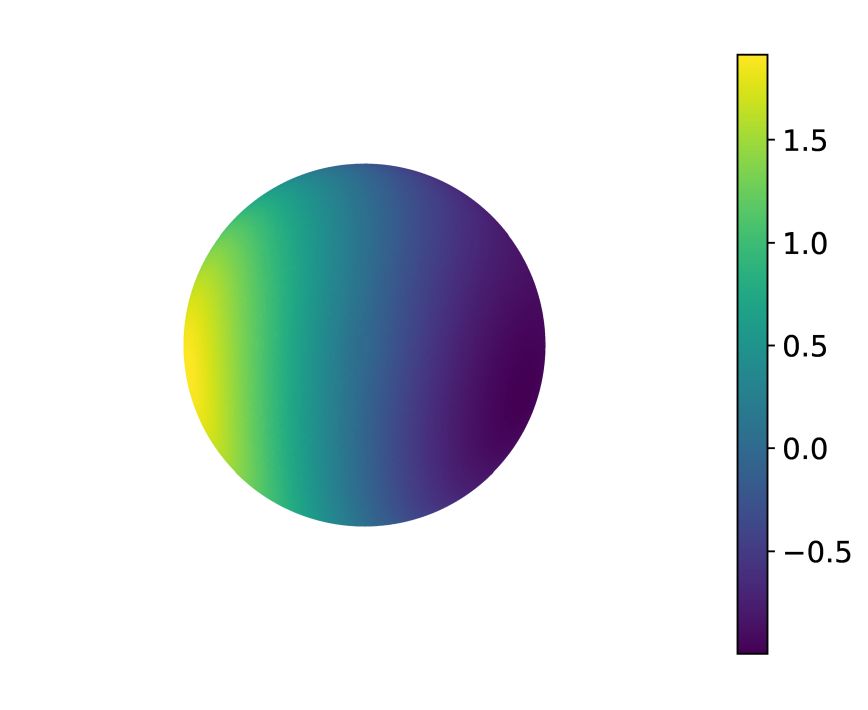}
        \caption{\texttt{Target R}}
    \end{subfigure}
    \caption{Metric $g_{00}$, conformal factor $u$, and predicted and target curvature $R$.} 
    \label{fig: general function plots}
\end{figure*}

\clearpage
\subsection{Interpretable Harmonic Expansion Approximations}
\label{sub:sh_expansion}
The result of the Nirenberg NN architecture training is models which approximate $u$, i.e. the conformal factor that yields the desired prescribed curvatures.
Distilling mathematical insight from these trained models is, however, still challenging. 

In this subsection, the spherical harmonic expansion ansatz for $u$, studied in Section \ref{sec:spectralpair}, is truncated and fitted to these trained NN models.
This truncated ansatz takes the form
\begin{equation}
    \tilde{u} = \sum_{\ell\le L}\sum_{m=-\ell}^{\ell} \frac{c_{\ell, m}}{\ell(\ell + 1)}Y_{\ell, m}\;,
\end{equation}
up to some chosen maximum order $L$.
Each $\tilde{u}$ is a smooth conformal factor on $S^2$ and hence produces an explicit smooth intrinsic metric 
$g_{\tilde{u}} = e^{2\tilde{u}}g_{\mathrm{round}}$.
Therefore, fitting a $\tilde{u}$ ansatz to a trained NN model provides an interpretable approximation of the learnt $u$, for which geometric quantities such as area, Laplace--Beltrami operators, and Gaussian curvature can be evaluated analytically and globally. 

The method works via first generating a new sample on $S^2$, for consistency 20,000 points were used. 
The trained NN model to be approximated with this ansatz is then imported, and the $\tilde{u}$ ansatz is created for a chosen $L$.
In these investigations $L = 4$ was used, which corresponds to 24 $c_{\ell,m}$ coefficients; these coefficients are then the parameters of $\tilde{u}$ that are to be optimised in the fitting process.
Then, over 500 epochs (with a 50 epoch early stopping patience), each batch of sample points has its $u$ value predicted by both the NN model, and the current form of the $\tilde{u}$ ansatz.
The MSE difference\footnote{The codebase also provides functionality for direct learning of these ansatz coefficients using the same NN loss in Equation~\eqref{eq:nn_loss}, without the intermediate NN. Loss performances were comparable with longer compute times, and additionally were worse than the NN learning losses.} in the $u$ predictions form the loss function for the fitting, with a minor L2 regularisation component.
This loss is minimised in a supervised manner using the Adam optimiser, with a cosine learning rate scheduler.

When training is completed, the set of 24 $c_{\ell, m}$ coefficients provide a directly interpretable approximation of the true $u$ which produces the desired prescribed curvature.
Additionally, since this learnt ansatz is of the exact form as Proposition \ref{prop: spectral pairs}, the prescribed curvature of the $\tilde{u}$ approximation, $\tilde{R}$, can be read off directly from the learnt coefficients.
Using this Spectral Pairs method of efficiently computing $\tilde{R}$ for the learnt $\tilde{u}$ ansatz, $\tilde{R}$ can be compared to the $R$ value of the original prescriber used to train the NN, on a new test sample.
This comparison produces another MSE measure at the level of the curvature, which quantifies how similar this learnt $\tilde{u}$ ansatz is to producing this desired curvature.

To first validate the harmonic expansion ansatz fitting, the NN models were fitted to the three Spectral Pair prescribers, \{\texttt{prop\_a}, \texttt{prop\_b}, \texttt{prop\_c}\}.
These are known to have valid and complete descriptions in terms of finitely many non-zero $c_{\ell,m}$ coefficients, due to their Spectral Pair construction process.
Then, this fitting was run for two other studied prescribers, \{\texttt{sh\_3\_2}, \texttt{x2\_plus\_yz\_plus\_x\_plus1}\}, whose realisability as scalar curvature on $S^2$ is unknown, yet the Nirenberg NN results in Section \ref{sub:results_spherical_harmonics} and Section \ref{sub:results_general_funcs} indicate are likely to be realisable, yet with unknown $u$ conformal factors.

Table \ref{tab:shexpansion_data}, shows the results of these fitting runs on a new test sample of 2000 points.
Listing the MSE fitting loss between the ansatz predicted $u$ value and that predicted by the NN model, then the MSE differences of the $\tilde{R}$ curvature to the original prescriber, and the final fitted $c_{\ell, m}$ coefficients.
The values of $R$ and $\tilde{R}$ at the test sample points are also plotted over the $S^2$ fundamental domain, for each original prescriber and fitted $\tilde{u}$ ansatz, in Figures \ref{fig:shexpansion_Rs_props} and \ref{fig:shexpansion_Rs_unknown} respectively.

\begin{table}[ht!]
\addtolength{\leftskip}{-1cm}
\addtolength{\rightskip}{-1cm}
\centering
\begin{tabular}{|cc|ccccc|}
\hline
\multicolumn{2}{|c|}{\multirow{2}{*}{Measure}} & \multicolumn{5}{c|}{Prescriber}  \\ \cline{3-7} 
\multicolumn{2}{|c|}{} & \multicolumn{1}{c|}{\texttt{prop\_a}} & \multicolumn{1}{c|}{\texttt{prop\_b}} & \multicolumn{1}{c|}{\texttt{prop\_c}} & \multicolumn{1}{c|}{\texttt{sh\_3\_2}} & \begin{tabular}[c]{@{}c@{}}\texttt{x2\_plus\_yz}\\ \texttt{\_plus\_x\_plus1}\end{tabular} \\ \hline

\multicolumn{2}{|c|}{\begin{tabular}[c]{@{}c@{}}Supervised \\ Loss ($u$)\end{tabular}}   
& \multicolumn{1}{c|}{$1.00 \times 10^{-9}$} 
& \multicolumn{1}{c|}{$5.02 \times 10^{-8}$} 
& \multicolumn{1}{c|}{$2.61 \times 10^{-5}$} 
& \multicolumn{1}{c|}{$1.25 \times 10^0$}
& $8.19 \times 10^{-2}$\\ \hline

\multicolumn{2}{|c|}{\begin{tabular}[c]{@{}c@{}}Curvature\\ MSE ($R$)\end{tabular}} 
& \multicolumn{1}{c|}{$7.57 \times 10^{-10}$} 
& \multicolumn{1}{c|}{$9.25 \times 10^{-7}$} 
& \multicolumn{1}{c|}{$2.24 \times 10^{-1}$} 
& \multicolumn{1}{c|}{$1.36 \times 10^{1}$} 
& $7.31 \times 10^0$ \\ \hline 
\multicolumn{1}{|c|}{\multirow{24}{*}{\rotatebox{90}{Learnt Coefficients}}}
& $c_{1,-1}$ & \multicolumn{1}{c|}{0} & \multicolumn{1}{c|}{-0.018} & \multicolumn{1}{c|}{-0.119} & \multicolumn{1}{c|}{0} & 0.133 \\ \cline{2-7}
\multicolumn{1}{|c|}{} & $c_{1,0}$  & \multicolumn{1}{c|}{0} & \multicolumn{1}{c|}{0.038}  & \multicolumn{1}{c|}{0} & \multicolumn{1}{c|}{-1.263} & -0.109 \\ \cline{2-7}
\multicolumn{1}{|c|}{} & $c_{1,1}$  & \multicolumn{1}{c|}{0} & \multicolumn{1}{c|}{\textbf{0.121}}  & \multicolumn{1}{c|}{\textbf{2.02}} & \multicolumn{1}{c|}{1.705}  & 4.526 \\ \cline{2-7}
\multicolumn{1}{|c|}{} & $c_{2,-2}$ & \multicolumn{1}{c|}{0} & \multicolumn{1}{c|}{0}      & \multicolumn{1}{c|}{-0.275} & \multicolumn{1}{c|}{0} & 0.199 \\ \cline{2-7}
\multicolumn{1}{|c|}{} & $c_{2,-1}$ & \multicolumn{1}{c|}{0} & \multicolumn{1}{c|}{0}      & \multicolumn{1}{c|}{0} & \multicolumn{1}{c|}{-0.127} & -0.314 \\ \cline{2-7}
\multicolumn{1}{|c|}{} & $c_{2,0}$  & \multicolumn{1}{c|}{\textbf{1.000}} & \multicolumn{1}{c|}{0} & \multicolumn{1}{c|}{0} & \multicolumn{1}{c|}{0} & -2.335 \\ \cline{2-7}
\multicolumn{1}{|c|}{} & $c_{2,1}$  & \multicolumn{1}{c|}{0} & \multicolumn{1}{c|}{0.002} & \multicolumn{1}{c|}{0} & \multicolumn{1}{c|}{2.352} & -0.227 \\ \cline{2-7}
\multicolumn{1}{|c|}{} & $c_{2,2}$  & \multicolumn{1}{c|}{0} & \multicolumn{1}{c|}{\textbf{0.054}} & \multicolumn{1}{c|}{\textbf{6.014}} & \multicolumn{1}{c|}{-1.532} & 4.078 \\ \cline{2-7}
\multicolumn{1}{|c|}{} & $c_{3,-3}$ & \multicolumn{1}{c|}{0} & \multicolumn{1}{c|}{0} & \multicolumn{1}{c|}{0.380} & \multicolumn{1}{c|}{0} & 0.246 \\ \cline{2-7}
\multicolumn{1}{|c|}{} & $c_{3,-2}$ & \multicolumn{1}{c|}{0} & \multicolumn{1}{c|}{0} & \multicolumn{1}{c|}{0} & \multicolumn{1}{c|}{0} & -0.310 \\ \cline{2-7}
\multicolumn{1}{|c|}{} & $c_{3,-1}$ & \multicolumn{1}{c|}{0} & \multicolumn{1}{c|}{0} & \multicolumn{1}{c|}{0} & \multicolumn{1}{c|}{0} & 0 \\ \cline{2-7}
\multicolumn{1}{|c|}{} & $c_{3,0}$  & \multicolumn{1}{c|}{0} & \multicolumn{1}{c|}{0} & \multicolumn{1}{c|}{0} & \multicolumn{1}{c|}{1.091} & 0.053 \\ \cline{2-7}
\multicolumn{1}{|c|}{} & $c_{3,1}$  & \multicolumn{1}{c|}{0} & \multicolumn{1}{c|}{\textbf{0.028}} & \multicolumn{1}{c|}{\textbf{11.775}} & \multicolumn{1}{c|}{0.870} & -2.641 \\ \cline{2-7}
\multicolumn{1}{|c|}{} & $c_{3,2}$  & \multicolumn{1}{c|}{0} & \multicolumn{1}{c|}{0} & \multicolumn{1}{c|}{0} & \multicolumn{1}{c|}{8.715} & -0.242 \\ \cline{2-7}
\multicolumn{1}{|c|}{} & $c_{3,3}$  & \multicolumn{1}{c|}{0} & \multicolumn{1}{c|}{0} & \multicolumn{1}{c|}{0} & \multicolumn{1}{c|}{0.808} & 3.324 \\ \cline{2-7}
\multicolumn{1}{|c|}{} & $c_{4,-4}$ & \multicolumn{1}{c|}{0} & \multicolumn{1}{c|}{0} & \multicolumn{1}{c|}{-0.277} & \multicolumn{1}{c|}{-0.091} & 0.284 \\ \cline{2-7}
\multicolumn{1}{|c|}{} & $c_{4,-3}$ & \multicolumn{1}{c|}{0} & \multicolumn{1}{c|}{0} & \multicolumn{1}{c|}{0} & \multicolumn{1}{c|}{-0.170} & -0.235 \\ \cline{2-7}
\multicolumn{1}{|c|}{} & $c_{4,-2}$ & \multicolumn{1}{c|}{0} & \multicolumn{1}{c|}{0} & \multicolumn{1}{c|}{0.322} & \multicolumn{1}{c|}{0.122} & -0.107 \\ \cline{2-7}
\multicolumn{1}{|c|}{} & $c_{4,-1}$ & \multicolumn{1}{c|}{0} & \multicolumn{1}{c|}{0} & \multicolumn{1}{c|}{0} & \multicolumn{1}{c|}{-0.196} & 0.166 \\ \cline{2-7}
\multicolumn{1}{|c|}{} & $c_{4,0}$  & \multicolumn{1}{c|}{0} & \multicolumn{1}{c|}{0} & \multicolumn{1}{c|}{0} & \multicolumn{1}{c|}{-3.145} & 1.471 \\ \cline{2-7}
\multicolumn{1}{|c|}{} & $c_{4,1}$  & \multicolumn{1}{c|}{0} & \multicolumn{1}{c|}{0} & \multicolumn{1}{c|}{0} & \multicolumn{1}{c|}{-0.603} & 0.081 \\ \cline{2-7}
\multicolumn{1}{|c|}{} & $c_{4,2}$  & \multicolumn{1}{c|}{0} & \multicolumn{1}{c|}{0} & \multicolumn{1}{c|}{0} & \multicolumn{1}{c|}{-2.050} & -1.941 \\ \cline{2-7}
\multicolumn{1}{|c|}{} & $c_{4,3}$  & \multicolumn{1}{c|}{0} & \multicolumn{1}{c|}{0} & \multicolumn{1}{c|}{0} & \multicolumn{1}{c|}{2.466} & -0.111 \\ \cline{2-7}
\multicolumn{1}{|c|}{} & $c_{4,4}$  & \multicolumn{1}{c|}{0} & \multicolumn{1}{c|}{0} & \multicolumn{1}{c|}{0} & \multicolumn{1}{c|}{1.825} & 2.650 \\ \hline
\end{tabular}
\caption{The learnt spherical harmonic expansion data for each considered prescribed curvature: \{\texttt{prop\_a}, \texttt{prop\_b}, \texttt{prop\_c}, \texttt{sh\_3\_2}, \texttt{x2\_plus\_yz\_plus\_x\_plus1}\}. Data first shows the expansion fitting MSE loss between the $u$ predicted by the pretrained NN model and that by the spherical harmonic expansion approximation being learnt. Then MSE between the true $R$ curvature of the prescriber and that for the learnt expansion using SpectralPair. Finally the learnt coefficients are given, where those which were <1\% of the maximum coefficient were set to 0, and those which are non-zero in the \texttt{prop\_} prescribers definitions are bolded.}
\label{tab:shexpansion_data}
\end{table}

For the Spectral Pair prescribers (\texttt{prop\_*}), the supervised losses are all low. Each has a sparse selection of relatively non-negligible learnt coefficients, and those which take non-zero values in the prescriber definition are bolded.
Each of these learnt bolded values are significant at their $\ell$ orders, and match the set prescribed values as given in Table \ref{tab:known_exist_general}.
This validates the expected behaviour of this process, where the true $u$ coefficients have been reproduced for the learnt $\tilde{u}$; especially for \texttt{prop\_a} which matches down to numerical precision.
Furthermore, Figure \ref{fig:shexpansion_Rs_props} corroborates these successful ansatz approximations, where the $R$ and $\tilde{R}$ distributions visually match. 

For the `unknown' prescribers, the supervised $u$ losses and curvature $R$ differences were not as strong, likely limited by the small $L$; although all saw reductions throughout training and demonstrated learning. 
Additionally, Figure \ref{fig:shexpansion_Rs_unknown} supports this observed learning, where there is clear visual similarity between $R$ and $\tilde{R}$.
The equivalent learnt coefficients in the Table also have varying scales, notably not all are used, providing some first order (or fourth order) interpretable information on the true form of the conformal factor $u$ functions that produce these likely realisable prescribed curvatures.

\begin{figure*}[ht!]
    \centering
    \begin{subfigure}[t]{0.48\textwidth}
        \centering
        \includegraphics[width=0.9\linewidth]{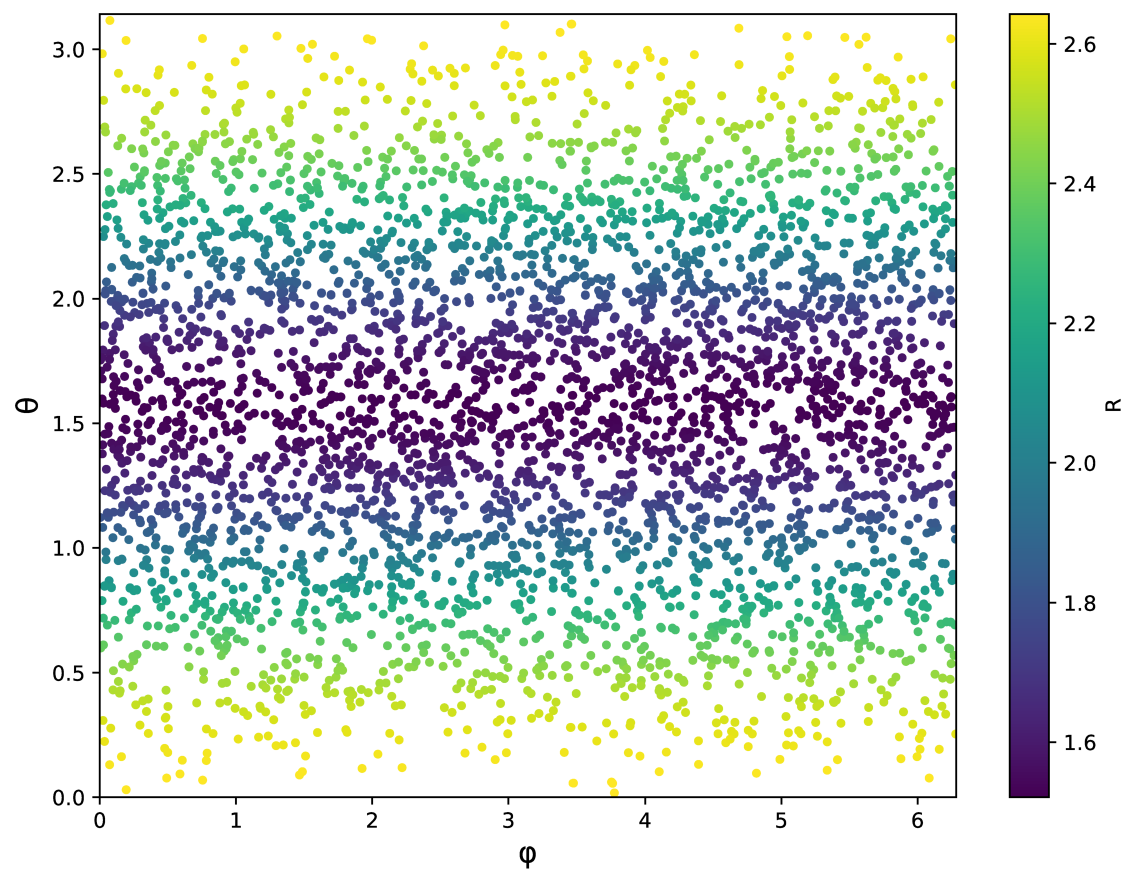}
        \caption{\texttt{prop\_a} $R$}
        \label{fig:shR_pa_p}
    \end{subfigure}
    \hfill
    \begin{subfigure}[t]{0.48\textwidth}
        \centering
        \includegraphics[width=0.9\linewidth]{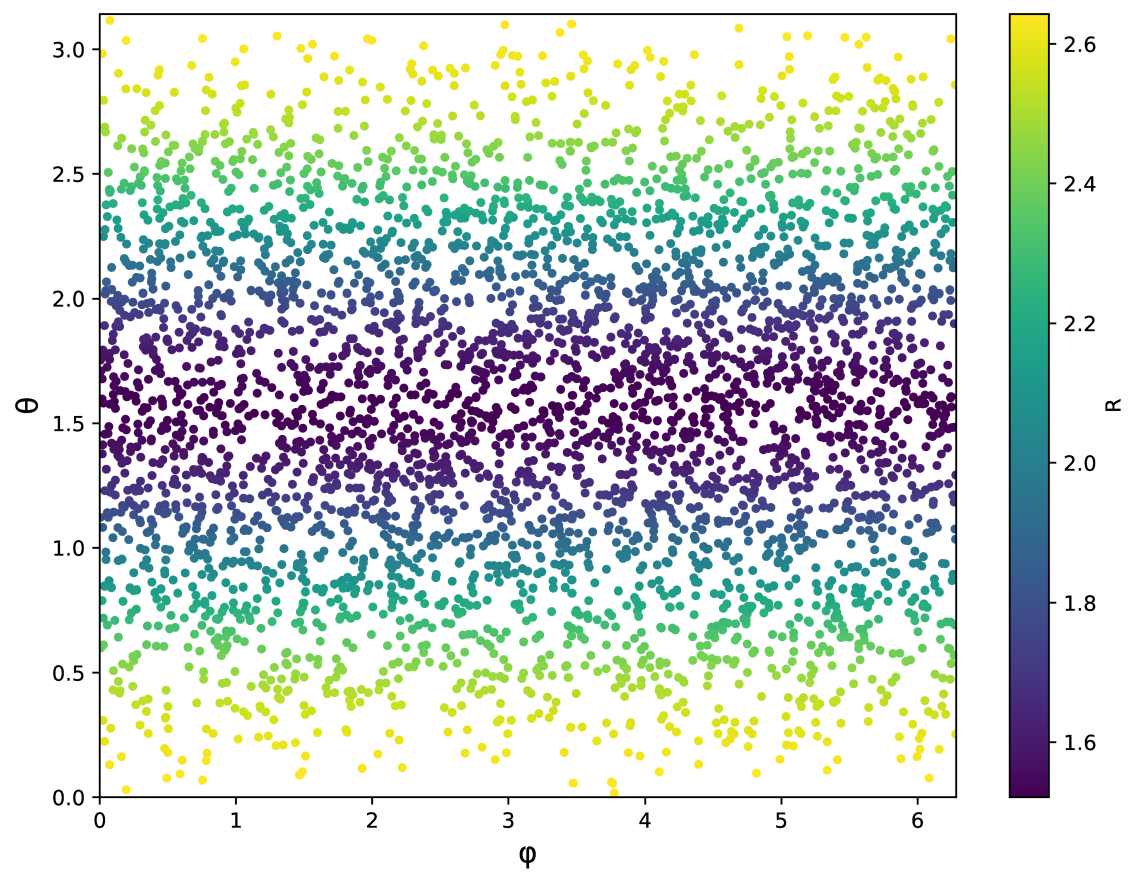}
        \caption{\texttt{prop\_a} $\tilde{R}$}
        \label{fig:shR_pa_sp}
    \end{subfigure}

    \vspace{0.5cm}

    \begin{subfigure}[t]{0.48\textwidth}
        \centering
        \includegraphics[width=0.9\linewidth]{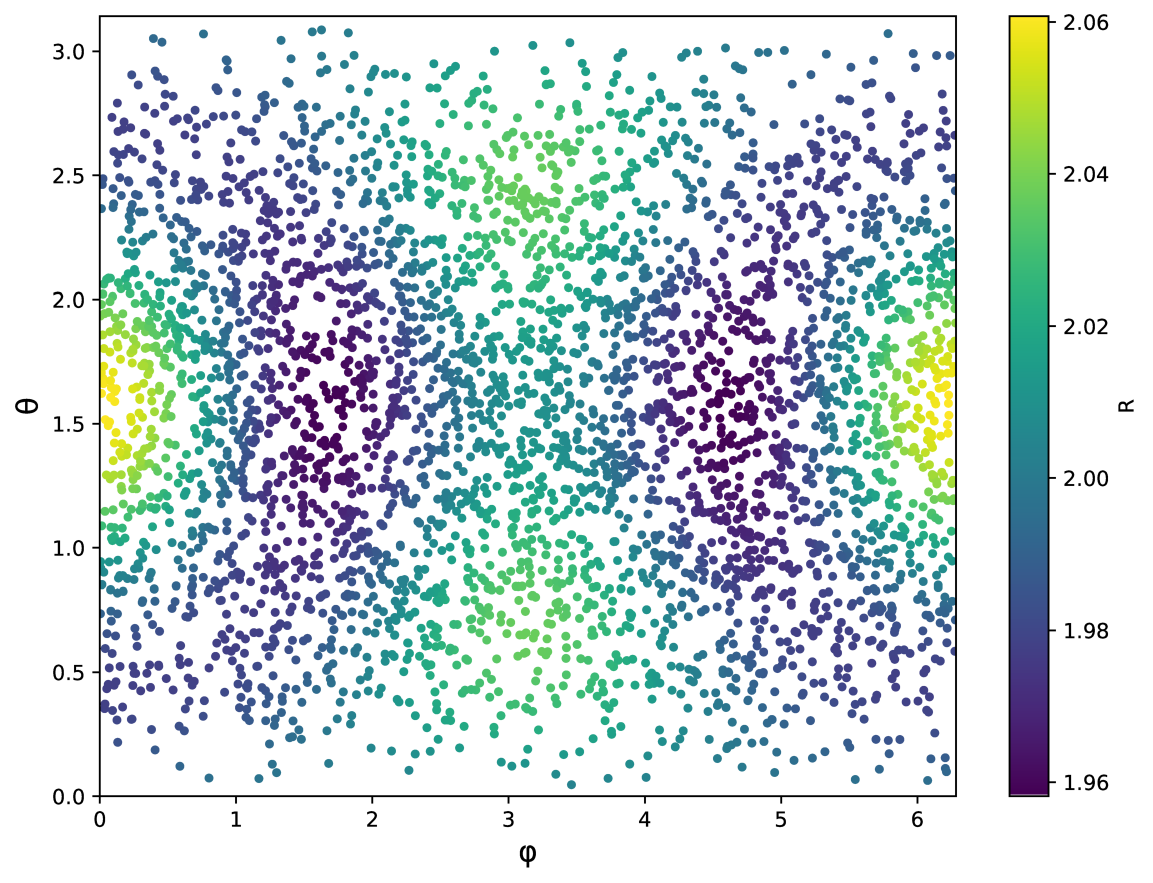}
        \caption{\texttt{prop\_b} $R$}
        \label{fig:shR_pb_p}
    \end{subfigure}
    \hfill
    \begin{subfigure}[t]{0.48\textwidth}
        \centering
        \includegraphics[width=0.9\linewidth]{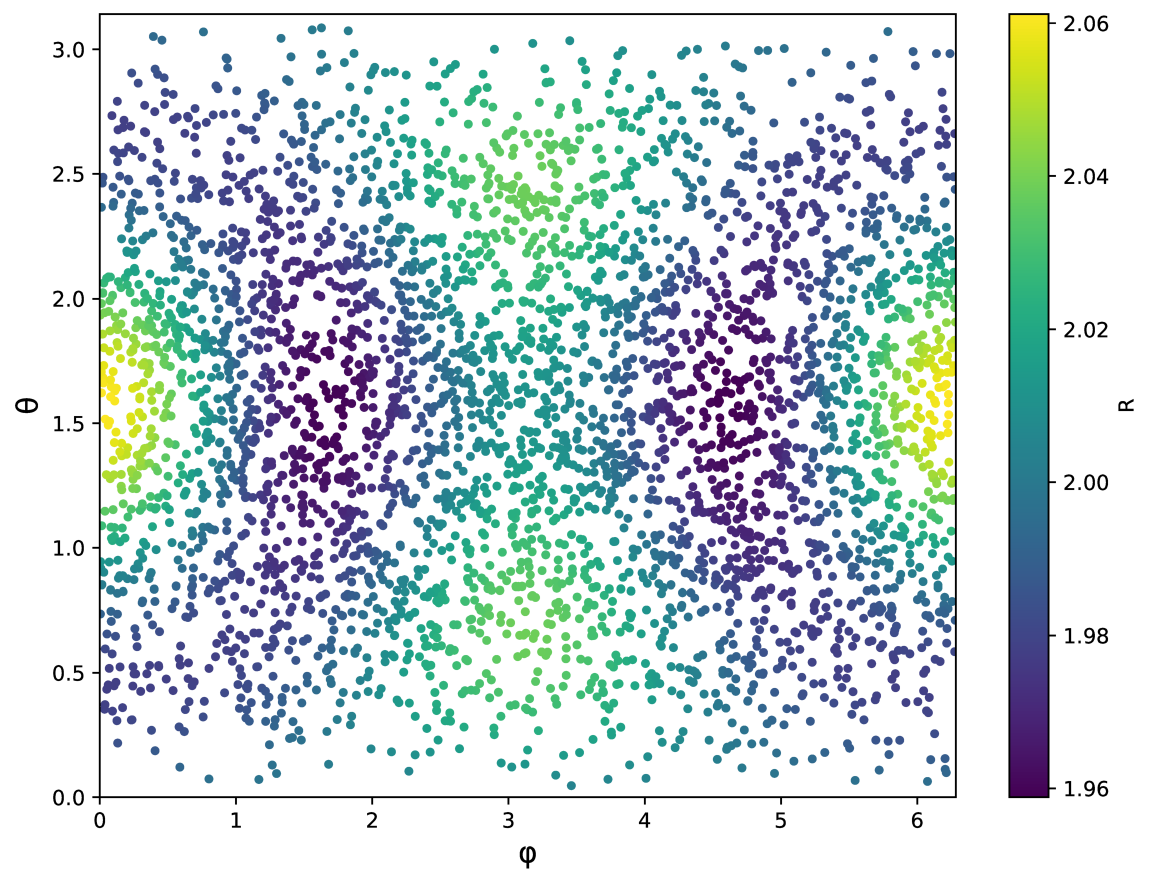}
        \caption{\texttt{prop\_b} $\tilde{R}$}
        \label{fig:shR_pb_sp}
    \end{subfigure}
    \hfill

    \vspace{0.5cm}

    \begin{subfigure}[t]{0.48\textwidth}
        \centering
        \includegraphics[width=0.9\linewidth]{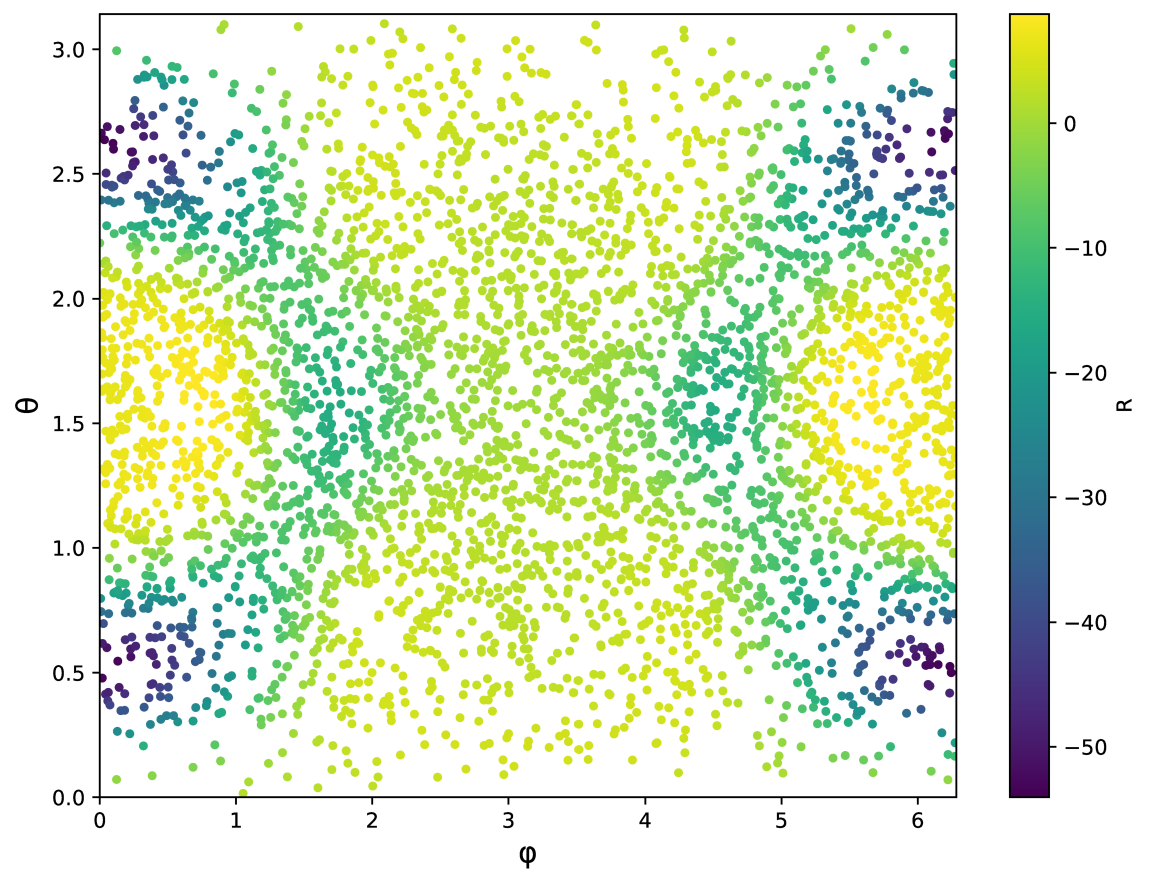}
        \caption{\texttt{prop\_c} $R$}
        \label{fig:shR_pc_p}
    \end{subfigure}
    \hfill
    \begin{subfigure}[t]{0.48\textwidth}
        \centering
        \includegraphics[width=0.9\linewidth]{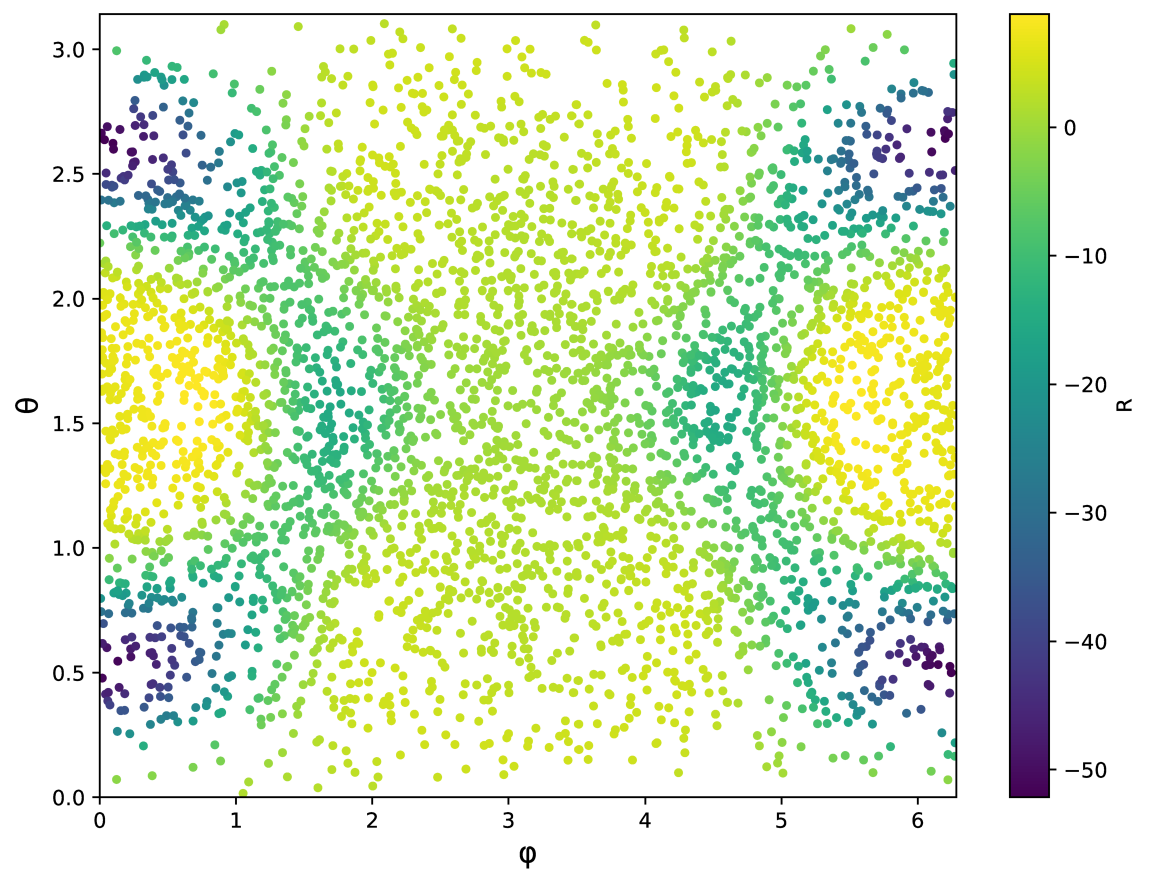}
        \caption{\texttt{prop\_c} $\tilde{R}$}
        \label{fig:shR_pc_sp}
    \end{subfigure}
    \caption{Plots of computed curvature over the fundamental domain of the sphere $(\theta, \phi)$, for spherical harmonic expansions fitted up to order $L = 4$ on trained NN models, for the Spectral Pair prescribers: \{\texttt{prop\_a}, \texttt{prop\_b}, \texttt{prop\_c}\}. The first column shows $R$ computed directly from the original prescriber function, whilst the second $\tilde{R}$ from the fitted expansion ansatz $\tilde{u}$.}
    \label{fig:shexpansion_Rs_props}
\end{figure*}

\begin{figure*}[ht!]
    \centering
    \begin{subfigure}[t]{0.48\textwidth}
        \centering
        \includegraphics[width=0.9\linewidth]{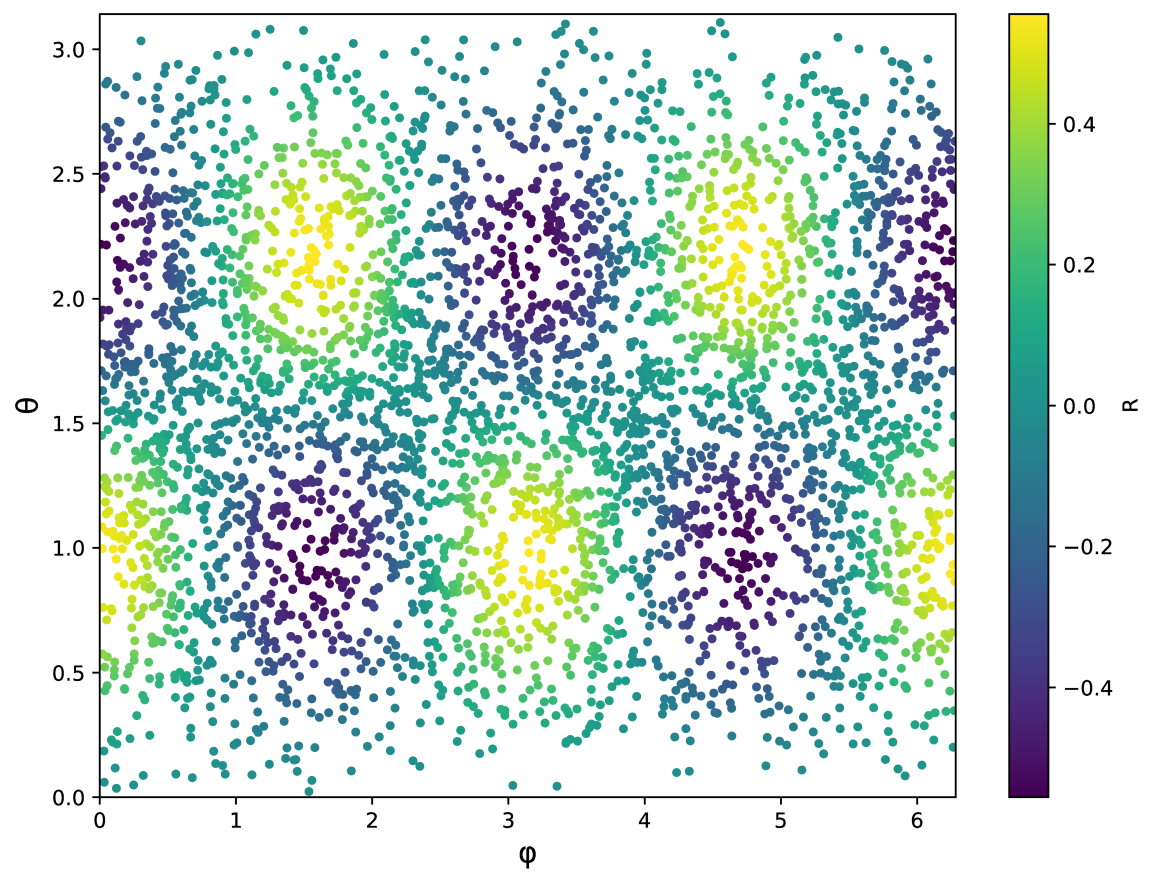}
        \caption{\texttt{sh\_3\_2} $R$}
        \label{fig:shR_sh_p}
    \end{subfigure}
    \hfill
    \begin{subfigure}[t]{0.48\textwidth}
        \centering
        \includegraphics[width=0.9\linewidth]{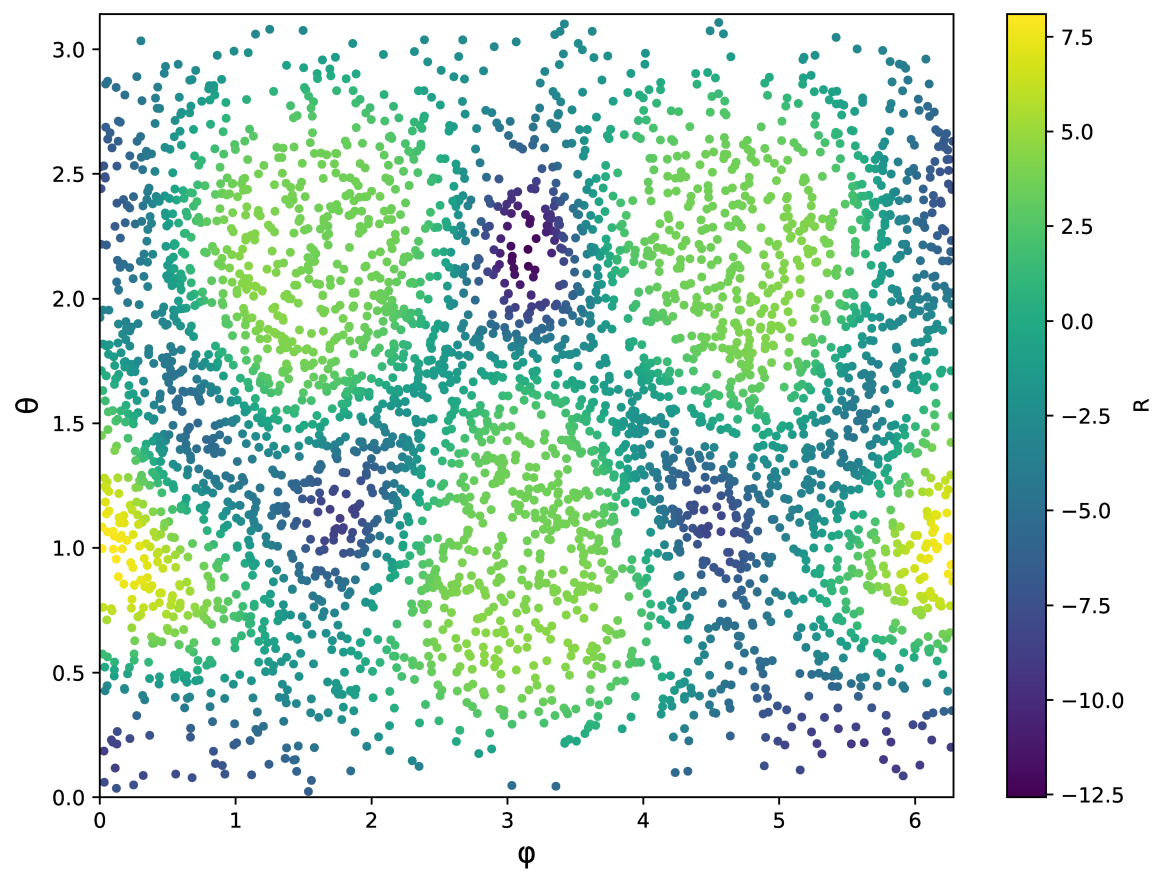}
        \caption{\texttt{sh\_3\_2} $\tilde{R}$}
        \label{fig:shR_sh_sp}
    \end{subfigure}

    \vspace{0.5cm}

    \begin{subfigure}[t]{0.48\textwidth}
        \centering
        \includegraphics[width=0.9\linewidth]{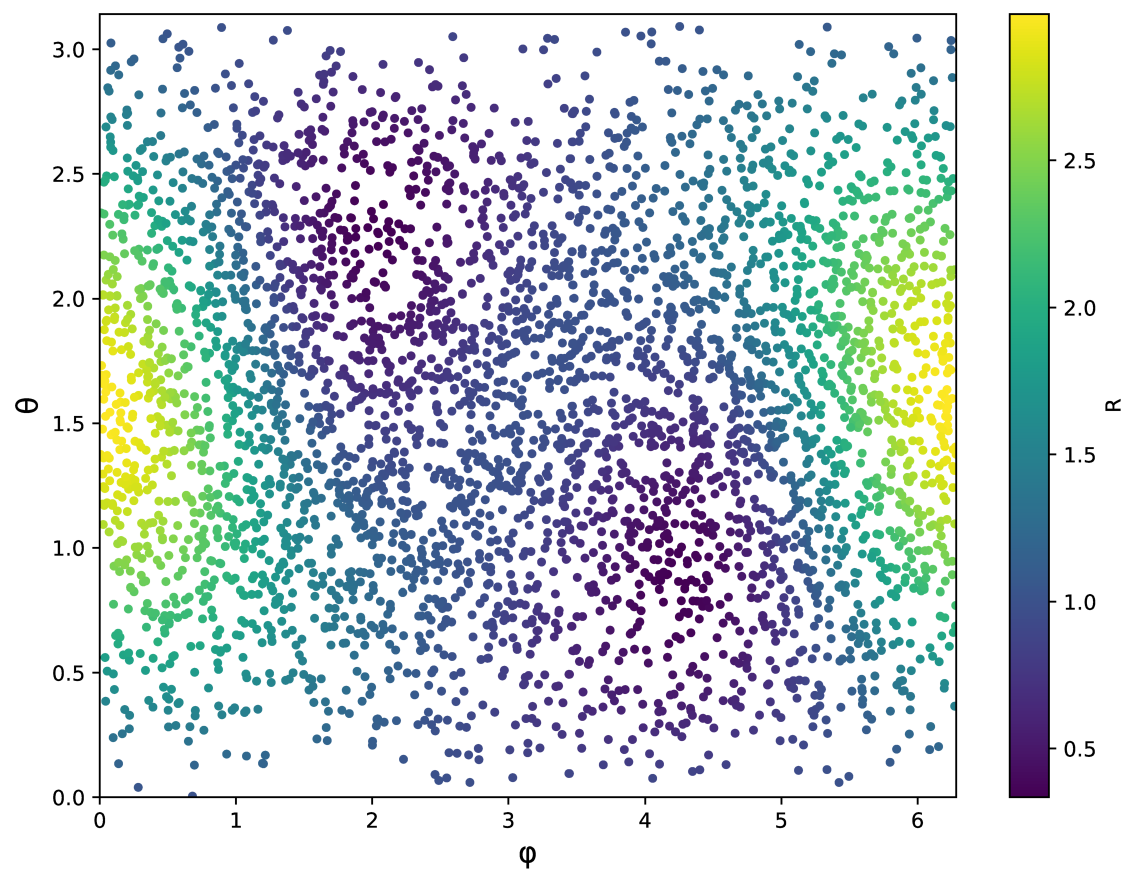}
        \caption{\texttt{x2\_plus\_yz\_plus\_x\_plus1} $R$}
        \label{fig:shR_xy_p}
    \end{subfigure}
    \hfill
    \begin{subfigure}[t]{0.48\textwidth}
        \centering
        \includegraphics[width=0.9\linewidth]{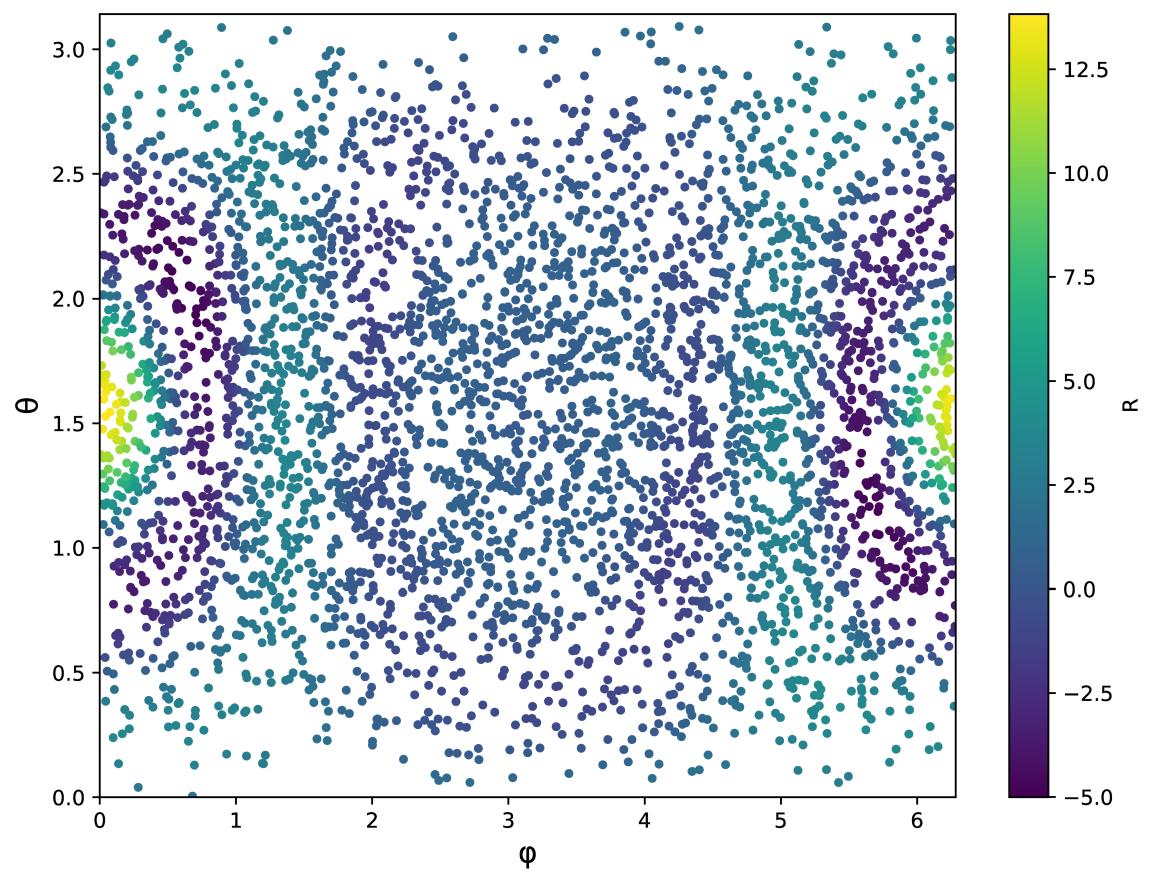}
        \caption{\texttt{x2\_plus\_yz\_plus\_x\_plus1} $\tilde{R}$}
        \label{fig:shR_xy_sp}
    \end{subfigure}
    \caption{Plots of computed curvature over the fundamental domain of the sphere $(\theta, \phi)$, for spherical harmonic expansions fitted up to order $L = 4$ on trained NN models, for selected prescribers with unknown realisability: \{\texttt{sh\_3\_2}, \texttt{x2\_plus\_yz\_plus\_x\_plus1}\}. The first column shows $R$ computed directly from the original prescriber function, whilst the second $\tilde{R}$ from the fitted expansion ansatz $\tilde{u}$.}
    \label{fig:shexpansion_Rs_unknown}
\end{figure*}

\clearpage

\section{Conclusions and Outlook}

\label{sec: conclusions}

This work presents a neural-architecture-based framework for numerical investigation of the Nirenberg problem of prescribed Gaussian curvature on $S^2$. By learning the conformal factor directly through a physics-informed loss function, the approach avoids traditional mesh-based discretisations and instead exploits automatic differentiation together with global geometric constraints to approximate solutions of the non-linear curvature equation.

The numerical experiments demonstrate that the network consistently distinguishes between prescribed curvature functions that are known to admit solutions and those that are known to be obstructed. In particular, the attained training loss, supplemented by independent diagnostic checks such as the Gauss-Bonnet identity, serves as a robust empirical indicator of solvability. When applied to families of curvature functions whose analytical status remains unresolved, the method exhibits a clear separation in achievable loss values (when augmented with the Gauss-Bonnet validation): functions that are likely to admit solutions converge to losses comparable with known solvable cases, while obstructed functions remain separated by several orders of magnitude. Although such observations do not constitute proofs, they provide strong numerical evidence to aid and prioritise future analytical investigations, potentially guiding towards fruitful conjectures and/or results. Moreover, the Nirenberg Neural Network shows its suitability for inclusion in a numerical pipeline leading to a (possibly) computer-assisted proof in the following aspects. Firstly, the loss achieved for some of the prescribers is only a few orders of magnitude from being at the level required by computer-assisted proof, whilst the error on the round metric already meets these requirements. Secondly, the investigation of neural network interpretability, through a fitted spherical harmonics expansion, suggests that amenable closed-form expansions of the conformal factor might be distilled from the model, providing analytical formulae over which one has full control.

More broadly, these results suggest that neural networks may function as useful exploratory tools in geometric analysis, complementing rather than replacing classical techniques. This perspective is particularly natural for the Nirenberg problem, where local partial differential equation behaviour is tightly coupled to global topological and variational constraints. The ability of the network to implicitly respect these structures highlights the potential of machine-learning-based methods for probing the global landscape of non-linear geometric equations.

Several directions for further work arise naturally. From an analytical perspective, it would be of interest to investigate whether the empirical loss thresholds observed here admit interpretations in terms of degree-theoretic or variational obstructions. From a computational viewpoint, natural extensions include higher-dimensional prescribed curvature problems, alternative background metrics, and related geometric equations such as the Yamabe or $\sigma_k$-curvature problems. The incorporation of symmetry-aware or equivariant architectures may also improve both efficiency and interpretability. In terms of improving the current set-up, there are also a number of natural steps ahead. For example, integrating understanding of the group Conf($S^2$) and its role in the Nirenberg problem in the codebase might lead to significant improvements. Moreover, the investigation can be naturally scaled-up in two aspects. On one side an intention to process more data; incorporating more test functions, gaining a better understanding on the regime of applicability of the method; and collecting further corroboration. On the other side, plans to make use of more compute, i.e.~larger networks, longer training, and more Fourier features. These improvements might lead to an increase in performance, yielding results suitable for computer-assisted proofs.

It is hoped that this package will help researchers develop a deeper understanding of the Nirenberg problem, as well as generate and test conjectures in a systematic, computational way. Further, this work hopes to encourage greater interaction between geometric analysis and machine learning, and to promote the data–driven methods at the interface of geometry, analysis, and computation.

\section*{Data Availability Statement}

The models and code used to generate the data and results in this work are available on GitHub at: \url{https://github.com/xand-stapleton/nirenberg-neural-network}. 

\section*{Acknowledgements}
AGS, EH, and TSG acknowledge support from Pierre Andurand over various stages of this research, and wish to thank David Berman for useful discussions on earlier incarnations of this work. TSG is supported by the 2024 Max Planck-Humboldt Research Award bestowed on Geordie Williamson and Catharina Stroppel by the Max Planck Society and the Alexander von Humboldt Foundation.
EH is supported by S\~ao Paulo Research Foundation (FAPESP) grant 2024/18994-7.  
EH thanks Daniel Platt and Andre Lukas for discussions and suggestions surrounding the learnt spherical harmonic expansions.
YF thanks Michael Douglas for helpful discussions.
MEC would like to thank Erik Bekkers for helpful discussions. MEC is supported by the Vidi research programme through the project SIGN (file number VI.Vidi.233.220), which is partly financed by the Dutch Research Council (NWO) under grant \url{https://doi.org/10.61686/PKQGZ71565}. 
All authors also wish to thank Jordan Marajh for his insight and discussions during the early stages of this work.  

This research utilised Queen Mary's Apocrita HPC facility \cite{apocrita}, supported by QMUL Research-IT.

\appendix

\section{Embedding Visualisations}
In the Nirenberg problem, the $S^2$ choice sets the topology of space, but the geometry is defined by the learnt metric which expresses the desired scalar curvature.

For the realisable, and likely realisable, prescribed scalar curvature functions the trained NN architectures model the respective $u$ conformal factor, and hence provide a respective metric $g = e^{2u}g_{round}$ with this curvature.
This metric can be used to compute local distances on the space which can inform on an embedding visualisation for the distortion of the round sphere ($g_{round}$) caused by this learnt conformal factor ($u$).

Here, visualisations for the learnt metrics with these prescribers are embedded in a geometry-preserving manner.
Firstly, a mesh is generated on the sphere, then the chosen trained NN model is imported and used to predict pairwise distances on this mesh.
These pairwise distances are then passed to the Multidimensional Scaling (MDS) method, an unsupervised machine learning tehnique \cite{borg2005modern}. 
MDS computes a low-dimensional embedding of data by finding coordinates whose pairwise Euclidean distances best approximate a given matrix of dissimilarities, minimizing a stress function. 
This produces a resulting embedding which preserves the relative geometry of the original distance relationships as closely as possible in a least-squares sense.
Outputting an embedding for the mesh which can be visualised in 3D.

Figure's \ref{fig:embeddings_1} and \ref{fig:embeddings_2} show these MDS embeddings of the learnt metrics for each considered prescribed curvature.
Each are realisable, with the final 3 of Figure \ref{fig:embeddings_2} the likely realisable prescribed curvatures as suggested by the Nirenberg NN results.
These embeddings provide some unique visual insight into these learnt geometries.

\begin{figure*}[t]
    \centering
    \begin{subfigure}[t]{0.32\textwidth}
        \centering
        \includegraphics[width=\linewidth]{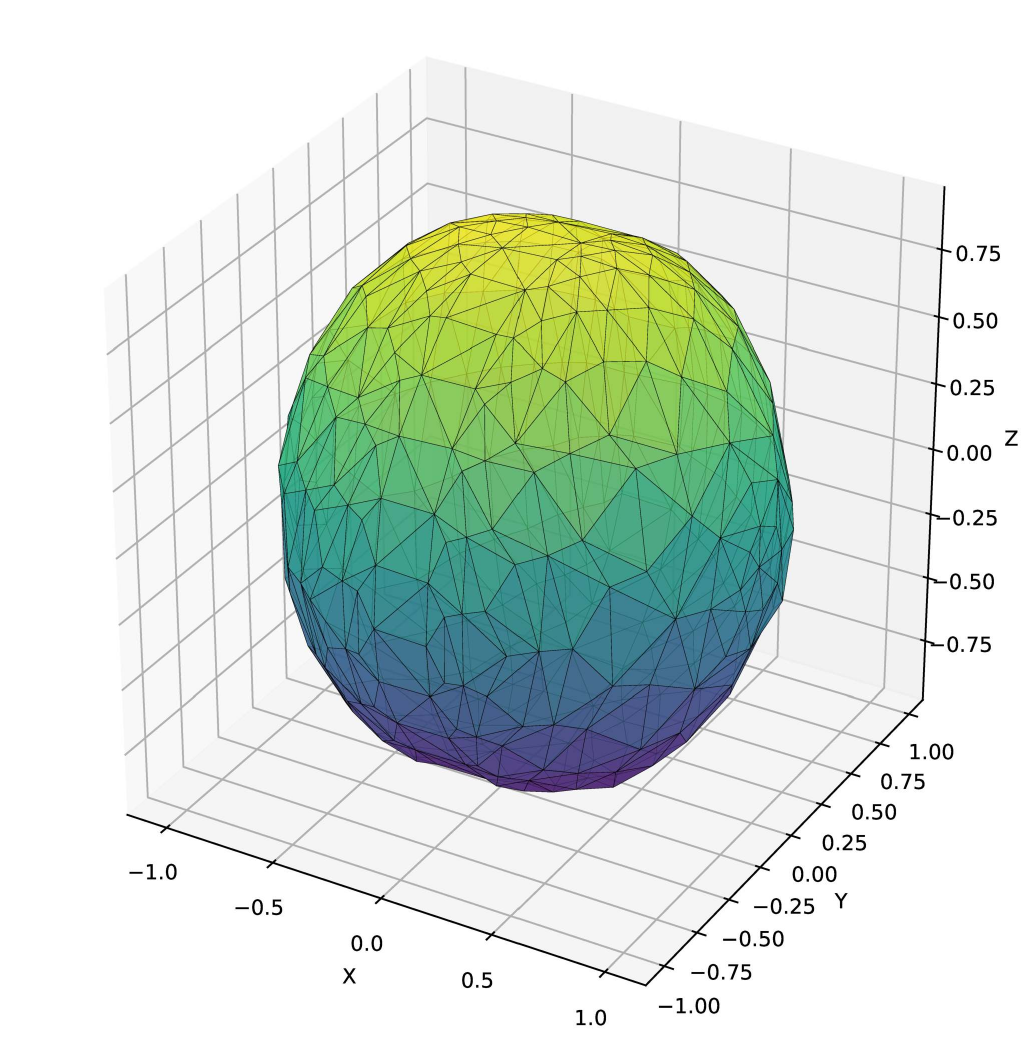}
        \caption{\texttt{prop\_a}}
        \label{fig:embed_pa}
    \end{subfigure}
    \hfill
    \begin{subfigure}[t]{0.32\textwidth}
        \centering
        \includegraphics[width=\linewidth]{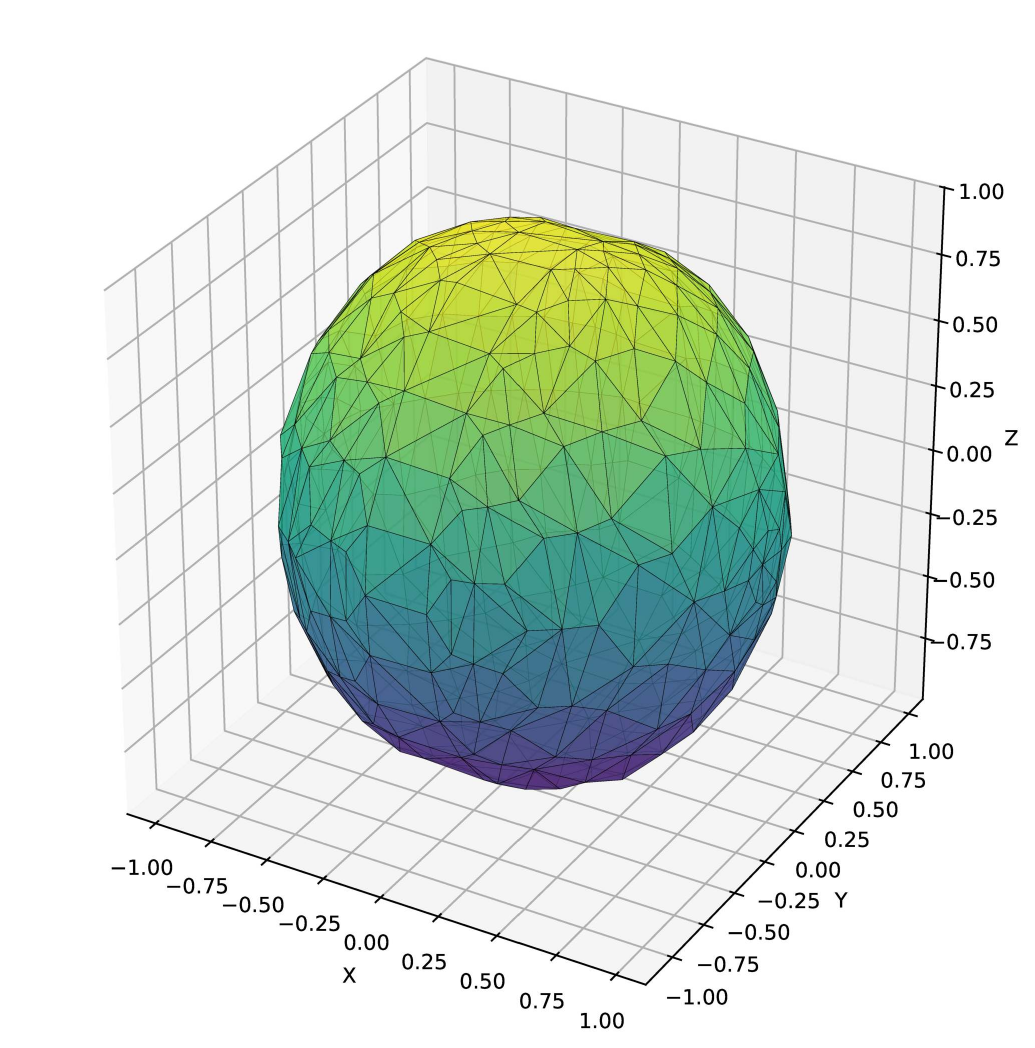}
        \caption{\texttt{prop\_b}}
        \label{fig:embed_pb}
    \end{subfigure}
    \hfill
    \begin{subfigure}[t]{0.32\textwidth}
        \centering
        \includegraphics[width=\linewidth]{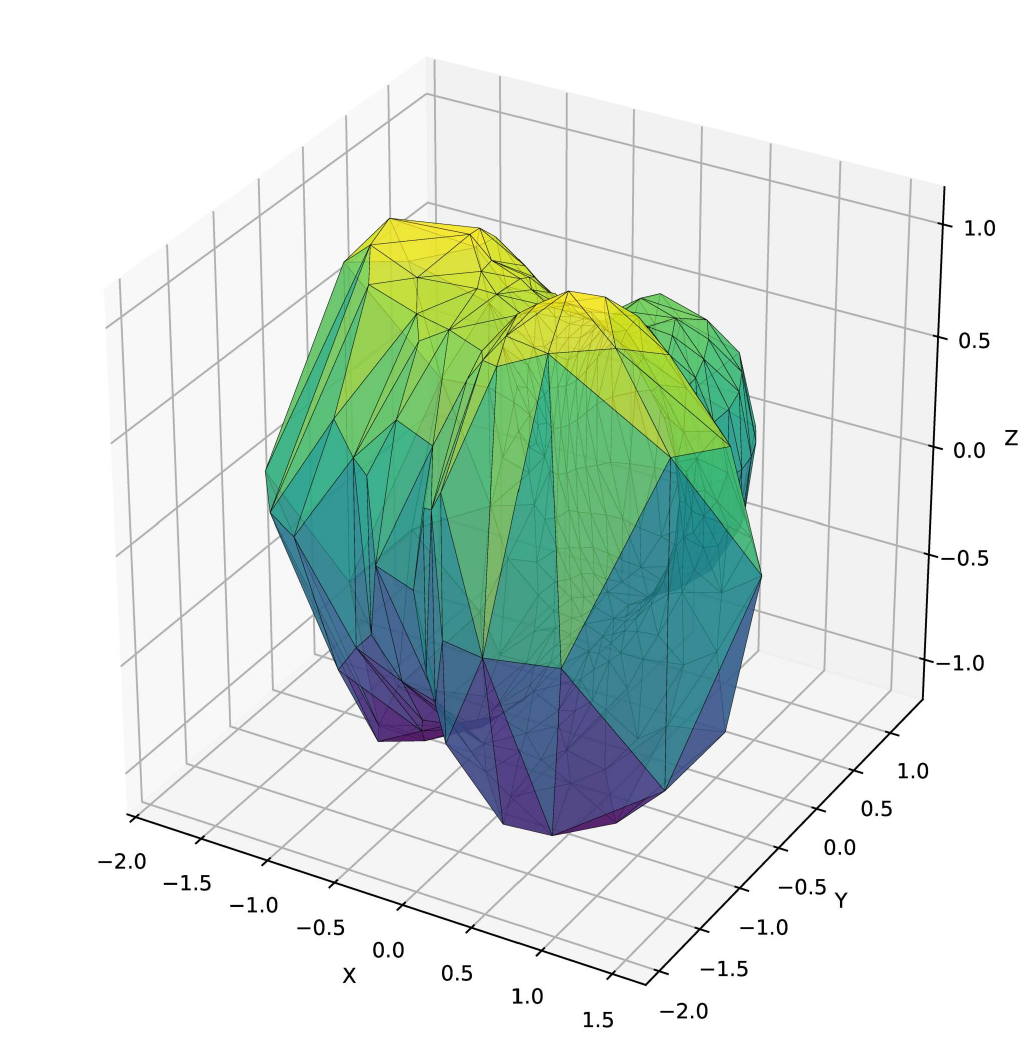}
        \caption{\texttt{prop\_c}}
        \label{fig:embed_pc}
    \end{subfigure}

    \vspace{0.5em}

    \begin{subfigure}[t]{0.32\textwidth}
        \centering
        \includegraphics[width=\linewidth]{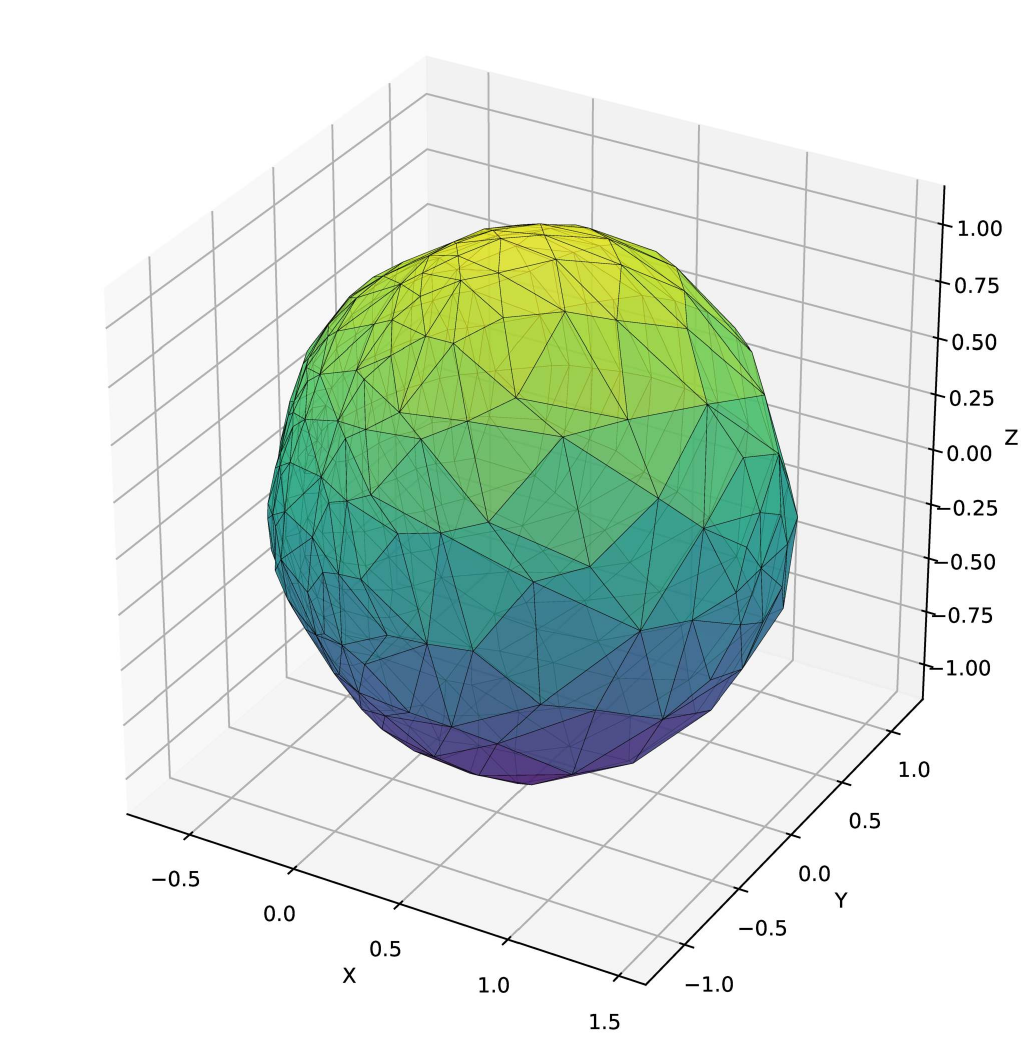}
        \caption{\texttt{egg}}
        \label{fig:embed_egg}
    \end{subfigure}
    \hfill
    \begin{subfigure}[t]{0.32\textwidth}
        \centering
        \includegraphics[width=\linewidth]{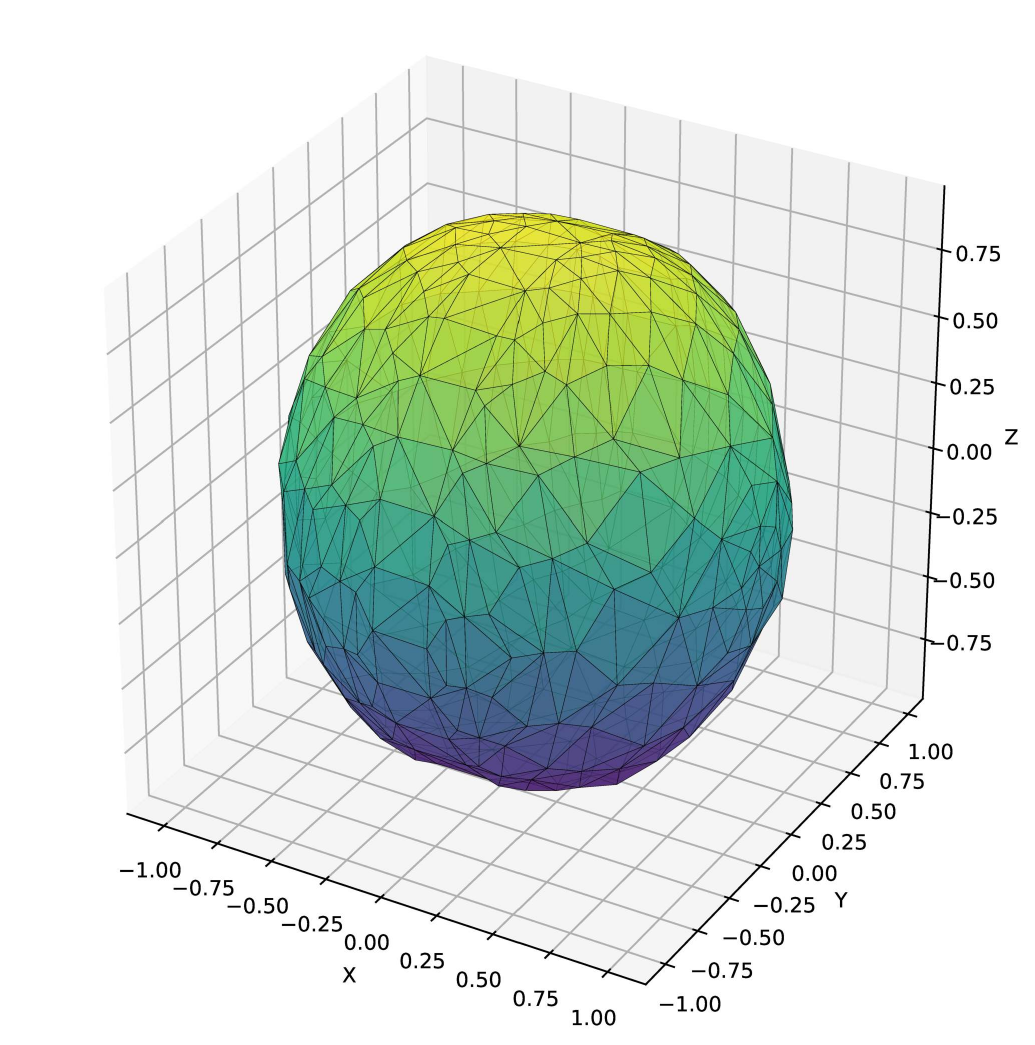}
        \caption{\texttt{cosh\_profile}}
        \label{fig:embed_cosh}
    \end{subfigure}
    \hfill
    \begin{subfigure}[t]{0.32\textwidth}
        \centering
        \includegraphics[width=\linewidth]{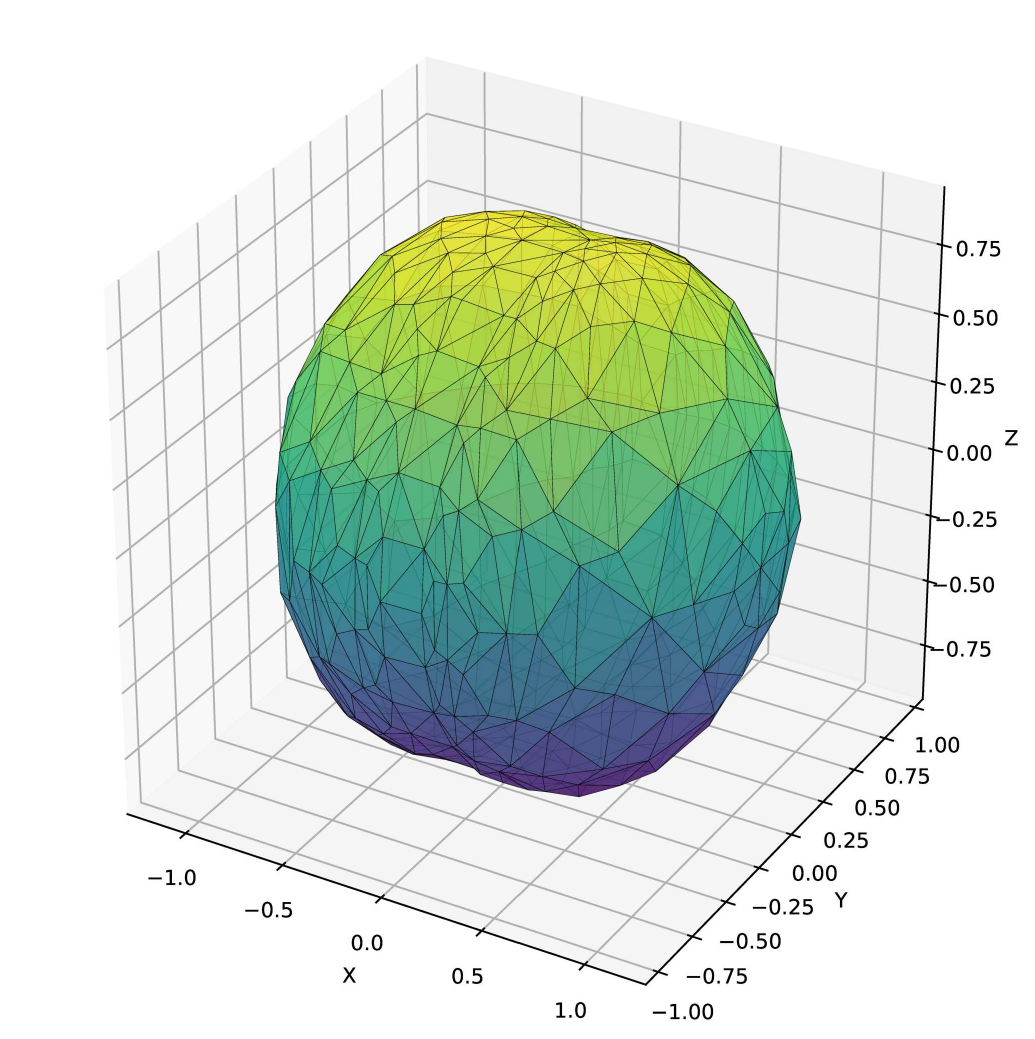}
        \caption{\texttt{tanh\_wave}}
        \label{fig:embed_tanh}
    \end{subfigure}
    \caption{MDS embeddings of the learnt metrics for each respective prescribed curvature. All 6 here are known to be realisable.}
    \label{fig:embeddings_1}
\end{figure*}

\begin{figure*}[t]
    \centering
    \begin{subfigure}[t]{0.32\textwidth}
        \centering
        \includegraphics[width=\linewidth]{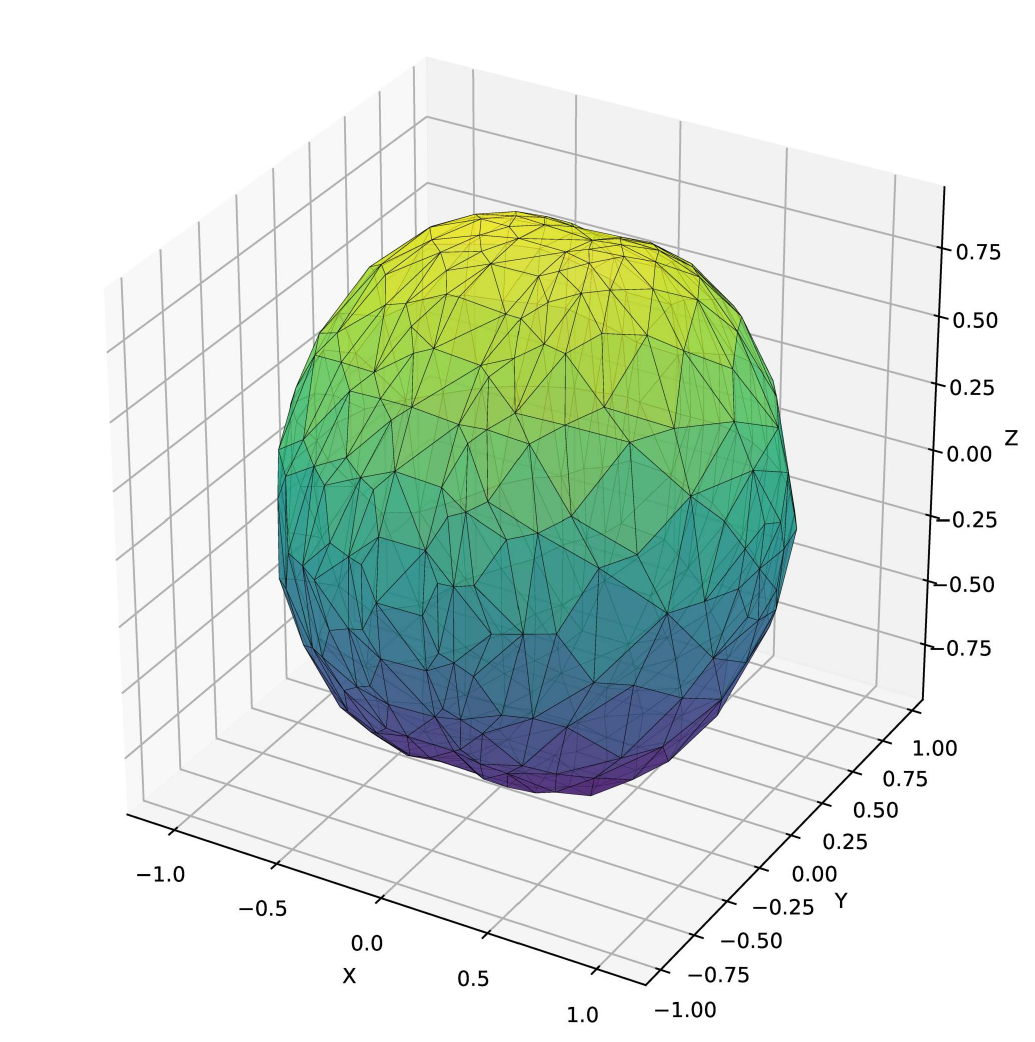}
        \caption{\texttt{sinusoidal}}
        \label{fig:embed_sinusoidal}
    \end{subfigure}
    \hfill
    \begin{subfigure}[t]{0.32\textwidth}
        \centering
        \includegraphics[width=\linewidth]{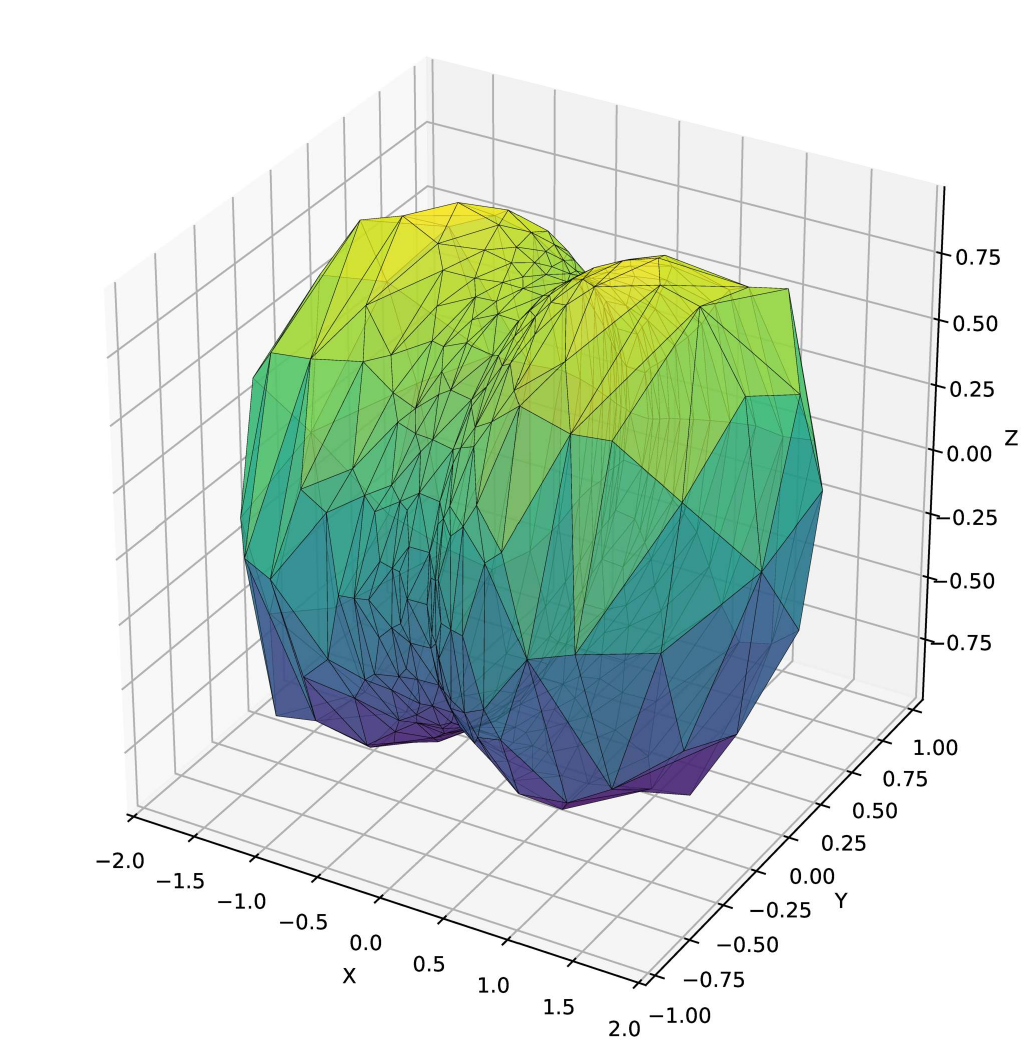}
        \caption{\texttt{sh\_2\_0}}
        \label{fig:embed_sh20}
    \end{subfigure}
    \hfill
    \begin{subfigure}[t]{0.32\textwidth}
        \centering
        \includegraphics[width=\linewidth]{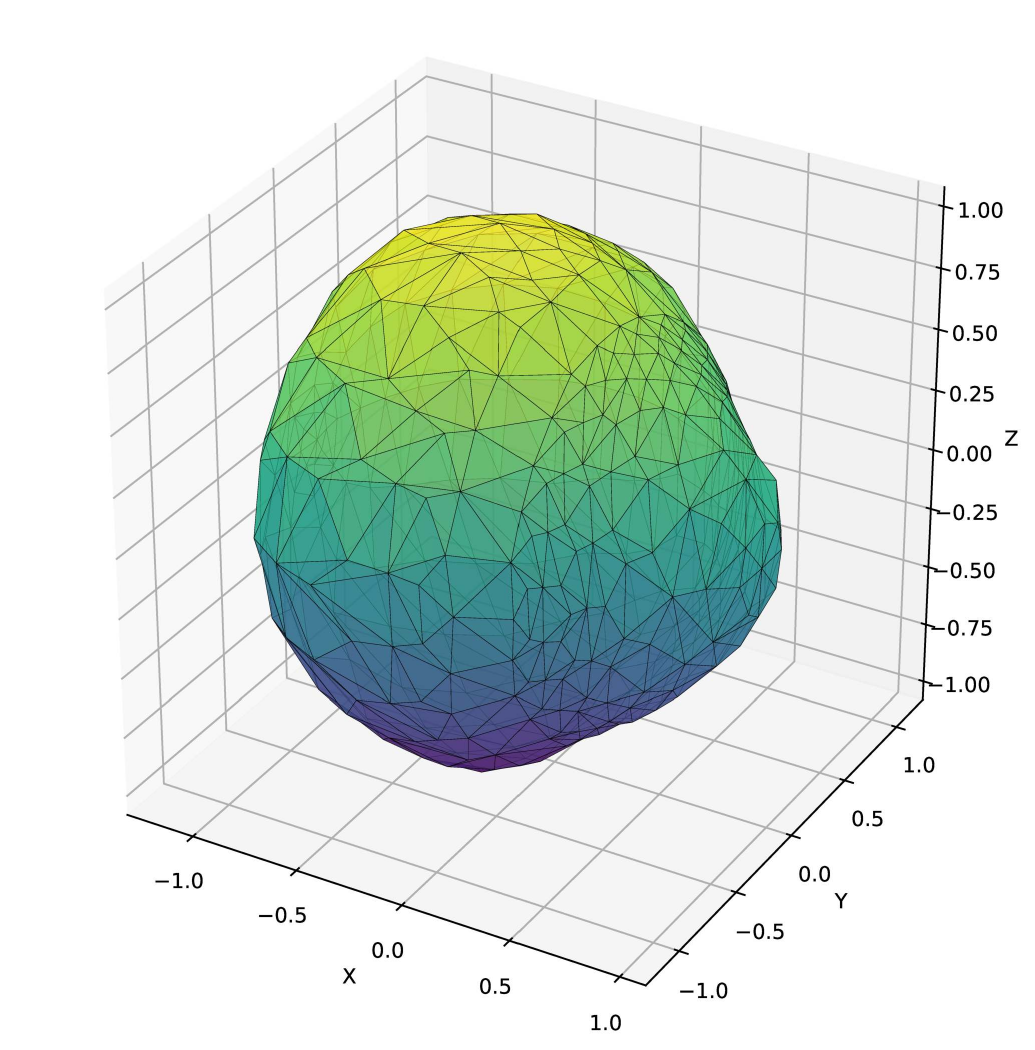}
        \caption{\texttt{sh\_3\_0}}
    \end{subfigure}

    \vspace{0.5em}

    \begin{subfigure}[t]{0.32\textwidth}
        \centering
        \includegraphics[width=\linewidth]{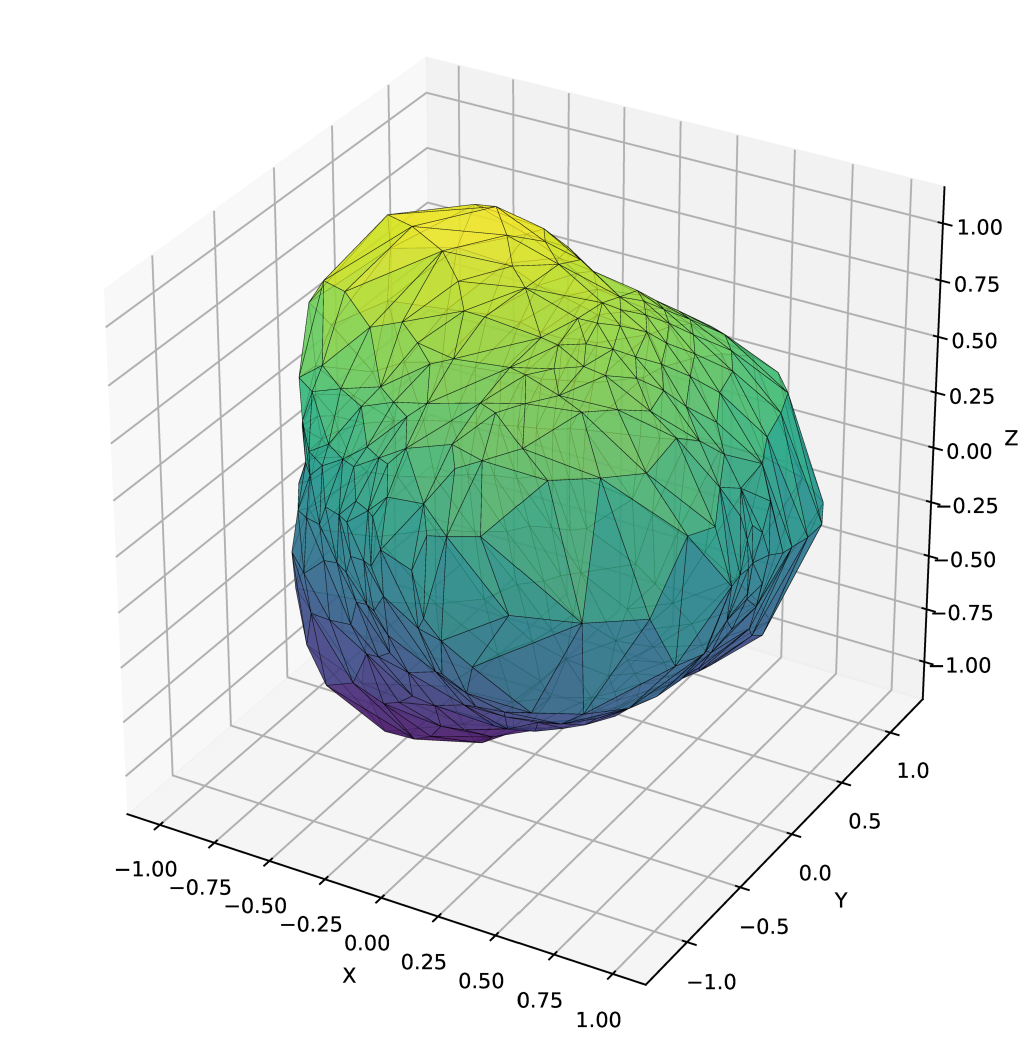}
        \caption{\texttt{sh\_3\_2}}
        \label{fig:embed_sh32}
    \end{subfigure}
    \hfill
    \begin{subfigure}[t]{0.32\textwidth}
        \centering
        \includegraphics[width=\linewidth]{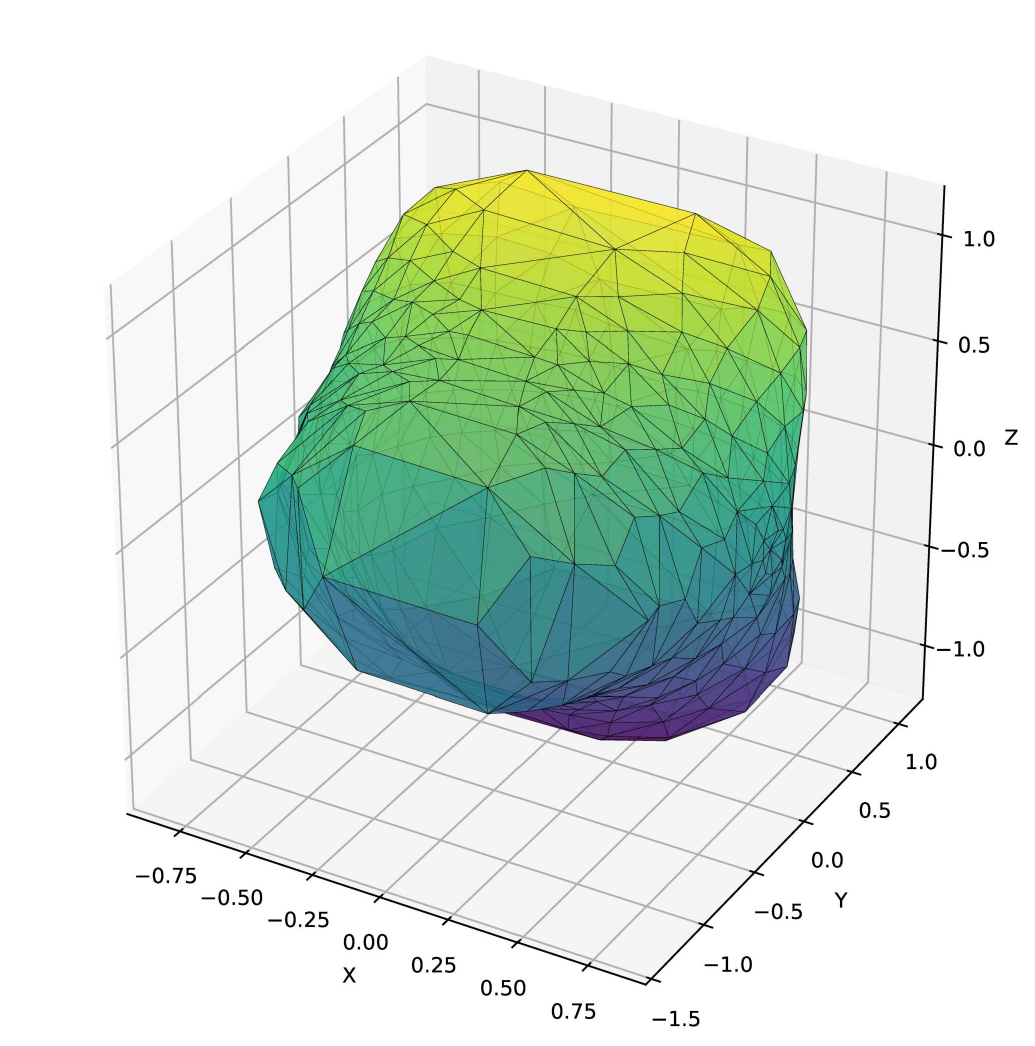}
        \caption{\texttt{sh\_3\_3}}
        \label{fig:embed_sh33}
    \end{subfigure}
    \hfill
    \begin{subfigure}[t]{0.32\textwidth}
        \centering
        \includegraphics[width=\linewidth]{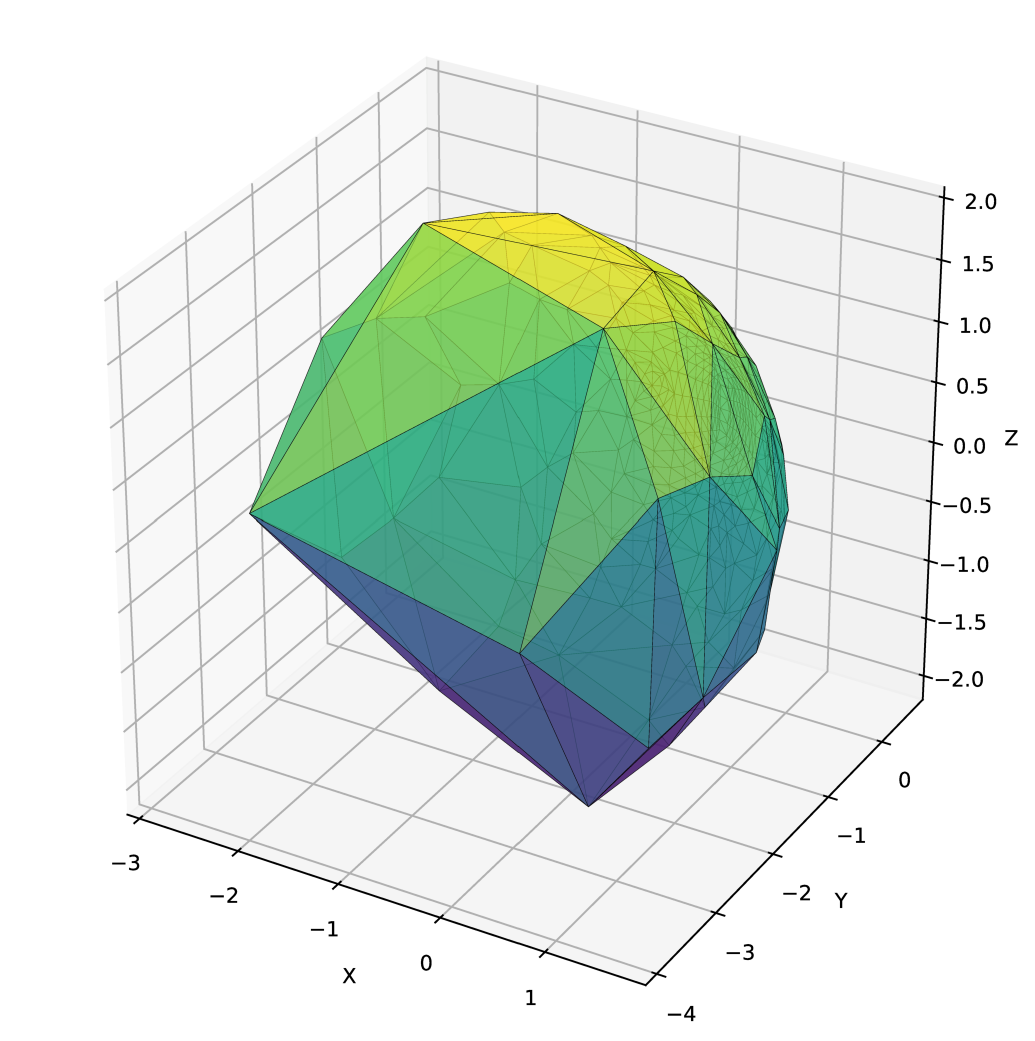}
        \caption{\texttt{x2\_plus\_yz\_plus\_x\_plus1}}
        \label{fig:embed_xyform}
    \end{subfigure}
    \caption{MDS embeddings of the learnt metrics for each respective prescribed curvature. The top row are known to be realisable, the bottom row are unknown but are likely realisable due to the NN learning results.}
    \label{fig:embeddings_2}
\end{figure*}

\newpage
\section{Network Training}

\subsection{Training Pipeline}
\label{sub:training_pipeline}

The training workflow begins with configuration specification through YAML files that define all hyperparameters including network architecture (layer widths, depths, activation functions, random Fourier feature settings), data generation parameters (number of patches, sampling density, radial offset), loss function weights, and optimization settings (learning rate, schedule, batch size, number of epochs). This configuration-based approach enables systematic exploration of the hyperparameter space and reproducible experiments.

A sampler generates training points on the sphere as described earlier, evaluating the prescribed curvature at each point and computing the normalisation constant. These data are assembled into batches using TensorFlow's data pipeline infrastructure, with shuffling to decorrelate consecutive batches and prefetching to overlap data loading with computation. The dataset is sized to provide tens of thousands of training points per epoch, ensuring dense coverage of the sphere.

Training proceeds using the Adam optimizer, with a cosine annealing scheduler, which adapts learning rates for each parameter based on first and second moment estimates of the gradients, and the index of the training step. The learning rate schedule reduces the learning rate while allowing periodic increases to escape local minima. Batch sizes typically range from 100 to 200 points sampled on the sphere. During training, metrics are logged to Weights \& Biases for experiment tracking, including the loss value, learning rate, and any auxiliary quantities of interest. Checkpoints are saved periodically, allowing training to be resumed or analysed at intermediate stages.

Upon completion, the final trained model is serialised and saved along with its configuration. This allows the trained conformal factor to be loaded later for evaluation, visualisation, or fine-tuning on modified objective functions.

\subsection{Training Parameters}
\label{sub:training_parameters}

Table \ref{tab:training_params} provides an overview of the parameters used to train the Nirenberg Neural Network.

\begin{table}[h]
\centering
\renewcommand{\arraystretch}{1.15}
\begin{tabular}{ll}
\toprule

\multicolumn{2}{c}{\textbf{Data Configuration}} \\
\midrule
Number of Patches & 2 \\
Number of Samples & 20\,000 \\
Batch Size & 200 \\

\addlinespace[6pt]
\multicolumn{2}{c}{\textbf{Network Architecture}} \\
\midrule
Hidden Units per Layer & 64 \\
Number of Layers & 6 \\
Activation Function & SiLU \\
Use Bias & True \\
Number of RFFs & 32 or 128\footnote{Equal numbers of both RFFs were used}\\
RFF Bandwidth ($\sigma$) & 1.0 \\

\addlinespace[6pt]
\multicolumn{2}{c}{\textbf{Optimization}} \\
\midrule
Epochs & 150 \\
Learning Rate & $1 \times 10^{-3}$ \\

\addlinespace[6pt]
\multicolumn{2}{c}{\textbf{Scheduler (Exponential Decay)}} \\
\midrule
Decay Steps & 2000 \\
Decay Rate & 0.5 \\
Staircase Decay & False \\
\bottomrule
\end{tabular}
\caption{Training configuration.}
\label{tab:training_params}
\end{table}

\clearpage
\bibliographystyle{JHEP}
\bibliography{Bibliography}{}

\end{document}